\theoremstyle{plain}
\newtheorem{thm}{\protect\theoremname}
\definecolor{mycol}{rgb}{0,0,0.65}
\DeclareMathAlphabet{\mathbfsf}{\encodingdefault}{\sfdefault}{bx}{n}
\newcommand{\upgreektemplate}[2]{#2{
\renewcommand{\alpha}{\upalpha}
\renewcommand{\beta}{\upbeta}
\renewcommand{\theta}{\uptheta}
\renewcommand{\gamma}{\upgamma}
\renewcommand{\lambda}{\uplambda}
\renewcommand{\delta}{\updelta}
\renewcommand{\phi}{\upphi}
\renewcommand{\zeta}{\upzeta}
\renewcommand{\Lambda}{\Uplambda}
\renewcommand{\Gamma}{\Upgamma}
\renewcommand{\Delta}{\Updelta}
\renewcommand{\Theta}{\Uptheta}
#1
}}
\newcommand{\upgreek}[1]{\upgreektemplate{#1}{\mathsf}}
\newcommand{\bupgreek}[1]{\upgreektemplate{#1}{\mathbfsf}}
\let\ref\cref
\Crefname{equation}{Eq.}{Eqs.}
\Crefname{fig1}{Fig.}{Figs.}
\Crefname{lem1}{Lem.}{Lems.}
\Crefname{thm1}{Thm.}{Thms.}
\Crefname{section}{Sec.}{Secs.}
\providecommand{\theoremname}{Theorem}
\begin{document}
\global\long\def\argmin{\operatornamewithlimits{argmin}}%

\global\long\def\argmax{\operatornamewithlimits{argmax}}%

\global\long\def\prox{\operatornamewithlimits{prox}}%

\global\long\def\diag{\operatorname{diag}}%

\global\long\def\lse{\operatorname{lse}}%

\global\long\def\R{\mathbb{R}}%

\global\long\def\E{\operatornamewithlimits{\mathbb{E}}}%

\global\long\def\P{\operatornamewithlimits{\mathbb{P}}}%

\global\long\def\V{\operatornamewithlimits{\mathbb{V}}}%

\global\long\def\N{\mathcal{N}}%

\global\long\def\L{\mathcal{L}}%

\global\long\def\C{\mathbb{C}}%

\global\long\def\tr{\operatorname{tr}}%

\global\long\def\norm#1{\left\Vert #1\right\Vert }%

\global\long\def\norms#1{\left\Vert #1\right\Vert ^{2}}%

\global\long\def\pars#1{\left(#1\right)}%

\global\long\def\pp#1{(#1)}%

\global\long\def\bracs#1{\left[#1\right]}%

\global\long\def\bb#1{[#1]}%

\global\long\def\verts#1{\left\vert #1\right\vert }%

\global\long\def\vv#1{\vert#1\vert}%

\global\long\def\Verts#1{\left\Vert #1\right\Vert }%

\global\long\def\VV#1{\Vert#1\Vert}%

\global\long\def\angs#1{\left\langle #1\right\rangle }%

\global\long\def\KL#1{[#1]}%

\global\long\def\KL#1#2{KL\pars{#1\middle\Vert#2}}%

\global\long\def\div{\text{div}}%

\global\long\def\erf{\text{erf}}%

\global\long\def\vvec{\text{vec}}%

\global\long\def\b#1{\bm{#1}}%

\global\long\def\r#1{\upgreek{#1}}%

\global\long\def\br#1{\bupgreek{\bm{#1}}}%

\global\long\def\T{\b t}%

\global\long\def\ep{\bm{\varepsilon}}%

\global\long\def\rep{\bm{\upvarepsilon}}%

\global\long\def\marker{\checkmark}%

\global\long\def\zo{\b z^{*}}%

\global\long\def\locscale{\mathrm{LocScale}}%

\global\long\def\z{\b z}%

\global\long\def\zr{\r z}%

\global\long\def\u{\b u}%

\global\long\def\ur{\r u}%

\global\long\def\w{\b w}%

\global\long\def\x{\b x}%

\global\long\def\qw{q_{\w}}%

\twocolumn[
\icmltitle{Provable Smoothness Guarantees for Black-Box Variational Inference}

\begin{icmlauthorlist}
\icmlauthor{Justin Domke}{umass}
\end{icmlauthorlist}

\icmlaffiliation{umass}{College of Computing and Information Sciences, University of Massachusetts, Amherst, USA}

\icmlcorrespondingauthor{Justin Domke}{domke@cs.umass.edu} 

\icmlkeywords{variational inference, markov chain monte carlo}

\vskip 0.3in
]

\printAffiliationsAndNotice{}
\begin{abstract}
Black-box variational inference tries to approximate a complex target
distribution through a gradient-based optimization of the parameters
of a simpler distribution. Provable convergence guarantees require
structural properties of the objective. This paper shows that for
location-scale family approximations, if the target is M-Lipschitz
smooth, then so is the ``energy'' part of the variational objective.
The key proof idea is to describe gradients in a certain inner-product
space, thus permitting the use of Bessel's inequality. This result
gives bounds on the location of the optimal parameters, and is a key
ingredient for convergence guarantees.
\end{abstract}

\section{Introduction}

Variational inference (VI) approximates a complex distribution with
a simpler one. Take a target distribution $p(\z,\b x)$ where $\b x$
is observed data and $\z$ are latent variables. Let $\qw\pp{\z}$
be a simpler distribution with parameters $\b w$. VI algorithms minimize
the (negative) ``evidence lower bound''

\begin{equation}
-\mathrm{ELBO}\pp{\w}=\underbrace{\E_{\zr\sim\qw}\bracs{-\log p\pp{\zr,\x}}}_{\text{Energy term}\ l\pp{\w}}+\underbrace{\E_{\zr\sim\qw}\bracs{\log\qw\pp{\zr}}}_{\text{Neg-Entropy term }h\pp{\w}},\label{eq:ELBO-def}
\end{equation}
equivalent to minimizing the KL-divergence from $\qw\pp{\z}$ to $p\pp{\z|\b x}$.

Traditionally, this was done with message-passing algorithms. This
requires that $q$ and $p$ be relatively simple, essentially so that
expectations of parts of $\log p$ can be computed with respect to
$q$ \citep{Ghahramani_2001_PropagationAlgorithmsVariational,Winn_2005_VariationalMessagePassing,Blei_2017_VariationalInferenceReview}.
Recent work \citep[e.g.][]{Salimans_2013_FixedFormVariationalPosterior,Wingate_2013_AutomatedVariationalInference,Ranganath_2014_BlackBoxVariational,Regier_2017_FastBlackboxVariational,Kucukelbir_2017_AutomaticDifferentiationVariational}
has focused on a ``black box'' model where the algorithm can only
evaluate $\log p\pp{\z,\b x}$ or its gradient $\nabla_{\z}\log p\pp{\z,\x}$
at chosen points $\z$. The key idea is that it is still possible
to create an unbiased estimator of the gradient of ELBO, and therefore
to optimize it through stochsatic gradient methods. This strategy
applies to a large range of distributions, and is widely used.

It is important to know when black-box inference will work. While
often empirically successful, black-box VI can and does fail to find
the optimum \citep{Yao_2018_YesDidIt,Regier_2017_FastBlackboxVariationala,Fan_2015_FastSecondOrderStochastic}.
Stochastic optimization convergence guarantees \citep{Bottou_2016_OptimizationMethodsLargeScale}
typically require two types of assumptions:
\begin{itemize}
\item \emph{Gradient variance} must be controlled. The variance of VI gradient
estimators has been studied \citep{Fan_2015_FastSecondOrderStochastic,Xu_2018_variancereductionproperties,Domke_2019_ProvableGradientVariancea},
leading to the result that if $\log p$ is smooth, then the variance
of reparameterization gradient estimators can be controlled. While
an important step, these results alone cannot fully explain convergence
behavior.
\item Structural properties of the \emph{objective itself} are needed. \ref{fig:motivation}
shows an example where \emph{exact} gradients are available. While
a careful step-size and initialization appear to lead to convergence,
other times there are worrying ``jumps''. Is any general guarantee
possible?
\end{itemize}
\begin{figure}[b]
\vspace{-10pt}\includegraphics[width=1\linewidth]{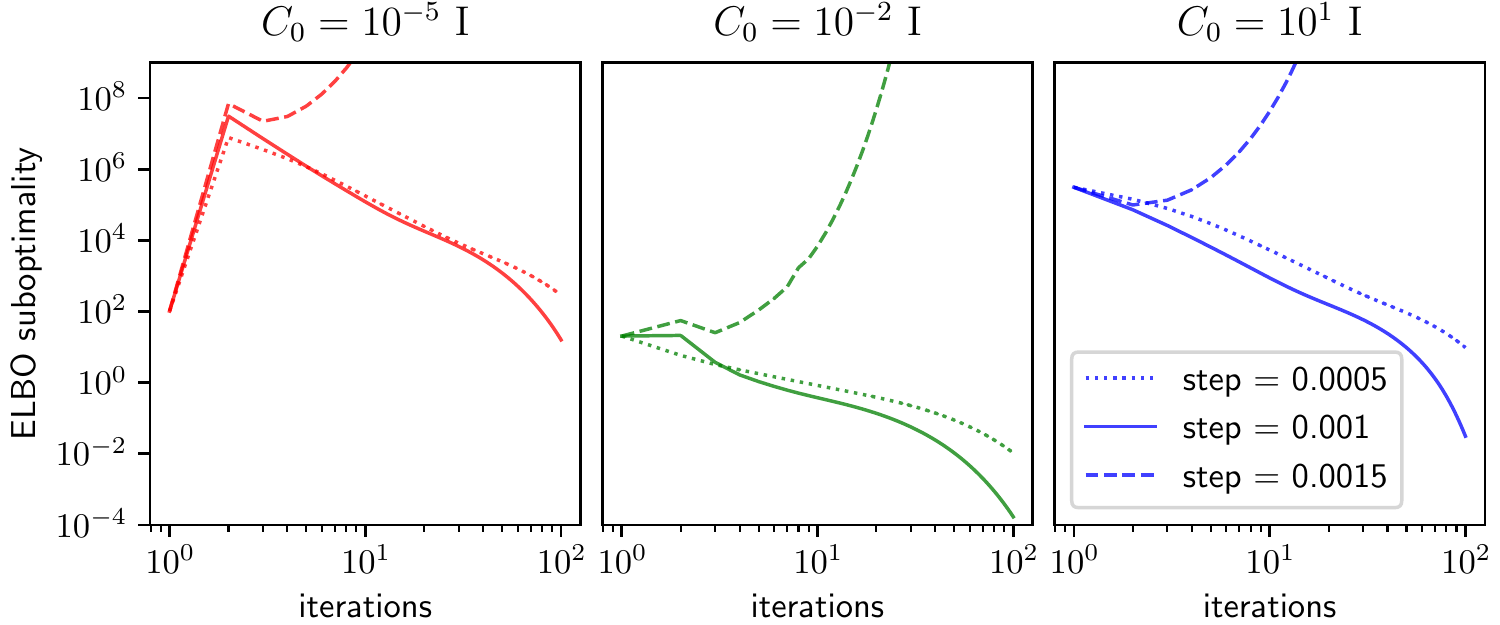}

\caption{Gradient descent on \texttt{fires} with various step-sizes, initialized
with $\protect\b m=0$ and various $C$. Results are sensitive to
both initialization and the stepsize. Can this be explained?\label{fig:motivation}}
\end{figure}
The ultimate goal of the line of research in this paper is to obtain
full convergence guarantees for practical black-box variational inference
algorithms. The results in this paper are a step towards that goal.

One very fundamental property is Lipschitz \emph{smoothness} which
means the gradient cannot change too quickly. Formally, a function
$f$ is $M$-smooth in the $l_{2}$ norm if $\VV{\nabla f\pp{\x}-\nabla f\pp{\b y}}_{2}\leq M\VV{\x-\b y}_{2}.$
For non-convex objectives, essentially all convergence guarantees
require smoothness, both for regular stochastic gradient descent (SGD)
\citep{Lee_2016_GradientDescentOnly,Ge_2015_EscapingSaddlePoints,Ghadimi_2013_StochasticFirstZerothOrder},
proximal SGD \citep{Ghadimi_2016_Minibatchstochasticapproximation},
or momentum or ``accelerated'' SGD \citep{Yang_2016_UnifiedConvergenceAnalysis}.
Convergence guarantees are possible for convex objectives with or
without smoothness\citep{Bottou_2016_OptimizationMethodsLargeScale,Rakhlin_2012_Makinggradientdescent}.

Because this property is so fundamental, several works on variational
inference have \emph{assumed} that the VI objective (or part of it)
is smooth. These include:
\begin{itemize}
\item \citet{Khan_2015_KullbackLeiblerproximalvariational} Sec. 4
\item \citet{Khan_2016_FasterStochasticVariational} Assumption A1
\item \citet{Regier_2017_FastBlackboxVariational} Condition 1
\item \citet{Fan_2016_TriplyStochasticVariational} Thm. 1
\item \citet{Buchholz_2018_QuasiMonteCarloVariational} Thm. 1
\item \citet{Mohamad_2018_AsynchronousStochasticVariational} Sec. 3.2
\item \citet{Alquier_2017_Concentrationtemperedposteriors} Assumption 3.2.
\end{itemize}
Yet, to the best of our knowledge, no rigorous guarantees that this
is actually true are known. The purpose of this paper is to fill that
theoretical gap by providing conditions under which smoothness provably
holds.

\subsection{Contributions}
\begin{description}
\item [{Smoothness~of~the~energy:}] \eqref{thm:lsmoothness} Our main
result is more general than variational inference: If $f\pp{\z}$
is $M$-smooth then $\E_{\zr\sim\qw}f(\zr)$ is $M$-smooth over $\w$,
when $\qw$ is in the location-scale family with a ``standardized''
base distribution. In particular, if $\log p(\z,\x)$ is $M$\emph{-}smooth
over $\z$ then the energy $l(\w)$ in \ref{eq:ELBO-def} is $M$-smooth.
This\emph{ }requires\emph{ }no convexity assumptions\emph{.}
\item [{Solution~guarantees:}] \eqref{thm:smoothness-sol} Intuitively,
structural properties of $\log p$ should imply properties of the
optimal parameters $\w$. Using the above smoothness result, we show
that at the optimal $\w$, all eigenvalues of the covariance of $q$
are at least $1/M$. This is important because the neg-entropy in
\ref{eq:ELBO-def} is non-smooth when eigenvalues are very small.
\item [{Convergence~considerations:}] Even if $\log p$ is smooth, the
ELBO is \emph{not} smooth, due to the entropy. We propose two solutions:
a \emph{projected} gradient descent scheme that leverages the above
solution guarantee and a \emph{proximal} scheme that uses the full
structure of the entropy.
\item [{Understanding~plain~gradient~descent:}] Given that the full
ELBO is non-smooth, why does \emph{plain} gradient descent sometimes
succeed, and sometimes -- even with exact gradients -- produce huge
``jumps'' as seen in \ref{fig:motivation}? We give insight into
this using the smoothness result.
\end{description}
As a minor contribution, we extend existing work \citep{Challis_2013_GaussianKullbackLeiblerApproximate,Titsias_2014_DoublyStochasticVariational}
to show that if $-\log p\pp{\z,\x}$ happens to be strongly-convex
over $\z$ then so is the energy term $l\pp{\w}$ \eqref{thm:convexity}.
This gives another parameter-space solution guarantee where essentially
the covariance of $q$ cannot be too large \eqref{thm:strongsol}.

\section{Preliminaries\label{sec:preliminaries}}

A multivariate \textbf{location-scale family} \citep{Geyer_2011_Statistics5101Lecture}
is the result of drawing a sample from a ``base'' distribution and
applying an affine transformation to it. Formally,

\begin{equation}
\zr\sim\locscale(\b m,C,s)\Longleftrightarrow\zr\overset{d}{=}C\ur+\b m,\ \ur\sim s,
\end{equation}
where $\overset{d}{=}$ indicates equality in distribution.

Black-box VI using these families was studied by \citet{Titsias_2014_DoublyStochasticVariational}.
A simple example is the multivariate Gaussian, for which $\locscale\pp{\b m,C,\N(0,I)}=\N(\b m,CC^{\top}).$
Many families are representable, e.g. elliptical distributions such
as the multivariate Student-T or Cauchy distributions. More generally,
the base distribution need not be symmetric.

\textbf{Notation.} Let $\b w=\pp{\b m,C}$ be a vector containing
all components of $\b m$ and $C$. We write $\qw$ to denote $\locscale\pp{\b m,C,s}$,
leaving $s$ implicit. Proofs use $\T_{\b w}\pp{\u}=C\u+\b m$ to
denote the affine mapping determined by parameters $\b w$. $A\preceq B$
means that $B-A$ is positive semidefinite. We assume $\z\in\R^{d}$.
Sans-serif font ($\ur,\zr$) distinguishes random variables.

\textbf{Density.} If the base distribution has a density and $C$
is invertible, then the location-scale distribution also has a density,
which is $\qw\pp{\z}=\locscale\pp{\z\vert\b m,C,s}=\frac{1}{\verts C}s\pars{C^{-1}\pars{\z-\b m}}.$

\textbf{Entropy.} The entropy of random variables under affine transformations
is $\mathrm{Entropy}\bb{A\ur+\b b}=\mathrm{Entropy}\bb{\ur}+\log\verts{\det A}$
\citep[Sec. 8.6]{Cover_2006_Elementsinformationtheory}. Thus, the
neg-entropy is  $h\pp{\w}=-\mathrm{Entropy}\bb s-\log\verts{\det C},$
with gradient $\nabla h\pp{\w}=\pars{0,-C^{-\top}}$.

\textbf{Standardized Representations.} We say that $s$ is ``standardized''
if it has mean zero and unit variance, i.e. $\E_{\ur\sim s}\ur=0$
and $\V_{\ur\sim s}\ur=I$. When $s$ is standardized, the mean of
the location-scale distribution is $\b m$ while the variance is $CC^{\top}.$
A result we will use in \ref{sec:Solution-Guarantees} is that any
location-scale family can be represented using a standardized base
distribution, provided the variance exists: If $s$ has mean $\mu$
and variance $\Sigma$, then $s'=\locscale\pp{-\Sigma^{-1/2}\mu,\Sigma^{-1/2},s}$,
is standardized, yet $\locscale\pp{\b m,C,s'}$ and $\locscale\pp{\b m,C,s}$
index the same set of distributions.

\textbf{Bessel's inequality} states that if $\left\{ \b a_{1},\cdots,\b a_{k}\right\} $
are orthonormal in some inner-product $\angs{\cdot,\cdot}$ with corresponding
norm $\Verts{\cdot},$ then
\begin{equation}
\sum_{i=1}^{k}\verts{\angs{\b a_{i},\b x}}^{2}\leq\Verts{\b x}^{2}.
\end{equation}
This can be seen as a generalization of the Cauchy-Schwarz inequality
$\verts{\angs{\b y,\b x}}^{2}\leq\Verts{\b y}^{2}\Verts{\b x}^{2}$,
which follows from using the singleton set $\left\{ \b a_{1}\right\} =\left\{ \b y/\Verts{\b y}\right\} $
\citep{Rooin_2012_EquivalencyCauchySchwarzBessel,Hasegawa__GeneralizationsCauchySchwarzProbability}.

\subsection{Convergence Guarantees and Smoothness}

It is impossible to review the vast optimization literature relevant
to solving \ref{eq:ELBO-def}: There are many algorithms (gradient
descent, stochastic gradient descent, momentum or accelerated variants,
proximal or mirror descent variants) that can be analyzed with different
hyper-parameters (step-sizes, iterate averaging) yielding different
types of guarantees. These guarantees depend on properties of the
target objective being optimized. Different distributions $p\pp{\x,\z}$
different properties for the objective (smoothness, convexity, strong
convexity) or gradient estimators of it (variance bounds).

Still, at a very high level, the story is simple. To the best of our
knowledge, all existing convergence guarantees require \emph{either}
smoothness (to guarantee a stationary point) \emph{or} convexity (to
guarantee a global optima), or both. For concreteness, suppose $f$
is \emph{only} known to be smooth (and possibly non-convex): \citet{Ghadimi_2013_StochasticFirstZerothOrder}
analyze the the iteration $\r w_{n+1}=\r w_{n}-\gamma\r g_{n}$ where
$\r g_{1},\cdots\r g_{N}$ are independent and $\E\r g_{n}=\nabla f\pp{\r w_{n}}$.
A simplified statement of their result is as follows: If (i) $f$
is $M$-smooth, (ii) $\mathbb{E}\VV{\r g_{n}-\nabla f\left(\r w_{n}\right)}_{2}^{2}\leq\sigma^{2},$
and (iii) the starting point $\r w_{1}$ obeys $\VV{\r w_{1}-\w^{*}}_{2}\leq D$,
then with a step-size of $\gamma=\min\pp{1/M,D/\pp{\sigma\sqrt{N}}}$,
\begin{equation}
\frac{1}{N}\sum_{n=1}^{N}\E\VV{\nabla f\pp{\r w_{n}}}^{2}\leq\frac{M^{2}D^{2}}{N}+\frac{2DM\sigma}{\sqrt{N}},
\end{equation}
where the expectation is over all possible executation traces $\r w_{1},\cdots,\r w_{N}.$

Both the step-size and convergence rate depend on smoothness. If there
is noise ($\sigma>0$) the convergence is $1/\sqrt{N}.$ With no noise
($\sigma=0$) convergence is $1/N$. Similar rates are known for proximal
or projected stochastic gradient descent \citep{Ghadimi_2016_Minibatchstochasticapproximation}
and stochastic gradient descent with momentum or Nesterov acceleration
\citep{Yang_2016_UnifiedConvergenceAnalysis}.  Recent work seeks
to understand when these iterations will converge to a (local) minima
instead of a saddle point \citep{Ge_2015_EscapingSaddlePoints,Lee_2016_GradientDescentOnly};
here too, smoothness is a key assumption.

Similar guarantees are possible if the objective is convex or strongly
convex, without requiring smoothness \citep[Section 6.2]{Rakhlin_2012_Makinggradientdescent,Bottou_2016_OptimizationMethodsLargeScale,Bubeck_2015_ConvexOptimizationAlgorithms}.
When gradients are stochastic, it may be helpful to average ``minibatches''
of gradient estimates, both in the convex \citep[Section 6.2]{Bubeck_2015_ConvexOptimizationAlgorithms}
and non-convex cases \citep{Ghadimi_2016_Minibatchstochasticapproximation}.

\section{Smoothness of the energy}

In this section, we set $l\pp{\w}=\E_{\zr\sim\qw}f\pp{\zr}$. The
energy $l\pp{\w}$ in \ref{eq:ELBO-def} is recovered when $f\pp{\z}=-\log p\pp{\z,\x}$
(since $\x$ is constant). This is done to simplify the notation and
because the results apply to general $f$ and might be of independent
interest.

\subsection{Main Result}

The following is the main technical result of this paper. It states
that if $\qw$ is a location-scale family with a zero-mean, unit variance
base distribution and $f(\b z)$ is $M$-smooth, then $l(\b w)$ is
also $M$-smooth. We emphasize that $f$ is \emph{not} assumed to
be convex.

\begin{restatable}{thm1}{lsmoothness}Let $\qw=\locscale(\b m,C,s)$
with parameters $\w=\pp{\b m,C}$ and a standardized base distribution
$s$. If $f(\b z)$ is $M$-smooth, then $l(\b w)=\E_{\zr\sim\qw}f\pp{\zr}$
is also $M$-smooth.\label{thm:lsmoothness}

\end{restatable}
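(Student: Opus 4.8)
The plan is to compute the gradient $\nabla l(\w)$ explicitly via the reparameterization trick, and then bound the operator norm of the difference $\nabla l(\w) - \nabla l(\w')$ in terms of $\|\w - \w'\|$. Write $\z = \T_\w(\ur) = C\ur + \b m$ with $\ur \sim s$. Then $l(\w) = \E_{\ur \sim s} f(C\ur + \b m)$, and differentiating under the expectation gives $\nabla_{\b m} l(\w) = \E_{\ur} \nabla f(C\ur + \b m)$ and $\nabla_C l(\w) = \E_{\ur} \nabla f(C\ur + \b m)\, \ur^\top$. So the full gradient $\nabla l(\w)$ is an expectation of the "outer" object $\nabla f(\z) \otimes (1, \ur)$, i.e. of $\nabla f(\T_\w(\ur))$ paired against the augmented vector $(1,\ur)$.

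The key idea — flagged in the abstract — is to view things in an inner-product space so Bessel's inequality applies. I would fix $\w$ and $\w'$, set $g(\ur) = \nabla f(\T_\w(\ur)) - \nabla f(\T_{\w'}(\ur))$, and try to show $\|\nabla l(\w) - \nabla l(\w')\|^2 \le M^2 \|\w - \w'\|^2$. Expanding, $\nabla l(\w) - \nabla l(\w') = \E_{\ur}\, g(\ur)\,(1,\ur)^\top$ (as a block vector in $\b m$- and $C$-coordinates). To bound the squared norm of this expectation, pair it against an arbitrary unit direction $\b d = (\b d_m, D_C)$: $\langle \nabla l(\w) - \nabla l(\w'), \b d\rangle = \E_{\ur} \langle g(\ur), \b d_m + D_C \ur\rangle$. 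Now I want to choose an inner product on the space of functions of $\ur$ — namely $\langle \varphi, \psi \rangle_s = \E_{\ur\sim s}\langle \varphi(\ur), \psi(\ur)\rangle$ — under which the coordinate functions $\ur \mapsto \ur$ and the constant function are orthonormal. This is exactly where the standardized hypothesis ($\E\ur = 0$, $\V\ur = I$) is used: it makes $\{1, \ur_1, \dots, \ur_d\}$ an orthonormal family in this inner product (constant has norm $1$; $\E \ur_i = 0$ gives orthogonality to the constant; $\E \ur_i \ur_j = \delta_{ij}$ gives orthonormality among coordinates). Then Cauchy–Schwarz in $\langle\cdot,\cdot\rangle_s$ bounds the pairing by $\|g\|_s$ times the $\langle\cdot,\cdot\rangle_s$-norm of $\b d_m + D_C\ur$, and by Bessel's inequality that latter norm is at most $\|\b d\|$, the Euclidean norm of the parameter direction. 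Taking the sup over unit $\b d$ and using $M$-smoothness of $f$ to bound $\|g(\ur)\| = \|\nabla f(\T_\w(\ur)) - \nabla f(\T_{\w'}(\ur))\| \le M\|\T_\w(\ur) - \T_{\w'}(\ur)\|$, and then noting $\|\T_\w(\ur) - \T_{\w'}(\ur)\| = \|(C - C')\ur + (\b m - \b m')\|$ is again of the form "$\b d_m + D_C \ur$" with $\b d = \w - \w'$, a second application of Bessel's inequality yields $\E_\ur \|g(\ur)\|^2 \le M^2 \|\w - \w'\|^2$. Combining gives $\|\nabla l(\w) - \nabla l(\w')\| \le M\|\w-\w'\|$.

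The two places that need care — and where I expect the main obstacle — are: first, justifying differentiation under the expectation (some mild regularity/integrability on $f$ and $s$, which $M$-smoothness plus finiteness of the relevant moments should supply, since $M$-smoothness gives at most quadratic growth of $f$ and linear growth of $\nabla f$, and $s$ having unit variance controls the needed moments); and second, getting the bookkeeping right so that the two vector spaces in play — the Euclidean parameter space carrying $\w$, and the function space carrying $\ur \mapsto \b d_m + D_C\ur$ — are matched up correctly, in particular checking that the Frobenius inner product on the $C$-block corresponds exactly to $\sum_{i} \langle \text{column}_i, \text{column}_i\rangle$ so that the orthonormal family $\{1,\ur_1,\dots,\ur_d\}$ does the job coordinate-wise. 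Once those two points are pinned down, the argument is essentially "Cauchy–Schwarz in the function inner product, then Bessel's inequality, twice." I would structure the proof as: (1) reparameterized gradient formula; (2) definition of $\langle\cdot,\cdot\rangle_s$ and verification that $\{1,\ur_1,\dots,\ur_d\}$ is orthonormal; (3) the chain of inequalities above; (4) a remark on the regularity assumptions making step (1) rigorous.
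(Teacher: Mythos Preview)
Your proof is correct and is essentially the paper's argument: the paper packages your Cauchy--Schwarz-plus-sup-over-unit-directions step as a single application of Bessel's inequality to the orthonormal family $\{\b a_i(\u)=\partial_{w_i}\T_\w(\u)\}$ in the same inner product $\langle\cdot,\cdot\rangle_s$, and your two ``Bessel'' steps are in fact the \emph{equality} $\E_{\ur}\|\Delta C\,\ur+\Delta\b m\|_2^2=\|\Delta C\|_F^2+\|\Delta\b m\|_2^2$, which the paper isolates as a separate lemma proved by direct computation. The structure, inequalities used, and resulting constant $M$ are identical.
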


Before proving this, we give four technical lemmas, all proven in
\ref{sec:Technical-Lemmas} (in the supplement). The idea is to define
a certain inner-product $\langle\cdot,\cdot\rangle_{s}$ over functions
and a set of orthonormal functions $\left\{ \b a_{i}\right\} $ such
that derivatives $l\pp{\w}$ can be written as an inner-product of
$\b a_{i}$ and $\nabla f\circ\T_{w}$ in $\langle\cdot,\cdot\rangle_{s},$
where $\circ$ indicates composition of functions.

\begin{restatable}{lem1}{validinnerproduct}

$\angs{\b a,\b b}_{s}=\E_{\ur\sim s}\b a(\ur)^{\top}\b b(\ur)$ is
a valid inner-product on squared-integrable $\b a:\R^{d}\rightarrow\R^{k}$.\label{lem:valid-inner-product}

\end{restatable}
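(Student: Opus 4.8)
The plan is simply to verify the three defining axioms of a real inner product — symmetry, bilinearity, and positive-definiteness — after first checking that $\angs{\b a,\b b}_s$ is well-defined (finite) for square-integrable $\b a,\b b$. Throughout, ``square-integrable functions'' should be read as the Hilbert space $L^2(s)$: functions $\b a:\R^d\to\R^k$ with $\E_{\ur\sim s}\norms{\b a(\ur)}<\infty$, two of them identified when they agree $s$-almost everywhere (and forming a vector space by Minkowski's inequality). This is the standard convention, and it is exactly what makes the definiteness half of the last axiom go through.

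First I would establish finiteness: for each fixed $\b u$, Cauchy--Schwarz in $\R^k$ gives $\verts{\b a(\b u)^{\top}\b b(\b u)}\le\norm{\b a(\b u)}\norm{\b b(\b u)}$, and then Cauchy--Schwarz for expectations gives $\E_{\ur\sim s}\norm{\b a(\ur)}\norm{\b b(\ur)}\le\sqrt{\E_{\ur\sim s}\norms{\b a(\ur)}}\,\sqrt{\E_{\ur\sim s}\norms{\b b(\ur)}}<\infty$, so the expectation defining $\angs{\b a,\b b}_s$ converges absolutely. Symmetry is immediate from $\b a(\b u)^{\top}\b b(\b u)=\b b(\b u)^{\top}\b a(\b u)$. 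Bilinearity follows from linearity of the Euclidean dot product in each argument together with linearity of expectation, e.g.\ $\angs{\alpha\b a+\beta\b c,\b b}_s=\E_{\ur\sim s}\pars{\alpha\b a(\ur)+\beta\b c(\ur)}^{\top}\b b(\ur)=\alpha\angs{\b a,\b b}_s+\beta\angs{\b c,\b b}_s$, and symmetrically in the second slot. Non-negativity is clear since $\angs{\b a,\b a}_s=\E_{\ur\sim s}\norms{\b a(\ur)}$ integrates a pointwise-nonnegative quantity.

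The one step that needs a moment's thought — and the reason for phrasing everything in $L^2(s)$ — is definiteness: if $\angs{\b a,\b a}_s=\E_{\ur\sim s}\norms{\b a(\ur)}=0$, then the nonnegative random variable $\norms{\b a(\ur)}$ has zero mean, hence equals $0$ with $s$-probability one, so $\b a$ is the zero element of $L^2(s)$. I expect this to be the only ``obstacle,'' and it is a mild one: the literal assertion $\b a\equiv 0$ everywhere is false, but it becomes true modulo $s$-null sets, and in every use of this lemma the relevant functions $\b a_i$ are continuous, so the distinction is immaterial there.
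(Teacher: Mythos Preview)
Your proposal is correct and follows essentially the same approach as the paper: verify finiteness via Cauchy--Schwarz and then check the inner-product axioms one by one. If anything, you are more careful than the paper, which simply lists the five properties without spelling out the $L^2(s)$ quotient needed for definiteness; your explicit treatment of the almost-everywhere identification is a welcome addition rather than a deviation.
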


The proof consists of verifying each of the defining properties of
an inner-product.

\begin{restatable}{lem1}{gradasinnerproduct}

Let $\b a_{i}\pp{\u}=\frac{d}{dw_{i}}\b t_{\b w}\pp{\u}$. This is
independent of $\w$ and $\frac{dl\pp{\w}}{dw_{i}}=\angs{\b a_{i},\nabla f\circ\T_{\b w}}_{s}$.
\label{lem:grad-as-inner-product}

\end{restatable}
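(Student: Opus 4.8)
The plan is to differentiate $l$ directly after rewriting it with the law of the unconscious statistician. Since $\zr\sim\qw$ means by definition $\zr\overset{d}{=}\T_{\w}(\ur)=C\ur+\b m$ with $\ur\sim s$, we have $l(\w)=\E_{\ur\sim s}\,f\pp{\T_{\w}(\ur)}$; note this identity needs no density and hence no invertibility of $C$. The first task is to identify $\b a_{i}$ explicitly. Writing $\w=(\b m,C)$: if $w_{i}$ is the location coordinate $m_{j}$, then $\frac{d}{dw_{i}}\T_{\w}(\u)=e_{j}$; if $w_{i}$ is the scale coordinate $C_{jk}$, then $\frac{d}{dw_{i}}\T_{\w}(\u)=u_{k}e_{j}$. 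In both cases this does not depend on $\w$ (only, in the second case, on $\u$), which establishes the first claim. I also record the crude bound $\norm{\b a_{i}(\u)}\le 1+\norm{\u}$, which will be used for domination below.

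The second step is the pointwise chain rule: for each fixed $\u$, the map $w\mapsto f\pp{\T_{\w}(\u)}$ is differentiable (as $f$ is $C^1$, being $M$-smooth) with $\frac{d}{dw_{i}}f\pp{\T_{\w}(\u)}=\nabla f\pp{\T_{\w}(\u)}^{\top}\frac{d}{dw_{i}}\T_{\w}(\u)=\b a_{i}(\u)^{\top}\pp{\nabla f\circ\T_{\w}}(\u)$.

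The third step — which I expect to be the only genuine technical point — is to justify exchanging $\frac{d}{dw_{i}}$ with $\E_{\ur\sim s}$. I will invoke the standard differentiation-under-the-integral theorem, which requires that the $w_{i}$-partial derivative of the integrand be dominated, locally uniformly in $\w$, by a single $s$-integrable function. Here $M$-smoothness of $f$ gives the linear growth bound $\norm{\nabla f(\z)}\le\norm{\nabla f(\b 0)}+M\norm{\z}$, so for $\w$ restricted to any bounded neighborhood (on which $\norm{\b m}$ and the operator norm of $C$ are bounded) the derivative of the integrand is bounded in norm by $(1+\norm{\u})(c_{1}+c_{2}\norm{\u})\le c_{3}(1+\norm{\u}^{2})$ for constants depending only on that neighborhood. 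Because $s$ is standardized, $\E_{\ur\sim s}\norm{\ur}^{2}=\tr(I)=d<\infty$, so this dominating function is integrable. Hence the interchange is valid, giving $\frac{dl(\w)}{dw_{i}}=\E_{\ur\sim s}\bracs{\b a_{i}(\ur)^{\top}\pp{\nabla f\circ\T_{\w}}(\ur)}=\angs{\b a_{i},\nabla f\circ\T_{\w}}_{s}$, which is the claim. The same linear-growth-plus-finite-second-moment estimate also shows $\b a_{i}$ and $\nabla f\circ\T_{\w}$ are square-integrable, so the inner product of \ref{lem:valid-inner-product} is well-defined; this is worth flagging but needs no separate argument.
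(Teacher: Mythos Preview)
Your proposal is correct and follows the same approach as the paper: rewrite $l(\w)$ via the reparameterization $\zr=\T_{\w}(\ur)$, note the affine map has $\w$-independent partials, apply the chain rule pointwise, and recognize the result as $\angs{\b a_{i},\nabla f\circ\T_{\w}}_{s}$. The paper's proof is terser and simply interchanges $\frac{d}{dw_i}$ with $\E_{\ur\sim s}$ without comment; your added domination argument (via $M$-smoothness and the finite second moment of a standardized $s$) fills a gap the paper leaves implicit, but is not a different route.
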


This is proven by first verifying that both $\frac{d}{dC_{ij}}\T_{w}\pp{\u}$
and $\frac{d}{dm_{i}}\T_{w}\pp{\u}$ are independent of $\w$, then
calculating $\frac{dl}{dw_{i}}$ and performing some manipulations.

\begin{restatable}{lem1}{orthonormality}

If $s$ is standardized, then the functions $\left\{ \b a_{i}\right\} $
are orthonormal in $\angs{\cdot,\cdot}_{s}.$\label{lem:orthonormality}

\end{restatable}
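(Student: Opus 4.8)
The goal is to prove \cref{lem:orthonormality}: the functions $\b a_i(\u) = \frac{d}{dw_i}\T_{\b w}(\u)$ are orthonormal in $\angs{\cdot,\cdot}_s$ whenever $s$ is standardized. Since $\w = (\b m, C)$, the index $i$ runs over two kinds of coordinates: the entries $m_k$ of the location vector, and the entries $C_{kl}$ of the scale matrix. So the plan is to first compute each $\b a_i$ explicitly, then evaluate the three types of inner products --- location/location, location/scale, scale/scale --- and check each equals the appropriate Kronecker delta.

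**Computing the $\b a_i$.** Writing $\T_{\b w}(\u) = C\u + \b m$ componentwise, $[\T_{\b w}(\u)]_r = \sum_j C_{rj} u_j + m_r$. Differentiating: $\frac{d}{dm_k}\T_{\b w}(\u)$ has $r$-th component $\delta_{rk}$, i.e. it is the constant vector $\b e_k$. And $\frac{d}{dC_{kl}}\T_{\b w}(\u)$ has $r$-th component $\delta_{rk}u_l$, i.e. it is the vector $u_l \b e_k$. (This also re-confirms the $\w$-independence asserted in \cref{lem:grad-as-inner-product}.)

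**Evaluating the inner products.** Now I plug these into $\angs{\b a, \b b}_s = \E_{\ur \sim s}\b a(\ur)^\top \b b(\ur)$ and use that $s$ is standardized, i.e. $\E_{\ur\sim s}\ur = 0$ and $\E_{\ur\sim s}\ur\ur^\top = I$ (equivalently $\E \ur_j = 0$ and $\E \ur_j \ur_l = \delta_{jl}$). There are three cases. (i) Location--location: $\angs{\b e_k, \b e_{k'}}_s = \E\, \b e_k^\top \b e_{k'} = \delta_{kk'}$. (ii) Location--scale: $\angs{\b e_k, \ur_l \b e_{k'}}_s = \E[\ur_l]\, \b e_k^\top \b e_{k'} = 0$ since $\E\ur_l = 0$. (iii) Scale--scale: $\angs{\ur_l \b e_k, \ur_{l'} \b e_{k'}}_s = \E[\ur_l \ur_{l'}]\, \b e_k^\top \b e_{k'} = \delta_{ll'}\delta_{kk'}$. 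In every case the result is $1$ when the two multi-indices coincide and $0$ otherwise, which is exactly orthonormality. One should also note in passing that squared-integrability of the $\b a_i$ (needed so that \cref{lem:valid-inner-product} applies) follows from the existence of the second moments of $s$, which is precisely the standardization hypothesis.

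**Anticipated obstacle.** There is essentially no analytic difficulty here --- the computation is a short exercise once the $\b a_i$ are identified. The only thing to be careful about is bookkeeping: making the indexing of the combined parameter vector $\w = (\b m, C)$ precise enough that "the $\{\b a_i\}$ are orthonormal" is an unambiguous statement, and ensuring the cross terms between location and scale coordinates are handled (these are the ones that genuinely use $\E\ur = 0$, not just $\V\ur = I$). So the main "obstacle" is purely expositional: presenting the case split cleanly rather than burying it in multi-index notation.
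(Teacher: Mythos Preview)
Your proposal is correct and follows essentially the same approach as the paper: compute the derivatives $\frac{d}{dm_k}\T_{\b w}(\u)=\b e_k$ and $\frac{d}{dC_{kl}}\T_{\b w}(\u)=u_l\b e_k$, then verify the three cases of inner products using $\E\ur=0$ and $\E\ur\ur^\top=I$. Your additional remark about square-integrability following from the second-moment assumption is a nice touch not made explicit in the paper.
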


To prove this, note that the components $\b a_{i}$ have two ``types''
namely $\frac{d}{dC_{ij}}\T_{w}\pp{\u}$ and $\frac{d}{dm_{i}}\T_{w}\pp{\u}$.
Thus, an inner-product $\angs{\b a_{i},\b a_{j}}_{s}$ reduces to
the expected inner product of two such terms. It can be shown that
this inner-product is one when $\b a_{i}$ and $\b a_{j}$ have the
same type and indices, and zero otherwise.

\begin{restatable}{lem1}{expecteddifferenceoftransforms}

If $s$ is standardized, then $\E_{\ur\sim s}\Verts{\T_{\b w}\pp{\ur}-\T_{\b v}\pp{\ur}}_{2}^{2}=\Verts{\b w-\b v}_{2}^{2}.$\label{lem:expected-difference-of-transforms}

\end{restatable}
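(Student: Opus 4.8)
The plan is to unwind the definition of the affine map $\T_{\b w}$ and exploit precisely the two standardization moments $\E_{\ur\sim s}\ur=0$ and $\E_{\ur\sim s}\ur\ur^{\top}=I$. Write $\b w=\pp{\b m,C}$ and $\b v=\pp{\b n,D}$. Then $\T_{\b w}\pp{\ur}-\T_{\b v}\pp{\ur}=\pp{C-D}\ur+\pp{\b m-\b n}$, and expanding the squared Euclidean norm produces a quadratic term $\Verts{\pp{C-D}\ur}_{2}^{2}$, a cross term $2\pp{\b m-\b n}^{\top}\pp{C-D}\ur$, and a constant term $\Verts{\b m-\b n}_{2}^{2}$. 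Taking $\E_{\ur\sim s}$, the cross term vanishes since $\E_{\ur\sim s}\ur=0$; the quadratic term equals $\E_{\ur\sim s}\tr\pars{\pp{C-D}\ur\ur^{\top}\pp{C-D}^{\top}}=\tr\pars{\pp{C-D}\pp{C-D}^{\top}}$ because $\E_{\ur\sim s}\ur\ur^{\top}=I$, i.e. the squared Frobenius norm of $C-D$. Hence $\E_{\ur\sim s}\Verts{\T_{\b w}\pp{\ur}-\T_{\b v}\pp{\ur}}_{2}^{2}=\Verts{C-D}_{F}^{2}+\Verts{\b m-\b n}_{2}^{2}$, and this equals $\Verts{\b w-\b v}_{2}^{2}$ because, by definition, $\b w$ stacks the entries of $\b m$ and $C$ into a single vector, so its squared Euclidean norm is the squared Euclidean norm of the $\b m$-block plus the squared Frobenius norm of the $C$-block.

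A shorter route, which I would mention as a remark, reuses the earlier lemmas: since $\T_{\b w}\pp{\u}$ is linear in $\b w$ (and vanishes at $\b w=0$), we have $\T_{\b w}\pp{\u}=\sum_{k}w_{k}\b a_{k}\pp{\u}$ with $\b a_{k}\pp{\u}=\tfrac{d}{dw_{k}}\T_{\b w}\pp{\u}$ as in \ref{lem:grad-as-inner-product}, so $\T_{\b w}\pp{\ur}-\T_{\b v}\pp{\ur}=\sum_{k}\pp{w_{k}-v_{k}}\b a_{k}\pp{\ur}$. Then $\E_{\ur\sim s}\Verts{\T_{\b w}\pp{\ur}-\T_{\b v}\pp{\ur}}_{2}^{2}=\angs{\textstyle\sum_{k}\pp{w_{k}-v_{k}}\b a_{k},\ \textstyle\sum_{l}\pp{w_{l}-v_{l}}\b a_{l}}_{s}$, and the orthonormality of $\left\{\b a_{k}\right\}$ from \ref{lem:orthonormality} collapses the double sum to $\sum_{k}\pp{w_{k}-v_{k}}^{2}=\Verts{\b w-\b v}_{2}^{2}$. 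I would probably present the direct computation, since it is self-contained and only a few lines, and note this connection to orthonormality afterward.

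I do not expect a genuine obstacle here: the only things to watch are bookkeeping items — making sure the flattening convention for $\b w$ matches the mixed $l_{2}$/Frobenius norm that the computation produces, and making explicit that \emph{standardization} (not merely that $s$ has a density) is what kills the cross term and turns the quadratic term into a Frobenius norm. The real payoff is downstream: in the proof of \ref{thm:lsmoothness}, this identity will be combined with \ref{lem:grad-as-inner-product} and Bessel's inequality to convert $M$-smoothness of $f$ along the transported points $\T_{\b w}\pp{\ur}$ into the bound $\Verts{\nabla l\pp{\b w}-\nabla l\pp{\b v}}_{2}\le M\Verts{\b w-\b v}_{2}$.
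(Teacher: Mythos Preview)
Your primary argument is correct and essentially identical to the paper's proof: you expand $\Verts{\Delta C\,\ur+\Delta\b m}_{2}^{2}$ into three terms, kill the cross term with $\E\ur=0$, and reduce the quadratic term to $\Verts{\Delta C}_{F}^{2}$ via the trace trick and $\E\ur\ur^{\top}=I$. Your alternative route through the orthonormality of $\{\b a_{k}\}$ is a nice observation not made in the paper, but the main line of argument matches exactly.
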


This is shown by substituting the exact form of $\T_{\b w}$ and $\T_{\b v}$.
Taking the expectation leads a result of $\Verts{\Delta C}_{F}^{2}+\Verts{\Delta\b m}_{2}^{2},$
where $\Delta\b m$ denotes the difference of the $\b m$ components
of $\b w$ and $\b v$, and similarly for $\Delta C$. This is equivalent
to the squared Euclidean distance of $\b w$ and $\b v$.
\begin{proof}[Proof of \ref{thm:lsmoothness}]
Take two parameter vectors, $\w$ and $\b v$. Apply \ref{lem:grad-as-inner-product}
to each component of the gradients $\nabla l\pp{\w}$ and $\nabla l\pp{\b v}$
to get that

\begin{alignat*}{1}
 & \Verts{\nabla l(\b w)-\nabla l(\b v)}_{2}^{2}\\
 & =\sum_{i}\pars{\angs{\b a_{i},\nabla f\circ\T_{\b w}}_{s}-\angs{\b a_{i},\nabla f\circ\T_{\b v}}_{s}}^{2}\\
 & =\sum_{i}\angs{\b a_{i},\nabla f\circ\T_{\b w}-\nabla f\circ\T_{\b v}}_{s}^{2}.
\end{alignat*}
\ref{lem:orthonormality} showed that the functions $\left\{ \b a_{i}\right\} $
are orthonormal in the inner-product $\angs{\cdot,\cdot}_{s}$. Thus,
by Bessel's inequality,
\begin{align}
\Verts{\nabla l(\b w)-\nabla l(\b v)}_{2}^{2} & \leq\Verts{\nabla f\circ\T_{\b w}-\nabla f\circ\T_{\b v}}_{s}^{2},\label{eq:applybessel}\\
 & =\E_{\ur\sim s}\Verts{\nabla f\pars{\T_{\b w}\pp{\ur}}-\nabla f\pars{\T_{\b v}\pp{\ur}}}_{2}^{2}\nonumber 
\end{align}
where $\VV{\cdot}_{s}$ denotes the norm corresponding to $\angs{\cdot,\cdot}_{s}.$
Now apply the smoothness of $f$ to get that

\begin{alignat}{1}
\Verts{\nabla l(\b w)-\nabla l(\b v)}_{2}^{2} & \leq M^{2}\E_{\ur\sim s}\Verts{\T_{\b w}\pp{\ur}-\T_{\b v}\pp{\ur}}_{2}^{2}\label{eq:apply-smoothness}\\
 & =M^{2}\Verts{\w-\b v}_{2}^{2},
\end{alignat}
where the last equality follows from \ref{lem:expected-difference-of-transforms}.
\end{proof}
Note that the \emph{only} inequalities used in this proof are (i)
Bessel's inequality and (ii) the bound on the difference of gradients
of $f$ provided by the assumption that $f$ is $M$-smooth. Thus,
the tightness of the final bound that $\Verts{\nabla l(\b w)-\nabla l(\b v)}_{2}\leq M\Verts{\b w-\b v}_{2}$
is determined by the tightness of these two inequalities. It's natural
to ask when this bound will be tight or loose. The following section
will show that it is tight when $f$ is closer to an isotropic quadratic.
On the other hand, the starting assumption that $f$ is smooth $\Verts{\nabla f(\x)-\nabla l(\b y)}_{2}\leq M\Verts{\x-\b y}_{2}$
might often be loose, e.g. if $f$ is much smoother in ``some directions''
than others. In this case, the final bound on $l$ will also be loose
due to looseness created in moving from \ref{eq:applybessel} to \ref{eq:apply-smoothness}
(even $M$ might still be the best possible \emph{smoothness constant}).

The idea of expressing the gradient using a fixed base distribution
and a transformation $\T_{w}\pp{\u}$ is also used in ``reparameterization''
type estimators \citep{Titsias_2014_DoublyStochasticVariational,Rezende_2014_StochasticBackpropagationApproximate,Kingma_2014_Autoencodingvariationalbayes}.
Smoothness, however, is a deterministic property of the function $l(\b w)$,
independent of any method one might use for estimating or optimizing
it.

\subsection{Unimprovability}

This section gives an example function $f\pp{\z}$ that is $M$-smooth,
but leads to a function $l\pp{\w}$ that is $M$-smooth (but not smoother),
meaning that \ref{thm:lsmoothness} is unimprovable. Intuitively,
smoothness is a quadratic upper-bound. So, it is natural to suppose
that $f\pp{\z}$ is \emph{exactly} quadratic. The following shows
that in this case, $l\pp{\b w}$ has a closed form.

\begin{restatable}{thm1}{egfungeneral}

\label{thm:egfun-general}Let $\qw=\locscale(\b m,C,s)$ with parameters
$\w=\pp{\b m,C}$ and a standardized base distribution $s$ and let
$f(\b z)=\frac{a}{2}\Verts{\b z-\zo}_{2}^{2}.$ Then $l\pp{\b w}=\E_{\zr\sim\qw}f\pp{\zr}=\frac{a}{2}\pp{\VV{\b m-\zo}_{2}^{2}+\VV C_{F}^{2}}.$

\end{restatable}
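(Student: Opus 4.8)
The plan is a direct second-moment computation that uses only the two defining properties of a standardized base distribution, namely $\E_{\ur\sim s}\ur=0$ and $\V_{\ur\sim s}\ur=I$, the latter being equivalent (together with the former) to $\E_{\ur\sim s}\ur\ur^{\top}=I$. First I would rewrite the expectation in reparameterized form, $l(\b w)=\E_{\ur\sim s}f\pars{\T_{\b w}(\ur)}$, and substitute the affine map $\T_{\b w}(\ur)=C\ur+\b m$, so that the shifted argument is $\T_{\b w}(\ur)-\zo=C\ur+(\b m-\zo)$.

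Next I would expand the squared Euclidean norm of this shifted argument:
\begin{equation}
\Verts{\T_{\b w}(\ur)-\zo}_{2}^{2}=\ur^{\top}C^{\top}C\ur+2(\b m-\zo)^{\top}C\ur+\Verts{\b m-\zo}_{2}^{2}.
\end{equation}
Taking $\E_{\ur\sim s}$ term by term: the middle term vanishes since it is linear in $\ur$ and $\E\ur=0$; the first term satisfies $\E\,\ur^{\top}C^{\top}C\ur=\tr\!\pars{C^{\top}C\,\E\,\ur\ur^{\top}}=\tr\!\pars{C^{\top}C}=\Verts{C}_{F}^{2}$ because $\E\,\ur\ur^{\top}=I$; and the last term is constant. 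Multiplying through by $a/2$ then yields $l(\b w)=\frac{a}{2}\pp{\Verts{\b m-\zo}_{2}^{2}+\Verts{C}_{F}^{2}}$, as claimed.

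There is no real obstacle here: the argument is a one-line trace identity, and the standardization assumption is what makes the cross term and the quadratic term collapse so cleanly — for a non-standardized $s$ one would instead pick up the mean and covariance of $s$. The only care needed is to justify interchanging expectation with the finite expansion, which is immediate since $f$ is a fixed quadratic and we only need $s$ to have a finite second moment (guaranteed by $\V_{\ur\sim s}\ur=I$). The purpose of stating this closed form is downstream: evaluating it at $\b v=(\zo,0)$ and comparing $l(\b w)$ with the quadratic upper bound that $M$-smoothness of $l$ would force, one sees the constant cannot be taken smaller than $a=M$, so \ref{thm:lsmoothness} is unimprovable.
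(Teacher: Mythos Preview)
Your proof is correct and follows essentially the same approach as the paper: a direct second-moment expansion of $\Verts{C\ur+(\b m-\zo)}_{2}^{2}$ in which the cross term vanishes by $\E\ur=0$ and the quadratic term becomes $\Verts{C}_{F}^{2}$ via the trace identity and $\E\ur\ur^{\top}=I$. The paper phrases the first step as a bias--variance decomposition $\E\Verts{\zr-\zo}_{2}^{2}=\tr\V[\zr]+\Verts{\E[\zr]-\zo}_{2}^{2}$ before specializing to the location-scale form, but this is only a cosmetic difference.
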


To see that \ref{thm:lsmoothness} is unimprovable, define $\bar{\b w}=\pp{\b z^{*},0_{d,d}},$
where $0_{d,d}$ is a $d\times d$ matrix of zeros. Then, $l\pp{\w}=\frac{a}{2}\VV{\w-\bar{\w}}_{2}^{2}$
. This is $M$-smooth for $M=a,$ but not for any smaller value.

\subsection{Solution Guarantees\label{sec:Solution-Guarantees}}

Intuitively, properties of the target distribution $p\pp{\z|\x}$
might imply properties of the variational distribution $q_{w^{*}}$
at the optimal parameters $\w^{*}$. In particular, if $\log p\pp{\z,\x}$
is smooth over $\z$, then it is ``spread out'' so we might expect
that $q_{\w^{*}}$ would also be. This section formalizes and proves
a version of this intuition. This will be used in \ref{sec:Convergence-Considerations}
to give a convergence guarantee for projected stochastic gradient
descent. The core idea is that the ELBO in \ref{eq:ELBO-def} is poorly
conditioned for low-variance distributions. However, if we can guarantee
that the optimum lies in a well-conditioned region, we can constrain
optimization to that region.

We define $\mathcal{W}_{M}$ to be the set of parameters where all
singular values of $C$ are at least $1/\sqrt{M},$ i.e.
\begin{equation}
\mathcal{W}_{M}=\left\{ \pp{\b m,C}\ \Big\vert\ \sigma_{\min}\pp C\geq\frac{1}{\sqrt{M}}\right\} .
\end{equation}
We could equivalently define $\mathcal{W}_{M}$ to be the set of of
parameters where all eigenvalues of $CC^{\top}$ are at least $\frac{1}{M}$,
i.e. $CC^{\top}\succeq\frac{1}{M}I$. Recall from \ref{sec:preliminaries}
that for standardized $s$, $\V_{\zr\sim\qw}\zr=CC^{\top}$, so this
is the parameters with variance at least $\frac{1}{M}I.$

The following result shows that if minimizing the ELBO with a smooth
target distribution, the optimal parameters must fall in $\mathcal{W}_{M},$
i.e. the variance cannot be smaller than $\frac{1}{M}I$. This requires
the stronger assumption that the base distribution $s$ is spherically
symmetric.

\begin{restatable}{thm1}{smoothnesssol}

Let $\qw=\locscale(\b m,C,s)$ with parameters $\w=\pp{\b m,C}$ and
a standardized and spherically symmetric base distribution $s$. Suppose
$\w$ minimizes $l\pp{\w}+h\pp{\w}$ from \ref{eq:ELBO-def} and $\log p\pp{\z,\x}$
is $M$-smooth over $\z$. Then, $\w\in\mathcal{W}_{M}.$\label{thm:smoothness-sol}

\end{restatable}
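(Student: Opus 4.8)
The plan is to argue by contradiction: suppose $\w^* = (\b m^*, C^*)$ minimizes $l(\w)+h(\w)$ but $\w^*\notin\mathcal{W}_M$, so $\sigma_{\min}(C^*) < 1/\sqrt{M}$. The idea is to produce a nearby parameter vector with strictly smaller objective by ``inflating'' the smallest singular direction of $C^*$. Concretely, I would use the SVD $C^* = U\Sigma V^\top$ and perturb the smallest singular value $\sigma = \sigma_{\min}(C^*)$, i.e.\ replace $\Sigma$ by $\Sigma + \delta\, e_j e_j^\top$ where $j$ is the index of the smallest singular value. Since the base distribution is spherically symmetric, the rotations $U$ and $V$ do not change anything essential, so without loss of generality I can reduce to analyzing a one-dimensional perturbation of a single singular value; this is where spherical symmetry is actually needed (it lets us diagonalize $C$ by orthogonal changes of coordinates on $\ur$ without altering $s$).

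Next I would compute how each term changes along this perturbation. For the neg-entropy $h(\w) = -\mathrm{Entropy}[s] - \log|\det C|$, increasing a singular value from $\sigma$ to $\sigma+\delta$ decreases $h$ by $\log\frac{\sigma+\delta}{\sigma}$, whose derivative at $\delta=0$ is $1/\sigma$. For the energy $l(\w)$, I would use \ref{thm:lsmoothness}: since $\log p(\z,\x)$ is $M$-smooth over $\z$, the energy $l(\w)$ is $M$-smooth over $\w$, hence along any line it satisfies the quadratic upper bound $l(\w+\delta\b d) \le l(\w) + \delta\, \nabla l(\w)^\top\b d + \tfrac{M}{2}\delta^2\|\b d\|_2^2$ for a unit vector $\b d$. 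So the increase in $l$ is at most $\delta|\partial_\sigma l| + \tfrac{M}{2}\delta^2$. The key remaining ingredient is a bound on the directional derivative $\partial_\sigma l$ at the optimum in the direction of the smallest singular value. Here I would invoke the first-order optimality condition: at $\w^*$, $\nabla l(\w^*) + \nabla h(\w^*) = 0$, and $\nabla h(\w^*) = (0, -C^{*-\top})$, so the gradient of $l$ with respect to $C$ equals $C^{*-\top}$; projecting onto the smallest-singular-value direction gives $\partial_\sigma l = 1/\sigma$ exactly (this matches the $1/\sigma$ from the entropy side, which is the whole point of optimality). Then the total change in $l+h$ along the perturbation is at most
\[
\Big(\tfrac{1}{\sigma}\Big)\delta + \tfrac{M}{2}\delta^2 - \log\tfrac{\sigma+\delta}{\sigma},
\]
and I want to show this is negative for some $\delta>0$ when $\sigma < 1/\sqrt{M}$. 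Expanding $\log\frac{\sigma+\delta}{\sigma} = \frac{\delta}{\sigma} - \frac{\delta^2}{2\sigma^2} + O(\delta^3)$, the first-order terms cancel and the second-order term is $\tfrac{M}{2}\delta^2 - \tfrac{1}{2\sigma^2}\delta^2 = \tfrac{\delta^2}{2}(M - 1/\sigma^2)$, which is strictly negative precisely when $\sigma^2 < 1/M$. Choosing $\delta$ small enough that the $O(\delta^3)$ remainder is dominated yields a strictly smaller objective, contradicting optimality. Hence $\sigma_{\min}(C^*)\ge 1/\sqrt{M}$, i.e.\ $\w^*\in\mathcal{W}_M$.

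The main obstacle I anticipate is making the reduction to a single-singular-value perturbation fully rigorous in the multivariate setting — specifically, verifying that a perturbation of $C^*$ that increases only $\sigma_{\min}$ really does decrease $h$ at rate $1/\sigma_{\min}$ while the optimality condition controls the matching directional derivative of $l$, and that spherical symmetry of $s$ genuinely licenses the orthogonal diagonalization (one must check $\locscale(\b m, C, s)$ is invariant under $C \mapsto CR$ for orthogonal $R$ when $s$ is spherically symmetric, which holds since $R\ur \overset{d}{=} \ur$). A secondary subtlety is handling the case where the minimizer might lie on the boundary or fail to exist, or where $C^*$ is singular (then $h$ is $+\infty$ and the claim is vacuous or needs $C^*$ invertible as a standing assumption); I would address this by noting $l+h\to\infty$ as $\sigma_{\min}(C)\to 0$, so any minimizer has invertible $C^*$ and the SVD argument applies cleanly.
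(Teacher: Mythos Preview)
There is a genuine gap: a sign error in the second-order expansion hides the fact that the perturbation argument cannot produce a contradiction. You write the upper bound on the change in $l+h$ as
\[
\frac{\delta}{\sigma}+\frac{M}{2}\delta^{2}-\log\frac{\sigma+\delta}{\sigma},
\]
and then claim the second-order term is $\tfrac{\delta^{2}}{2}(M-1/\sigma^{2})$. But with $\log(1+\delta/\sigma)=\delta/\sigma-\tfrac{\delta^{2}}{2\sigma^{2}}+O(\delta^{3})$, the expression becomes
\[
\frac{\delta}{\sigma}+\frac{M}{2}\delta^{2}-\frac{\delta}{\sigma}+\frac{\delta^{2}}{2\sigma^{2}}+O(\delta^{3})
=\frac{\delta^{2}}{2}\Bigl(M+\frac{1}{\sigma^{2}}\Bigr)+O(\delta^{3}),
\]
which is \emph{positive} for all $\sigma>0$. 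The first-order terms cancel because you are at a stationary point, and at second order you are adding the (positive) upper bound $M$ on the curvature of $l$ to the (positive) curvature $1/\sigma^{2}$ of $h(\sigma)=-\log\sigma$. An upper bound that stays above the current value gives no contradiction with minimality; the same computation in the decreasing-$\sigma$ direction yields the same positive bound. So the quadratic descent-lemma form of smoothness, by itself, cannot close the argument.

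What the paper uses that your proposal is missing is a bound on the \emph{gradient} of $l$ rather than on its value: one shows (using spherical symmetry of $s$) that when the diagonal entry $C_{ii}$ is set to zero the partial derivative $\partial l/\partial C_{ii}$ vanishes, and then the gradient-Lipschitz form of smoothness between that point and $\w$ gives $\bigl|\partial l/\partial C_{ii}\bigr|\le M|C_{ii}|$. Combining this with the optimality condition $\partial l/\partial C_{ii}=1/C_{ii}$ yields $C_{ii}^{2}\ge 1/M$ directly. Your SVD reduction to a diagonal $C$ is correct and is also how the paper handles the general case; the missing piece is this ``anchor'' at $C_{ii}=0$ where the gradient is known, which converts smoothness into the needed one-sided inequality.
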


The proof of this theorem (in \ref{sec:Solution-Guarantees-appendix})
first establishes the following Lemma.

\begin{restatable}{lem1}{diagCgradbound}

Let $\qw=\locscale(\b m,C,s)$ with parameters $\w=\pp{\b m,C}$ and
a standardized and spherically symmetric base distribution $s$. Let
$l(\b w)=\E_{\zr\sim\qw}f\pp{\zr}$. Suppose $C$ is diagonal and
$f$ is $M$-smooth. Then, $\vv{\frac{dl\pp{\w}}{dC_{ii}}}\leq M\vv{C_{ii}}.$\label{lem:diag-C-grad-bound}

\end{restatable}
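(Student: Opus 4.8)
The plan is to reuse the inner-product machinery from \ref{lem:grad-as-inner-product} in the single coordinate $w_i = C_{ii}$, but now keeping track only of that one direction rather than applying Bessel across all of them. By \ref{lem:grad-as-inner-product}, with $\b a_i(\u) = \frac{d}{dC_{ii}}\T_{\b w}(\u)$ we have $\frac{dl(\w)}{dC_{ii}} = \angs{\b a_i, \nabla f\circ\T_{\b w}}_s$. The explicit form of $\T_{\b w}(\u)=C\u+\b m$ gives $\frac{d}{dC_{ii}}\T_{\b w}(\u) = u_i \b e_i$, the $i$-th standard basis vector scaled by the $i$-th coordinate of $\u$. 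Since $s$ is standardized, $\angs{\b a_i,\b a_i}_s = \E_{\ur\sim s}\, \ur_i^2 = 1$, so $\b a_i$ is a unit vector in $\angs{\cdot,\cdot}_s$ (this is just the content of \ref{lem:orthonormality} for one coordinate). Hence, by Cauchy--Schwarz in $\angs{\cdot,\cdot}_s$,
\begin{equation}
\vv{\tfrac{dl(\w)}{dC_{ii}}} = \vv{\angs{\b a_i, \nabla f\circ\T_{\b w}}_s} \leq \Verts{\b a_i}_s\,\Verts{\nabla f\circ\T_{\b w}}_s = \Verts{\nabla f\circ\T_{\b w}}_s.
\end{equation}

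That bound alone is not what we want; we need the right-hand side to be $M\vv{C_{ii}}$, not $\Verts{\nabla f\circ\T_{\b w}}_s$. The fix is to subtract off a reference point before applying Cauchy--Schwarz. Because $\b a_i(\u)=u_i\b e_i$ is orthogonal in $\angs{\cdot,\cdot}_s$ to any function that does not depend on $\u$ (as $\E_{\ur\sim s}\ur_i = 0$ by standardization), we may replace $\nabla f\circ\T_{\b w}$ by $\nabla f\circ\T_{\b w} - \b c$ for any constant vector $\b c$ without changing the inner product:
\begin{equation}
\tfrac{dl(\w)}{dC_{ii}} = \angs{\b a_i,\ \nabla f\circ\T_{\b w} - \b c}_s.
\end{equation}
The natural choice is $\b c = \nabla f(\b m)$, i.e.\ the gradient of $f$ at the point $\u = 0$ maps to under $\T_{\b w}$ (here I use that $\T_{\b w}(0) = \b m$). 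Then Cauchy--Schwarz gives $\vv{\frac{dl(\w)}{dC_{ii}}} \leq \Verts{\nabla f\circ\T_{\b w} - \nabla f(\b m)}_s$, and $M$-smoothness of $f$ bounds the integrand pointwise: $\Verts{\nabla f(\T_{\b w}(\u)) - \nabla f(\b m)}_2 \leq M\Verts{\T_{\b w}(\u) - \b m}_2 = M\Verts{C\u}_2$. Taking the $\angs{\cdot,\cdot}_s$-norm (i.e.\ $\E_{\ur\sim s}$ of the squared $\ell_2$ norm) yields $\vv{\frac{dl(\w)}{dC_{ii}}}^2 \leq M^2\, \E_{\ur\sim s}\Verts{C\ur}_2^2$.

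The last step is to compute $\E_{\ur\sim s}\Verts{C\ur}_2^2$ when $C$ is diagonal and $s$ is spherically symmetric. Writing it out, $\E\Verts{C\ur}_2^2 = \sum_j C_{jj}^2\, \E\,\ur_j^2 = \sum_j C_{jj}^2$ by standardization. This gives $\vv{\frac{dl(\w)}{dC_{ii}}} \leq M\sqrt{\sum_j C_{jj}^2}$, which is \emph{weaker} than the claimed $M\vv{C_{ii}}$. The genuine obstacle, then, is that a crude pointwise bound inside the expectation is lossy: it bounds the component of $\nabla f \circ \T_{\b w} - \nabla f(\b m)$ along \emph{all} directions, whereas the inner product with $\b a_i = \ur_i \b e_i$ only sees the $i$-th coordinate, weighted by $\ur_i$. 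To recover the sharp constant I expect one must exploit spherical symmetry of $s$ more carefully --- e.g.\ condition on the coordinates $\ur_j$, $j\neq i$, and use that, for fixed such values, the map $t \mapsto \nabla_i f(\ldots, C_{ii} t, \ldots)$ is $M\vv{C_{ii}}$-Lipschitz in $t$, together with $\E[\ur_i \, g(\ur_i) \mid \ur_{-i}] = \E[\ur_i(g(\ur_i) - g(0)) \mid \ur_{-i}]$ and $\vv{\E[\ur_i(g(\ur_i)-g(0))]} \le \E[\vv{\ur_i}\,M\vv{C_{ii}}\,\vv{\ur_i}] = M\vv{C_{ii}}\,\E \ur_i^2 = M\vv{C_{ii}}$. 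That is, the diagonal structure of $C$ lets the Lipschitz bound ``factor through'' the single coordinate, and this one-dimensional argument --- not the full $d$-dimensional Cauchy--Schwarz --- is the heart of the lemma. I would carry out exactly this conditioning argument, which is the step I anticipate requires the most care with the spherical-symmetry hypothesis.
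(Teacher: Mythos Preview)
Your final conditioning argument is correct and gives the sharp bound $M\vv{C_{ii}}$, but it is a different route from the paper's. The paper does not work directly with the smoothness of $f$; instead it bootstraps from \ref{thm:lsmoothness}. Concretely, it defines $\w'$ to be $\w$ with $C_{ii}$ zeroed out, shows that $\frac{dl(\w')}{dC_{ii}}=\E_{\ur\sim s}\ur_i\,\b e_i^\top\nabla f(\T_{\w'}(\ur))=0$ (since $\T_{\w'}(\u)$ does not depend on $\u_i$ and, by spherical symmetry, $\E[\ur_i\mid \ur_{-i}]=0$), and then applies the already-established $M$-smoothness of $l$ to get
\[
\Bigl\vert\tfrac{dl(\w)}{dC_{ii}}\Bigr\vert=\Bigl\vert\tfrac{dl(\w)}{dC_{ii}}-\tfrac{dl(\w')}{dC_{ii}}\Bigr\vert\le\Verts{\nabla l(\w)-\nabla l(\w')}_2\le M\Verts{\w-\w'}_2=M\vv{C_{ii}}.
\]
Both proofs hinge on the same symmetry fact---$\ur_i$ has conditional mean zero given $\ur_{-i}$---and both effectively compare against the ``$i$-th coordinate switched off'' reference point (your $g(0)$ plays the role of the paper's $\w'$). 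The paper's version is shorter because \ref{thm:lsmoothness} absorbs the Bessel/Cauchy--Schwarz step and the computation of $\E\Verts{\T_{\w}(\ur)-\T_{\w'}(\ur)}_2^2$; your version is more self-contained, needing only the smoothness of $f$ and making transparent exactly where the diagonal structure of $C$ lets the Lipschitz bound factor through a single coordinate. Your two preliminary attempts (bare Cauchy--Schwarz, then subtracting $\nabla f(\b m)$) are indeed too weak for the reason you identified, but the conditioning argument you settle on is sound.
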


The proof of \ref{lem:diag-C-grad-bound} first shows that if $C_{ii}=0$,
then $\frac{dl}{dC_{ii}}=0,$ which uses that $s$ is symmetric. Then,
given an arbitrary $\w$, let $\w'$ be $\w$ with $C_{ii}$ set to
zero. Since we know from \ref{thm:egfun-general} that $l$ is $M$-smooth,
we then get that $\vv{dl\pp{\w}/dC_{ii}}\leq\VV{\nabla l\pp{\w'}-\nabla l\pp{\w}}_{2}\leq M\vv{C_{ii}}.$

Now, the proof of \ref{thm:smoothness-sol} uses the fact that if
$\w$ is a minimum, then $\nabla l\pp{\w}=-\nabla h\pp{\w}.$ If $C$
happens to be diagonal, the result is easy to show using the previous
lemma along with the exact gradient of $h$. Given an arbitrary $C$,
we can use the singular value decomposition of $C$ to define another
$M$-smooth function which must have a diagonal solution.

\section{Analogous Result for Convex Functions}

Smoothness and strong convexity are complementary in that they give
upper and lower bounds on the eigenvalues of the Hessian. As a minor
contribution, we observe that a guarantee complementary to \ref{thm:lsmoothness}
holds: if $-\log p$ is (strongly) convex, then so is $l\pp{\w}$.
The example in \ref{thm:egfun-general} shows this result is also
unimprovable.

\begin{restatable}{thm1}{convexity}

Let $\qw=\locscale(\b m,C,s)$ with parameters $\w=\pp{\b m,C}$.
If $f\pp{\z}$ is convex, then $l(\b w)=\E_{\zr\sim\qw}f\pp{\zr}$
is also convex. If, in addition, $s$ is standardized and $f\pp{\z}$
is $c$-strongly convex, then $l\pp{\w}$ is also $c$-strongly convex.\label{thm:convexity}

\end{restatable}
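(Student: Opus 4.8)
The plan is to follow the same strategy used for \ref{thm:lsmoothness}, but pushing inequalities in the opposite direction, since convexity is a statement about second-order behavior along line segments in $\w$-space. First I would handle plain convexity. Take two parameter vectors $\w$ and $\b v$ and a scalar $\lambda\in[0,1]$, and write $\w_\lambda=\lambda\w+(1-\lambda)\b v$. The key observation is that the affine map depends affinely on its parameters: $\T_{\w_\lambda}\pp{\u}=\lambda\,\T_{\w}\pp{\u}+(1-\lambda)\,\T_{\b v}\pp{\u}$ for every fixed $\u$. This is immediate since $\T_{\w}\pp{\u}=C\u+\b m$ is linear in $\pp{\b m,C}$. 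Therefore, for each fixed $\u$, convexity of $f$ gives $f\pars{\T_{\w_\lambda}\pp{\u}}\le\lambda f\pars{\T_{\w}\pp{\u}}+(1-\lambda)f\pars{\T_{\b v}\pp{\u}}$. Taking expectations over $\ur\sim s$ and using $l\pp{\w}=\E_{\ur\sim s}f\pars{\T_{\w}\pp{\ur}}$ (by the change-of-variables definition of the location-scale family) yields $l\pp{\w_\lambda}\le\lambda l\pp{\w}+(1-\lambda)l\pp{\b v}$, which is exactly convexity of $l$. Note this part needs no standardization assumption, consistent with the statement.

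For the strong convexity part, recall that $f$ being $c$-strongly convex means $f\pp{\z}-\tfrac{c}{2}\VV{\z}_2^2$ is convex. I would apply the convexity half of the theorem to the function $g\pp{\z}=f\pp{\z}-\tfrac{c}{2}\VV{\z}_2^2$, concluding that $\w\mapsto\E_{\ur\sim s}g\pars{\T_{\w}\pp{\ur}}$ is convex. It then suffices to show that $l\pp{\w}$ minus this convex function equals $\tfrac{c}{2}\VV{\w-\bar\w}_2^2$ up to an affine term, i.e. that $\E_{\ur\sim s}\tfrac{c}{2}\VV{\T_{\w}\pp{\ur}}_2^2$ is $c$-strongly convex in $\w$. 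But this is precisely where \ref{thm:egfun-general} (with $\b z^\ast=0$ and $a=c$) enters: it states $\E_{\zr\sim\qw}\tfrac{c}{2}\VV{\zr}_2^2=\tfrac{c}{2}\pp{\VV{\b m}_2^2+\VV C_F^2}=\tfrac{c}{2}\VV{\w}_2^2$, which is manifestly $c$-strongly convex in $\w$. Adding a convex function to a $c$-strongly convex function gives a $c$-strongly convex function, so $l\pp{\w}=\pp{l\pp{\w}-\tfrac{c}{2}\VV{\w}_2^2}+\tfrac{c}{2}\VV{\w}_2^2$ is $c$-strongly convex. Here the standardization of $s$ is used exactly once, through the hypotheses of \ref{thm:egfun-general}.

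I do not expect a serious obstacle here; the argument is essentially a transfer of convexity through an affine reparameterization followed by an expectation, both of which preserve convexity. The one point requiring a little care is the strong-convexity bookkeeping: one must make sure the quadratic being subtracted is $\tfrac{c}{2}\VV{\z}_2^2$ in $\z$-space and that, after composing with $\T_{\w}$ and taking the expectation, it corresponds to exactly $\tfrac{c}{2}\VV{\w}_2^2$ in $\w$-space rather than some other positive-definite quadratic form — and that is guaranteed by the identity $\VV C_F^2+\VV{\b m}_2^2=\VV{\w}_2^2$ together with \ref{thm:egfun-general}. A minor alternative, if one prefers to avoid invoking \ref{thm:egfun-general}, is to compute $\E_{\ur\sim s}\VV{C\ur+\b m}_2^2=\tr\pp{CC^\top}+\VV{\b m}_2^2$ directly from $\E_{\ur\sim s}\ur=0$ and $\V_{\ur\sim s}\ur=I$, but citing \ref{thm:egfun-general} is cleaner.
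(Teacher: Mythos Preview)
Your proposal is correct and follows essentially the same route as the paper: both the convexity and strong-convexity parts proceed by noting $\T_{\w}\pp{\u}$ is affine in $\w$, applying convexity of $f$ pointwise before taking expectations, and then handling strong convexity by splitting off the quadratic $\tfrac{c}{2}\VV{\z}_2^2$ and computing its expectation as $\tfrac{c}{2}\VV{\w}_2^2$ under the standardization assumption. The only cosmetic difference is that the paper computes $\E_{\zr\sim\qw}\VV{\zr}_2^2=\VV C_F^2+\VV{\b m}_2^2$ inline rather than invoking \ref{thm:egfun-general}, which you already flag as an equivalent alternative.
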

\begin{proof}
\textbf{(Convexity)} Represent $l$ as $l(\b w)=\E_{\ur\sim s}f\pars{\T_{\b w}\pp{\ur}}$
where $\T_{\b w}\pp{\u}=C\u+\b m$. For fixed $\u$, $\T_{\b w}\pp{\u}$
is \emph{linear} in $\b w.$ Thus, given any two parameter vectors
$\b w$ and $\b v$ and any $\alpha,\beta\in(0,1)$ with $\alpha+\beta=1,$
since $f$ is convex, $l\pp{\alpha\w+\beta\b v}$ is equal to
\begin{alignat*}{1}
\E_{\ur\sim s}f\pars{\T_{\alpha\b w+\beta\b v}\pp{\ur}} & =\E_{\ur\sim s}f\pars{\alpha\T_{\b w}\pp{\ur}+\beta\T_{\b v}\pp{\ur}}\\
 & \leq\E_{\ur\sim s}\alpha f\pars{\T_{\b w}\pp{\ur}}+\beta f\pars{\T_{\b v}\pp{\ur}}\\
 & =\alpha l(\b w)+\beta l(\b v).
\end{alignat*}

\textbf{(Strong convexity)} If $f$ is $c$-strongly convex then $f\pp{\b z}=f_{0}(\b z)+\frac{c}{2}\VV{\z}_{2}^{2}$
for some convex function $f_{0}$. Thus, $l(\b w)=l_{0}(\b w)+\frac{c}{2}\E_{\zr\sim\qw}\VV{\zr}_{2}^{2},$
where $l_{0}(\b w)=\E_{\zr\sim\qw}f_{0}(\zr)$ is convex by the previous
reasoning. Then, it isn't too hard to show that $\E_{\zr\sim\qw}\VV{\zr}_{2}^{2}=\E_{\ur\sim s}\VV{C\ur+\b m}_{2}^{2}=\VV C_{F}^{2}+\VV{\b m}_{2}^{2}=\VV{\w}_{2}^{2}.$
Thus, we have that $l(\b w)=l_{0}(\b w)+\frac{c}{2}\Verts{\b w}^{2}$
is $c$-strongly convex.
\end{proof}
The convexity result (and proof) is essentially the same as that of
\citet[Appendix, Proposition 1]{Titsias_2014_DoublyStochasticVariational}.
The strong-convexity result generalizes a previous result due to \citet[Sec. 3.2]{Challis_2013_GaussianKullbackLeiblerApproximate}
who give a strong-convexity guarantee for Gaussian variational distributions
applied to targets with Gaussian priors.

The following result gives a bound on the location of the optimal
parameters. The proof uses the fact that, at the optimum, $\nabla l\pp{\w}=-\nabla h\pp{\w},$
so the exact gradient is known. However, strong convexity means that
only certain gradients are possible at a given part of parameter space.
(This result is complementary to \ref{thm:smoothness-sol}.)

\begin{restatable}{thm1}{vilocconvex}

Let $\qw=\locscale(\b m,C,s)$ with parameters $\w=\pp{\b m,C}$ and
a standardized and spherically symmetric base distribution $s$. Suppose
$\w$ minimizes $l\pp{\w}+h\pp{\w}$ from \ref{eq:ELBO-def} and $-\log p\pp{\z,\x}$
is $c$-strongly convex over $\z$. Then, $\VV C_{F}^{2}+\VV{\b m-\zo}_{2}^{2}\leq\frac{d}{c},$
where $\zo=\argmax_{\b z}\log\pp{\b z,\b x}$.\label{thm:strongsol}

\end{restatable}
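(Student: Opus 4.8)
The plan is to compare the optimal parameter vector $\w=\pp{\b m,C}$ against the degenerate vector $\bar{\w}=\pp{\zo,0_{d,d}}$, which concentrates all mass at the mode $\zo=\argmax_{\z}\log p\pp{\z,\x}$. Since $f=-\log p\pp{\cdot,\x}$ is $c$-strongly convex, $\zo$ is its unique minimizer and $\nabla f\pp{\zo}=0$. The core idea (parallel to \ref{thm:smoothness-sol}) is that the first-order optimality condition determines $\nabla l\pp{\w}$ \emph{exactly}, while $c$-strong convexity of $l$ then forces $\w$ to lie close to the minimizer of $l$, which turns out to be $\bar{\w}$.

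First I would identify the two endpoints. At $\bar{\w}$ one has $\T_{\bar{\w}}\pp{\u}=\zo$ for every $\u$, so $l\pp{\w}=\E_{\ur\sim s}f\pars{C\ur+\b m}\geq\E_{\ur\sim s}f\pp{\zo}=l\pp{\bar{\w}}$; hence $\bar{\w}$ minimizes $l$ and $\nabla l\pp{\bar{\w}}=0$. At $\w$, note that any minimizer of $l+h$ must have $C$ invertible, since otherwise $h\pp{\w}=-\mathrm{Entropy}\bb s-\log\verts{\det C}=+\infty$; thus $h$ is differentiable there with $\nabla h\pp{\w}=\pars{0,-C^{-\top}}$, and stationarity of $l+h$ gives $\nabla l\pp{\w}=-\nabla h\pp{\w}=\pars{0,C^{-\top}}$.

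Next I would invoke \ref{thm:convexity}: since $s$ is standardized and $f$ is $c$-strongly convex, $l$ is $c$-strongly convex, so $\angs{\nabla l\pp{\w}-\nabla l\pp{\bar{\w}},\w-\bar{\w}}\geq c\VV{\w-\bar{\w}}_{2}^{2}$. Substituting the two gradients, the left-hand side collapses: the $\b m$-block contributes $\angs{0,\b m-\zo}=0$, and the $C$-block contributes the Frobenius inner product $\tr\!\pars{\pp{C^{-\top}}^{\top}C}=\tr\pp{C^{-1}C}=d$. The right-hand side is $c\pars{\VV{\b m-\zo}_{2}^{2}+\VV C_{F}^{2}}$, using that the squared Euclidean distance between $\w$ and $\bar{\w}$ splits as in \ref{lem:expected-difference-of-transforms}. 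Rearranging yields $\VV C_{F}^{2}+\VV{\b m-\zo}_{2}^{2}\leq d/c$.

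The one genuinely delicate point is spotting the right comparison vector $\bar{\w}=\pp{\zo,0_{d,d}}$: it is precisely the choice that (a) minimizes $l$, so $\nabla l$ vanishes there, and (b) makes $\angs{\nabla l\pp{\w},\w-\bar{\w}}$ evaluate to the clean constant $d$ via $\tr(C^{-1}C)$ rather than leaving a residual depending on $C$. Once that is set up, the proof is a one-line application of the strong-convexity monotonicity inequality together with the exact entropy gradient. A minor technicality is that strong convexity of $f$ does not by itself make $l$ differentiable; this is harmless, as one can run the identical argument with subgradients, using $0\in\partial l\pp{\bar{\w}}$ and $-\nabla h\pp{\w}\in\partial l\pp{\w}$. (Incidentally, spherical symmetry of $s$ — assumed in the statement for uniformity with \ref{thm:smoothness-sol} — is not used here; standardization of $s$, as required by \ref{thm:convexity}, suffices.)
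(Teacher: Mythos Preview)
Your proof is correct and follows essentially the same route as the paper: compare $\w$ to $\bar{\w}=(\zo,0_{d,d})$, invoke $c$-strong convexity of $l$ from \ref{thm:convexity} to get the monotonicity inequality, use $\nabla l(\bar{\w})=0$ and $\nabla l(\w)=-\nabla h(\w)$, and evaluate the inner product to $\tr(C^{-1}C)=d$. Your additional remarks (invertibility of $C$ at the optimum, the subgradient variant, and that spherical symmetry is unused) are accurate refinements not spelled out in the paper.
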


\section{Convergence Considerations\label{sec:Convergence-Considerations}}

In optimizing the ELBO in \ref{eq:ELBO-def}, the negative entropy
term $h$ creates complications. The gradient is $\nabla h\pp{\w}=(0,-C^{-\top})$
(\ref{sec:preliminaries}), which can change arbitrarily rapidly when
the singular values of $C$ are close to zero. So $h\pp{\w}$ is \emph{not}
Lipschitz-smooth, posing a challenge for establishing convergence
guarantees for pure gradient descent applied to the full ELBO. In
this section, we consider two strategies for coping with this: \emph{Projected}
gradient descent, and \emph{proximal} gradient descent. Finally, we
seek to understand the performance of \emph{regular} gradient descent
seen in \ref{fig:motivation}. 

The following result gives one way of dealing with the fact that the
negentropy is non-smooth. As in \ref{sec:Solution-Guarantees}, this
requires the additional assumption that the base distribution is spherically
symmetric.

\begin{restatable}{thm1}{projectedsummary}\label{thm:projectedsummary}

Let $\qw=\locscale(\b m,C,s)$ with parameters $\w=\pp{\b m,C}$ and
a standardized and spherically symmetric base distribution $s$. Suppose
$\log p\pp{\z,\x}$ is $M$-smooth. Then $\mathrm{ELBO}\pp{\w}$ as
in \ref{eq:ELBO-def} is $2M$ smooth over $\mathcal{W}_{M}$ and
if $\w^{*}$ is an optima of $\mathrm{ELBO}\pp{\w},$ then $\w^{*}\in\mathcal{W}_{M}$.

\end{restatable}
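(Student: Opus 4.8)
The plan is to assemble \Cref{thm:projectedsummary} from the pieces already in hand, the only real work being the smoothness-over-$\mathcal W_M$ claim. The ``$\w^{*}\in\mathcal W_M$'' half is immediate: it is exactly \Cref{thm:smoothness-sol}, whose hypotheses (standardized and spherically symmetric $s$, $M$-smooth $\log p$, $\w$ minimizes $l+h$) are precisely those assumed here. So I would state that sentence and cite \Cref{thm:smoothness-sol}, and spend the rest of the proof on the smoothness bound.

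For the smoothness claim, recall $-\mathrm{ELBO}(\w)=l(\w)+h(\w)$ with $l(\w)=\E_{\zr\sim\qw}[-\log p(\zr,\x)]$. By \Cref{thm:lsmoothness}, since $-\log p(\cdot,\x)$ is $M$-smooth over $\z$ and $s$ is standardized, $l$ is $M$-smooth over all of $\w$. Smoothness of a sum follows by the triangle inequality from smoothness of the summands, so it suffices to show that $h(\w)=-\mathrm{Entropy}[s]-\log\verts{\det C}$ is $M$-smooth \emph{when restricted to} $\mathcal W_M$. Here I would use the explicit gradient $\nabla h(\w)=(0,-C^{-\top})$ from \Cref{sec:preliminaries}: the $\b m$-block is constant, so I only need to control how fast $C\mapsto -C^{-\top}$ changes. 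The clean way is to bound the Hessian (equivalently, the operator norm of the derivative of $C\mapsto C^{-\top}$). A standard identity gives, for a perturbation $\Delta$ of $C$, the directional derivative $-\frac{d}{dt}\big|_{0}(C+t\Delta)^{-\top} = C^{-\top}\Delta^{\top}C^{-\top}$, whose Frobenius norm is at most $\norm{C^{-1}}_{\mathrm{op}}^{2}\norm{\Delta}_{F}$. On $\mathcal W_M$ we have $\sigma_{\min}(C)\geq 1/\sqrt M$, hence $\norm{C^{-1}}_{\mathrm{op}}\leq\sqrt M$, so this directional derivative has norm at most $M\norm{\Delta}_F$. Integrating along the segment between two points of $\mathcal W_M$ — which is legitimate because $\mathcal W_M$ is convex, since $\sigma_{\min}(C)\geq 1/\sqrt M$ is equivalent to $CC^{\top}\succeq \tfrac1M I$ and the set of such $C$ is convex — yields $\norm{\nabla h(\w)-\nabla h(\b v)}_{2}\leq M\norm{\w-\b v}_{2}$ for $\w,\b v\in\mathcal W_M$, i.e. $h$ is $M$-smooth on $\mathcal W_M$.

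Combining, for $\w,\b v\in\mathcal W_M$,
\begin{align*}
\norm{\nabla\!\left(-\mathrm{ELBO}\right)(\w)-\nabla\!\left(-\mathrm{ELBO}\right)(\b v)}_{2}
&\leq \norm{\nabla l(\w)-\nabla l(\b v)}_{2}+\norm{\nabla h(\w)-\nabla h(\b v)}_{2}\\
&\leq M\norm{\w-\b v}_{2}+M\norm{\w-\b v}_{2}=2M\norm{\w-\b v}_{2},
\end{align*}
which is the asserted $2M$-smoothness of $\mathrm{ELBO}$ over $\mathcal W_M$ (smoothness of $f$ and $-f$ coincide).

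The main obstacle is purely the $h$ part: getting a rigorous Lipschitz bound on $\nabla h$ restricted to $\mathcal W_M$. The subtlety is that one must (i) justify working on the convex set $\mathcal W_M$ so that the mean-value/integration-along-a-segment argument applies, and (ii) correctly bound the derivative of the matrix map $C\mapsto C^{-\top}$ in terms of $\sigma_{\min}(C)$ — the identity $\frac{d}{dt}(C+t\Delta)^{-1}=-(C+t\Delta)^{-1}\Delta(C+t\Delta)^{-1}$ together with $\norm{ABD}_F\leq\norm{A}_{\mathrm{op}}\norm{B}_F\norm{D}_{\mathrm{op}}$ is what makes the constant come out to exactly $M$. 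Everything else (the $l$ part, the location of $\w^{*}$) is a direct appeal to \Cref{thm:lsmoothness} and \Cref{thm:smoothness-sol}. I would also remark that the factor $2$ is the price of splitting the ELBO additively, and is the best one gets from this argument.
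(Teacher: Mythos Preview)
Your overall architecture matches the paper's: cite \ref{thm:smoothness-sol} for $\w^*\in\mathcal W_M$, cite \ref{thm:lsmoothness} for $M$-smoothness of $l$, establish $M$-smoothness of $h$ on $\mathcal W_M$, and combine via the triangle inequality. The gap is in your argument for $h$. Your claim that $\mathcal W_M$ is convex is false: the condition $CC^\top\succeq\tfrac{1}{M}I$ (equivalently $\sigma_{\min}(C)\geq 1/\sqrt M$) does \emph{not} cut out a convex set of matrices. Already in one dimension it reads $\vv C\geq 1/\sqrt M$, the union of two disjoint rays; in general, $C=I$ and $C=-I$ both have $\sigma_{\min}=1$ yet their midpoint $0$ does not belong to any $\mathcal W_M$. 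Your integration-along-a-segment step needs every intermediate matrix $C+t\Delta$ to satisfy $\VV{(C+t\Delta)^{-1}}_2\leq\sqrt M$, and without convexity this can fail, so the bound you obtain on the directional derivative at the endpoint does not propagate to a Lipschitz bound on $\mathcal W_M$.

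The paper avoids this entirely by working directly with the two endpoints rather than differentiating and integrating. From the algebraic identity $B^{-1}-C^{-1}=B^{-1}(C-B)C^{-1}$ and the matrix-norm bound $\VV{AXD}_F\leq\VV A_2\,\VV D_2\,\VV X_F$ one gets, for $\w=(\b m,C)$ and $\b v=(\b n,B)$ in $\mathcal W_M$,
\[
\VV{\nabla h(\b v)-\nabla h(\w)}_2=\VV{B^{-\top}-C^{-\top}}_F\leq\VV{B^{-1}}_2\,\VV{C^{-1}}_2\,\VV{B-C}_F\leq M\,\VV{B-C}_F,
\]
using only $\sigma_{\min}(B),\sigma_{\min}(C)\geq 1/\sqrt M$ at the endpoints. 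No convexity of $\mathcal W_M$ and no mean-value step are required. With this replacement for your segment argument, the proof is complete and coincides with the paper's.
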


The proof (in \ref{sec:Convergence-Proofs}) first shows that $h$
is $M$-smooth over $\mathcal{W}_{M}$ by taking two arbitrary parameter
vectors $\w,\b v\in\mathcal{W}_{M}$ and using a matrix norm inequality
to bound the difference of the gradients $\nabla h\pp{\w}$ and $\nabla h\pp{\b v}$.
Then, we combine our main result that $l$ is smooth (\ref{thm:lsmoothness})
with the bound on the location of the optimum (\ref{thm:smoothness-sol})
and the fact that $h$ is smooth over $\mathcal{W}_{M}$. (By the
triangle inequality, the sum of two $M$-smooth functions is $2M$
smooth.)

Given this result, a natural approach to optimizing the ELBO is to
use projected (stochastic) gradient descent, i.e. to iterate $\r w'=\mathrm{proj}_{\mathcal{W}_{M}}\pp{\r w-\gamma\r g}$
where $\r g=\nabla l\pp{\r w}+\nabla h\pp{\r w}$ (or a stochastic
estimator) and $\mathrm{proj}_{\mathcal{W}}$ is Euclidean projection.
\ref{thm:gaussprox} (in \ref{sec:Convergence-Proofs}, supplement)
shows that if $\w=\pp{\b m,C}$, and $C$ has singular value decomposition
$C=USV^{\top},$ then
\[
\mathrm{proj}_{\mathcal{W}_{M}}\pp{\w}=\argmin_{\b v\in\mathcal{W}_{M}}\VV{\b w-\b v}_{2}^{2}=\pp{\b m,UTV^{\top}}
\]
where $T$ is a diagonal matrix with $T_{ii}=\max\pp{S_{ii},1/\sqrt{M}}.$

Another way of dealing the fact that $h$ is non-smooth is to use
\emph{proximal} optimization \citep{Beck_2009_Gradientbasedalgorithmsapplications,Parikh_2014_ProximalAlgorithms,Bubeck_2015_ConvexOptimizationAlgorithms,Ghadimi_2016_Minibatchstochasticapproximation,Ghadimi_2012_OptimalStochasticApproximation}.
Intuitively, the idea is as follows: With a step-size $\gamma$ gradient
descent on $l+h$ gives the update $\r w'=\r w-\gamma\pp{\nabla l(\r w)+\nabla h(\r w)}$,
which can equivalently be seen as minimizing a linear approximation
of $l+h$ at $\b w$, with a quadratic penalty, i.e. setting

\begin{multline}
\r w'=\argmin_{\b v}l(\r w)+h(\r w)+\angs{\nabla l(\r w)+\nabla h(\r w),\b v-\r w}\\
+\frac{1}{2\gamma}\Verts{\b v-\r w}^{2},\label{eq:minimize-approx-1}
\end{multline}

If $h(\b w)$ is non-smooth, even if $\b v$ is close to $\r w$,
$h(\r w)+\angs{\nabla h(\r w),\b v-\r w}$ can be an arbitrarily poor
approximation of $h(\b v).$ Thus, a natural idea is to leave $h$
\emph{unapproximated}, i.e. to linearize $l$ only. This would mean
instead using

\begin{multline}
\r w'=\argmin_{\b v}l(\r w)+\angs{\nabla l(\r w),\b v-\r w}+h(\b v)\\
+\frac{1}{2\gamma}\Verts{\b v-\r w}^{2}.\label{eq:prox}
\end{multline}

This is equivalent to 
\[
\r w'=\prox_{\gamma}\bb{\r w-\gamma\nabla l(\r w)},
\]

where 
\[
\prox_{\gamma}\bb{\b w}=\argmin_{\b v}h(\b v)+\frac{1}{2\gamma}\VV{\b v-\b w}_{2}^{2}.
\]

\begin{figure*}
\begin{centering}
\includegraphics[viewport=0bp 0bp 293.625bp 251.8131bp,clip,scale=0.6]{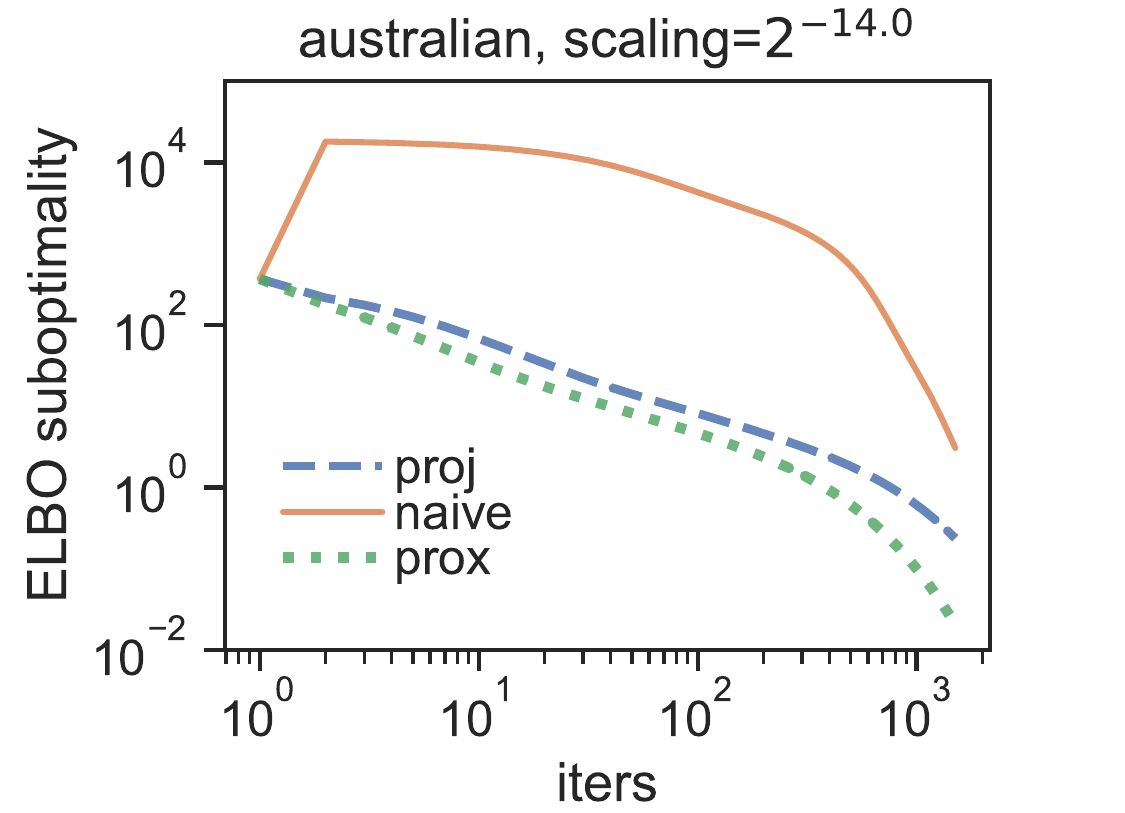}\includegraphics[viewport=60.75bp 0bp 293.625bp 251.8131bp,clip,scale=0.6]{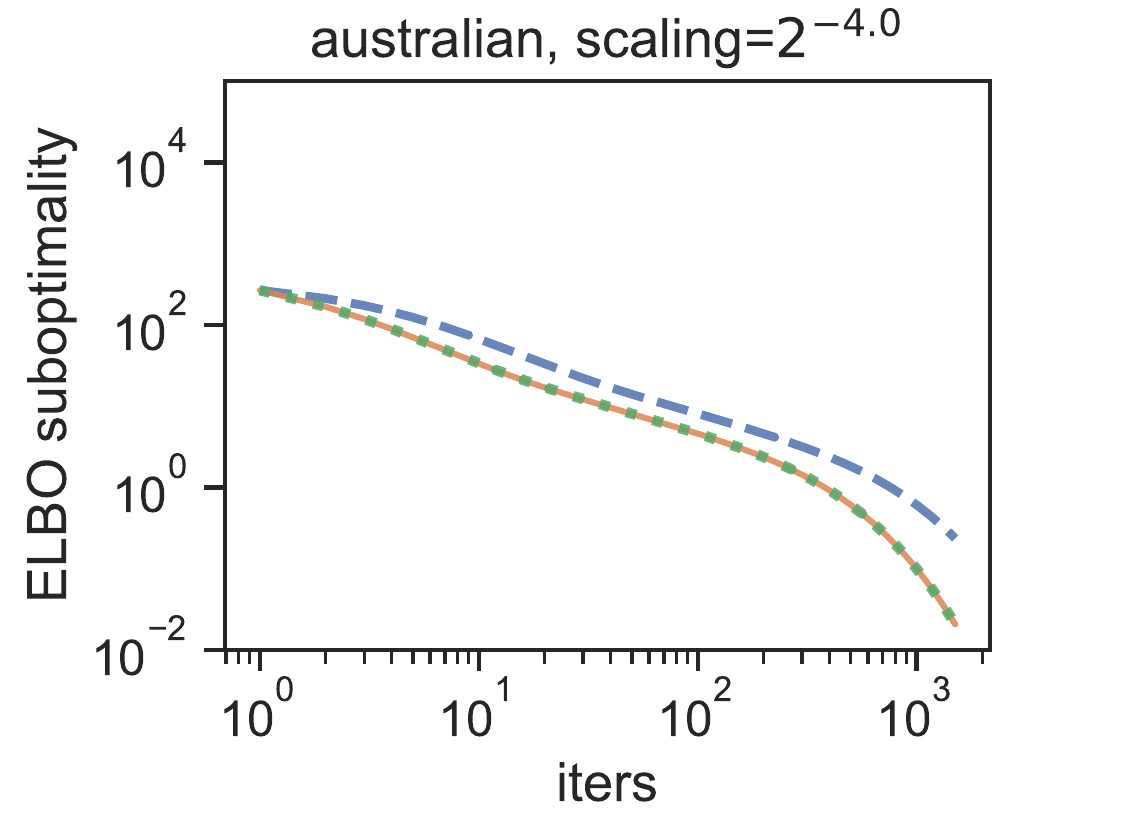}\includegraphics[viewport=60.75bp 0bp 293.625bp 251.8131bp,clip,scale=0.6]{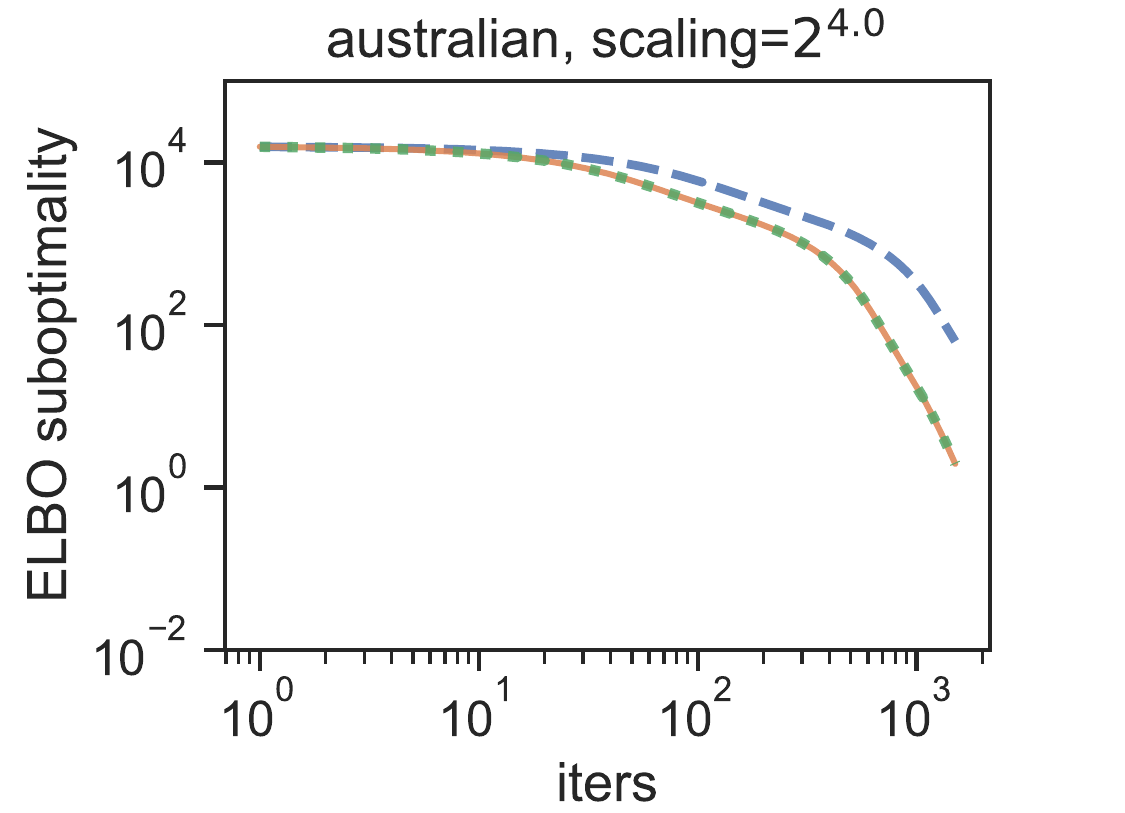}\vspace{-.25cm}
\par\end{centering}
\begin{centering}
\includegraphics[viewport=0bp 0bp 293.625bp 251.8131bp,clip,scale=0.6]{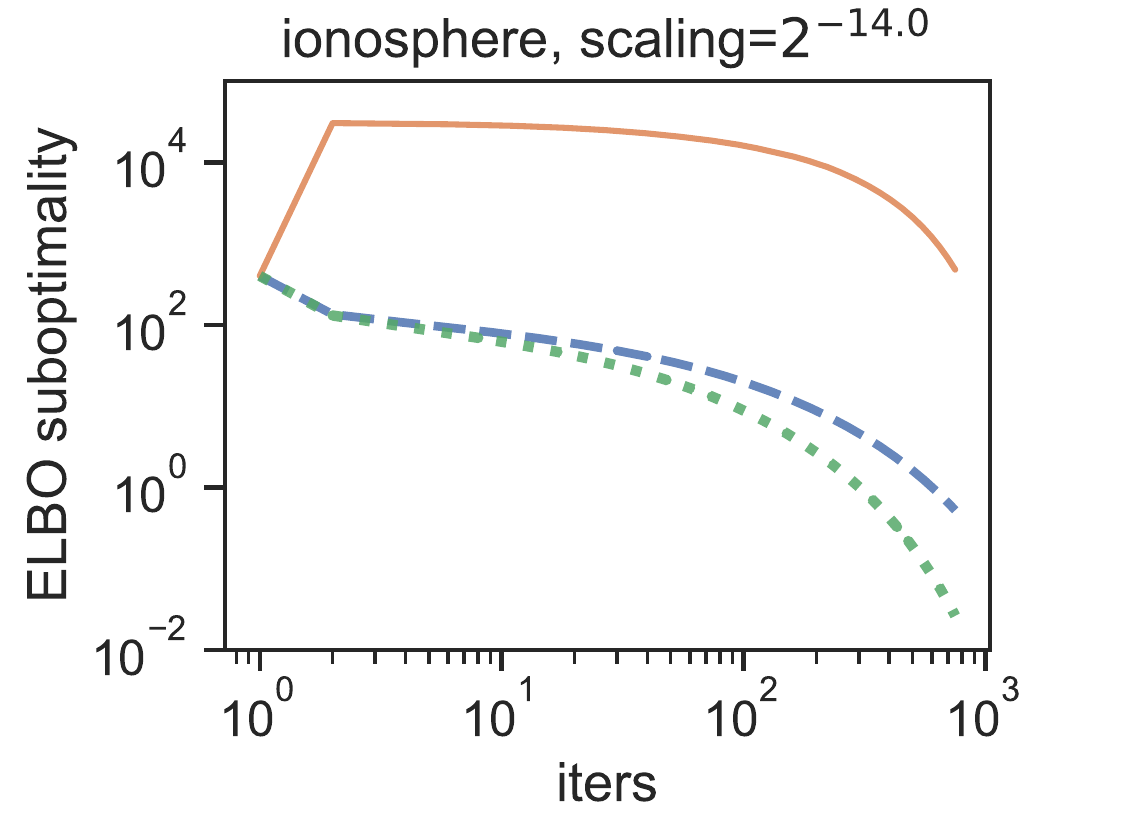}\includegraphics[viewport=60.75bp 0bp 293.625bp 251.8131bp,clip,scale=0.6]{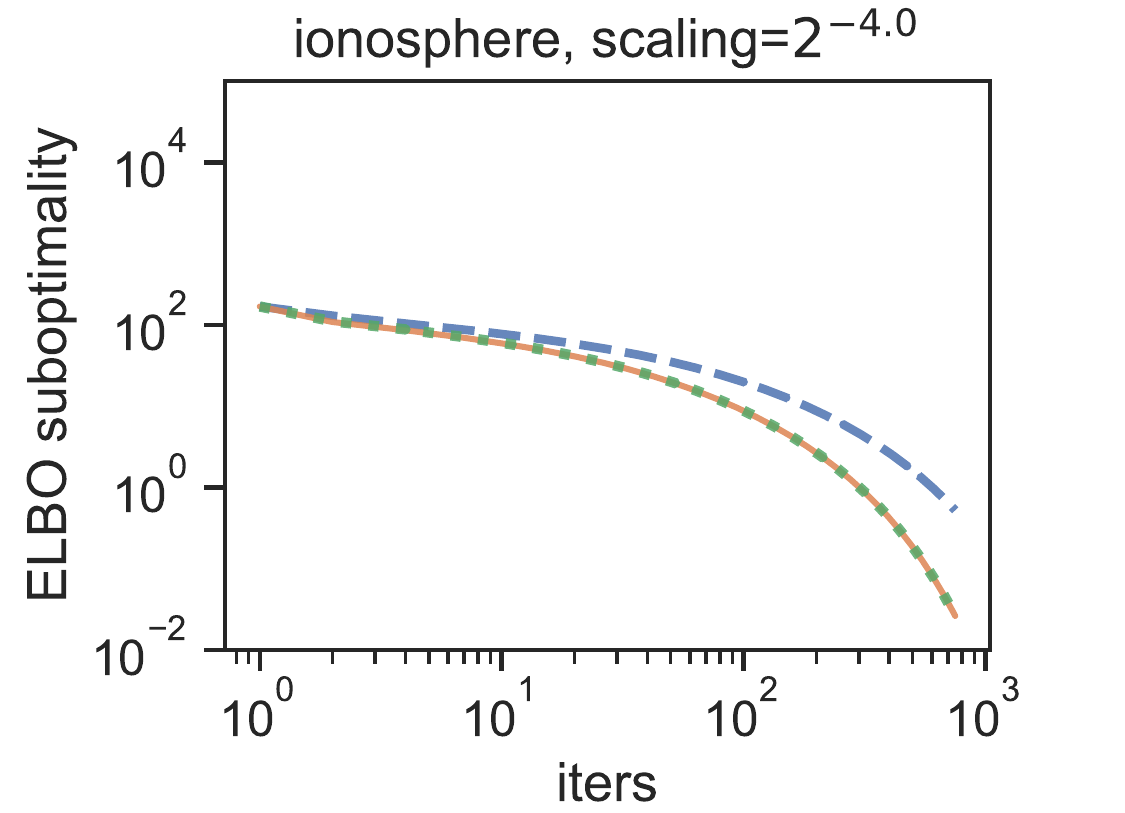}\includegraphics[viewport=60.75bp 0bp 293.625bp 251.8131bp,clip,scale=0.6]{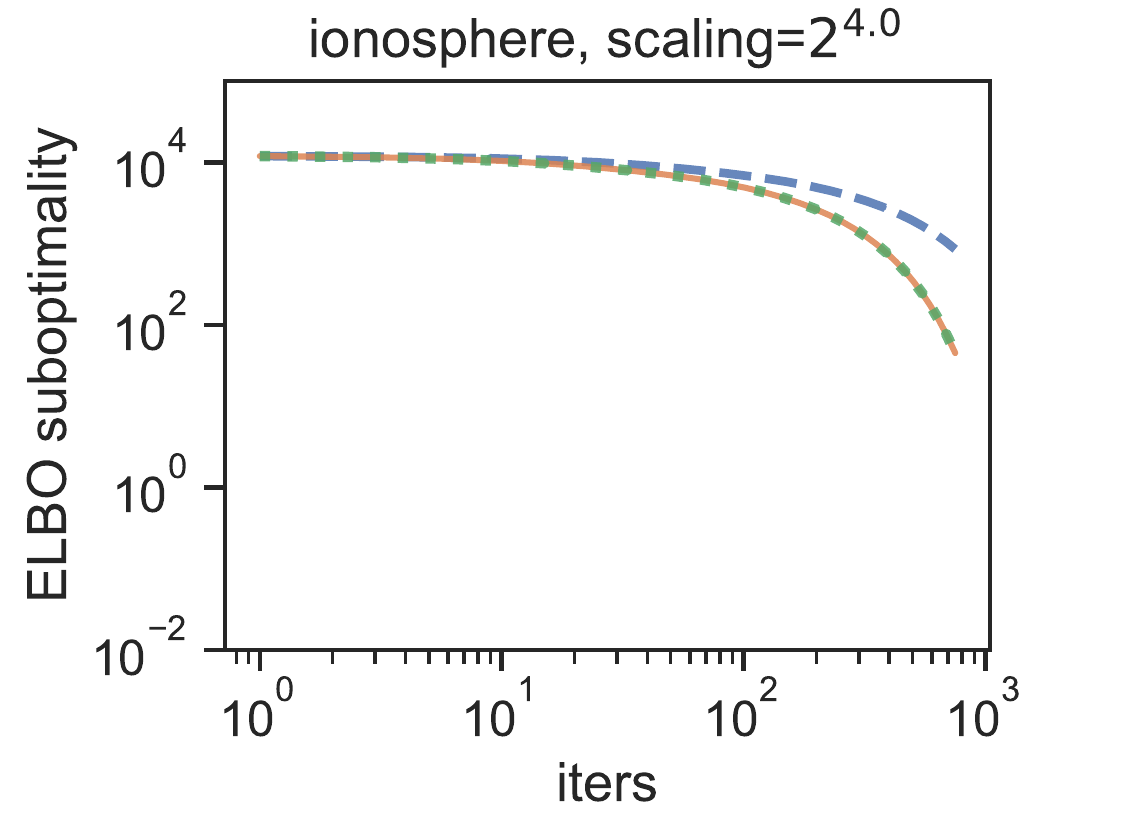}\vspace{-.25cm}
\par\end{centering}
\begin{centering}
\includegraphics[viewport=0bp 0bp 293.625bp 251.8131bp,clip,scale=0.6]{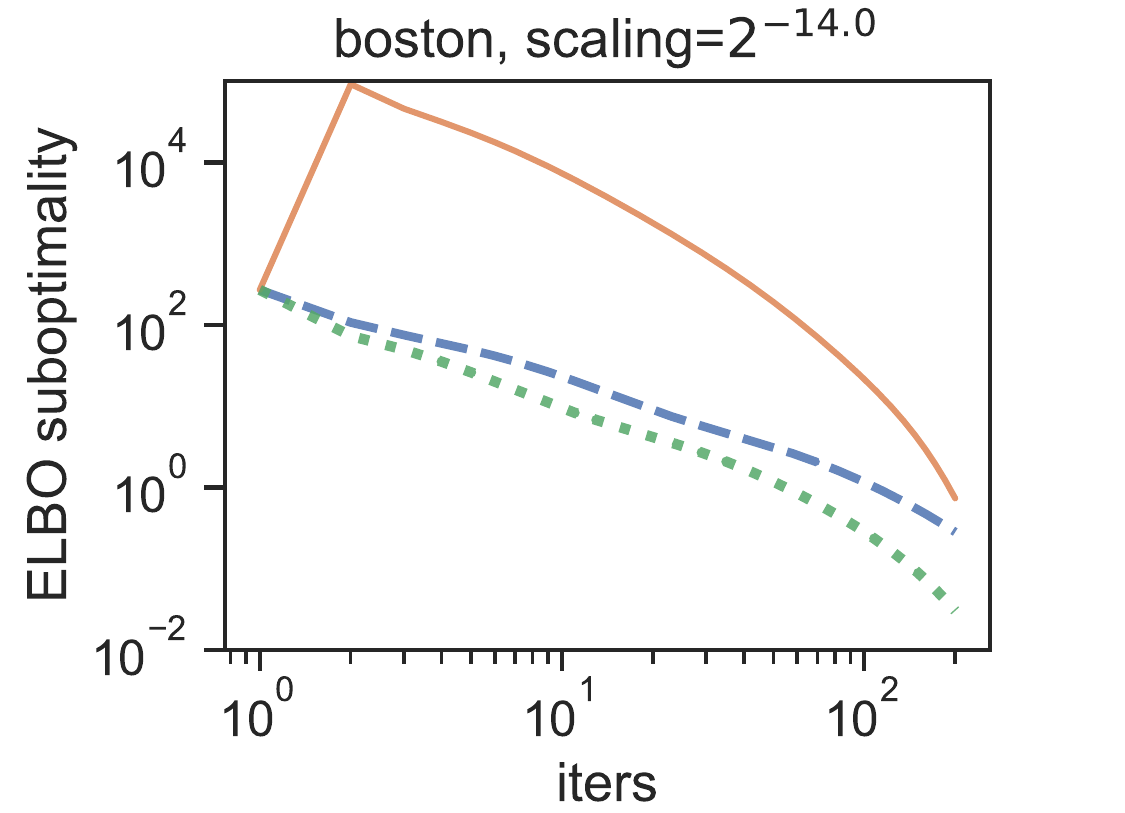}\includegraphics[viewport=60.75bp 0bp 293.625bp 251.8131bp,clip,scale=0.6]{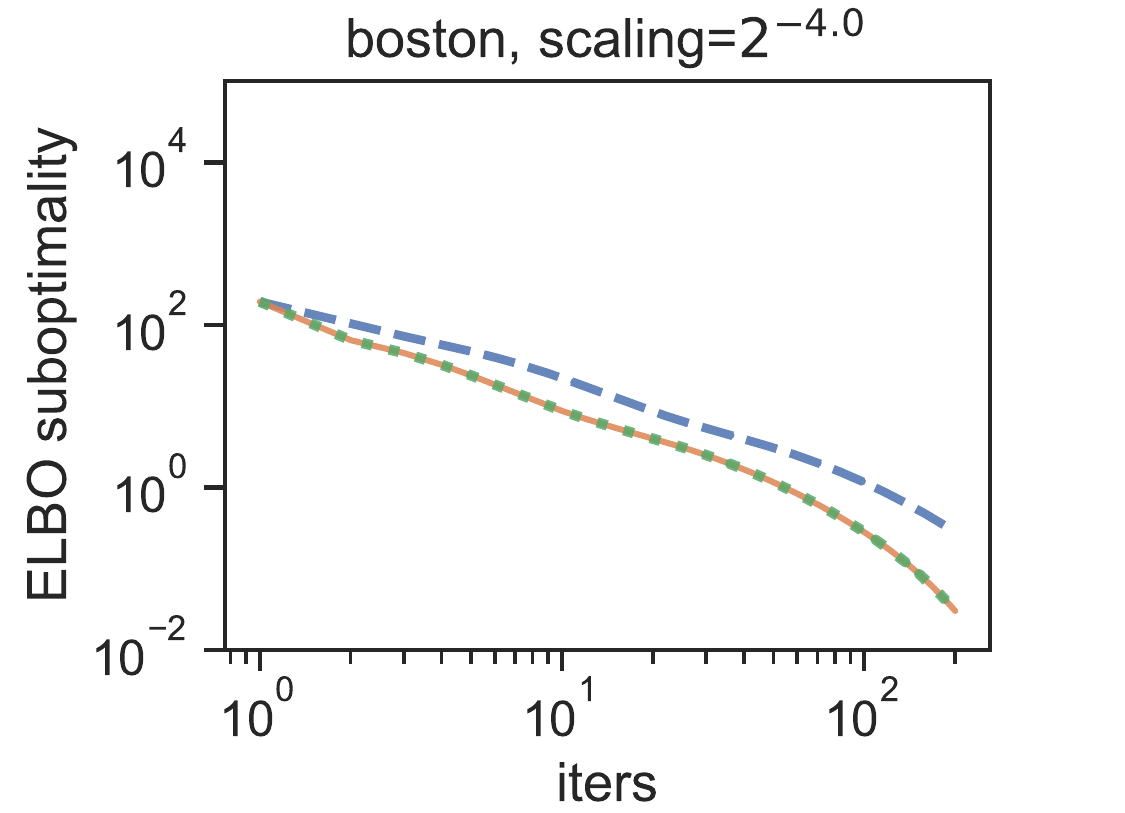}\includegraphics[viewport=60.75bp 0bp 293.625bp 251.8131bp,clip,scale=0.6]{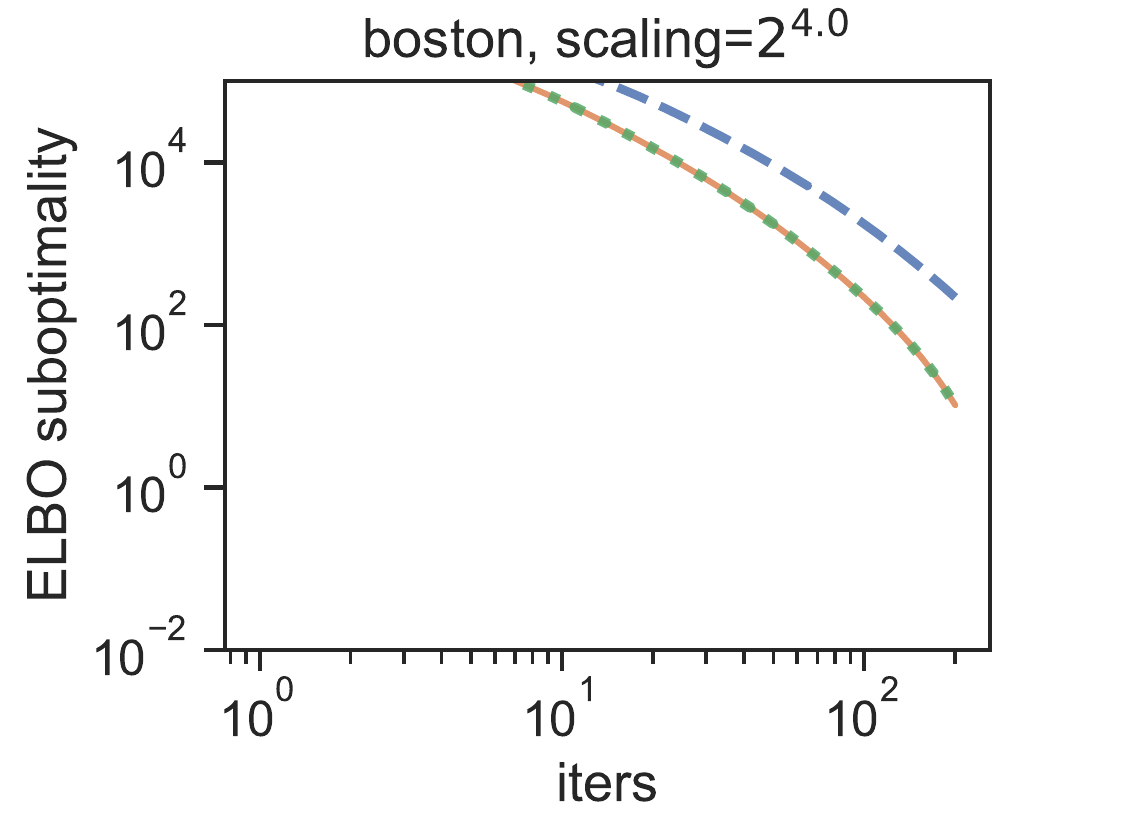}\vspace{-.25cm}
\par\end{centering}
\begin{centering}
\includegraphics[viewport=0bp 0bp 293.625bp 251.8131bp,clip,scale=0.6]{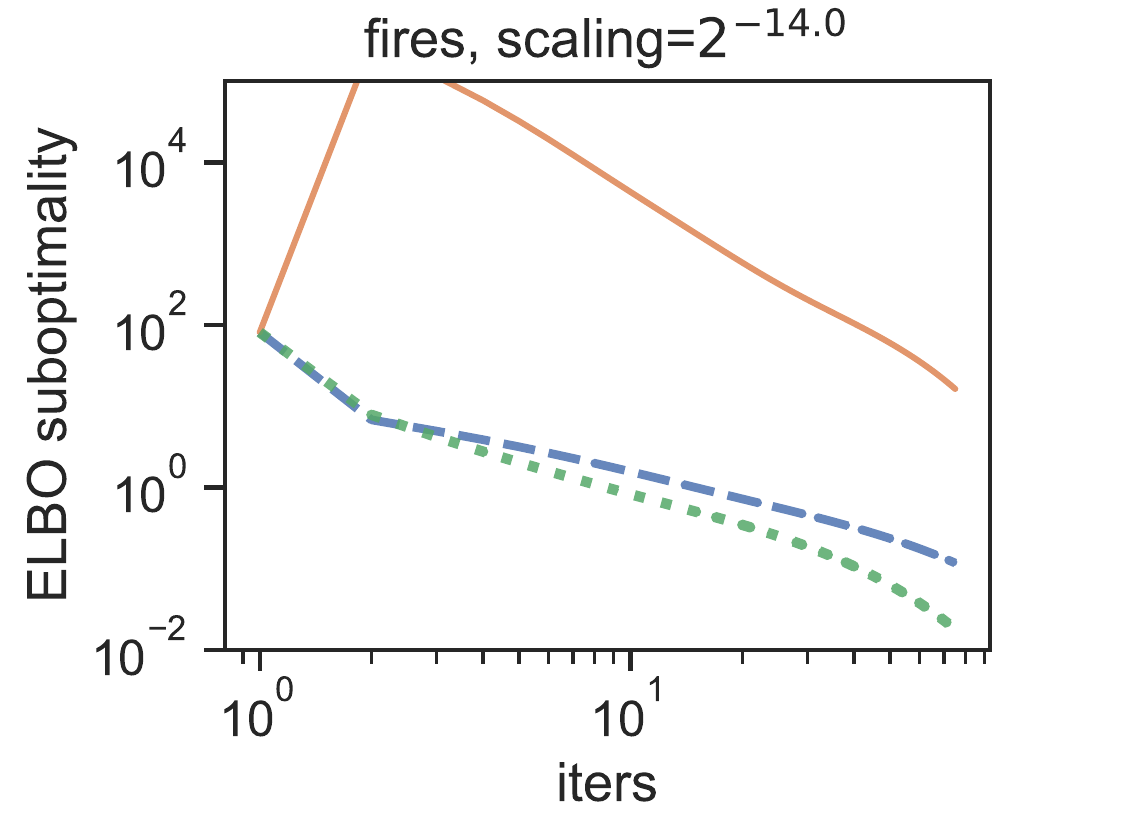}\includegraphics[viewport=60.75bp 0bp 293.625bp 251.8131bp,clip,scale=0.6]{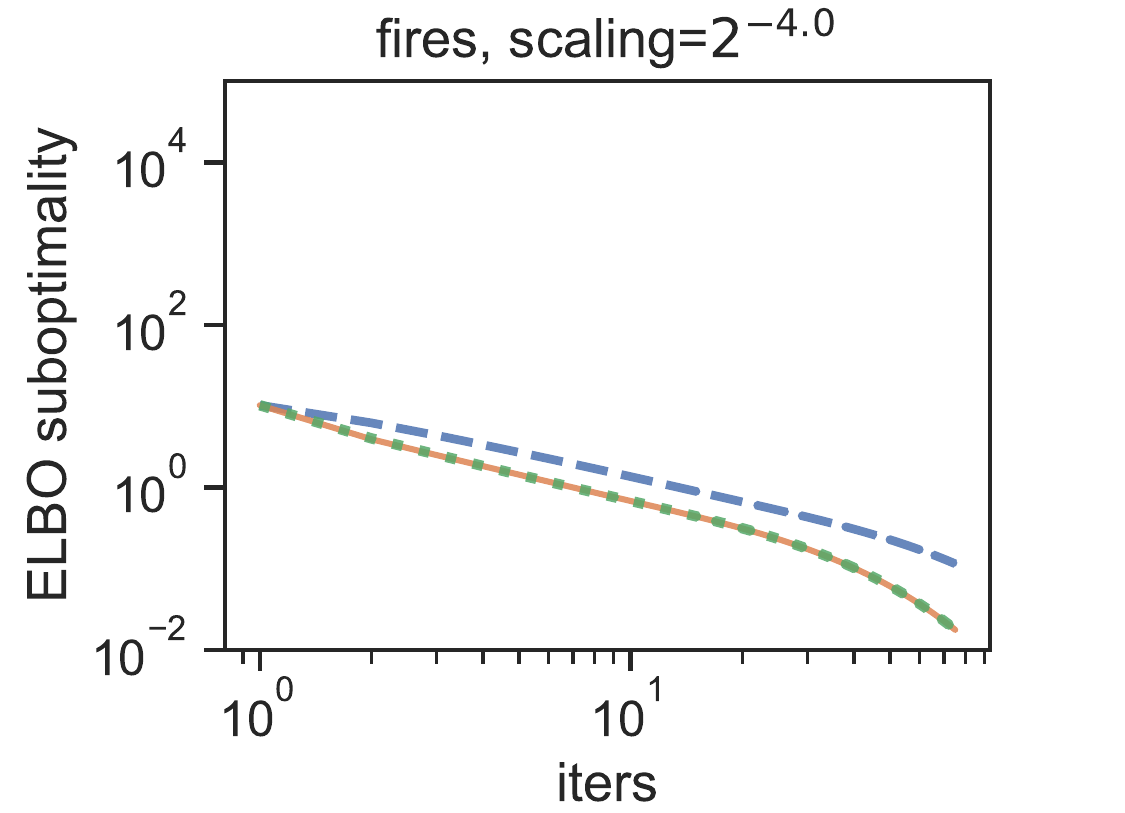}\includegraphics[viewport=60.75bp 0bp 293.625bp 251.8131bp,clip,scale=0.6]{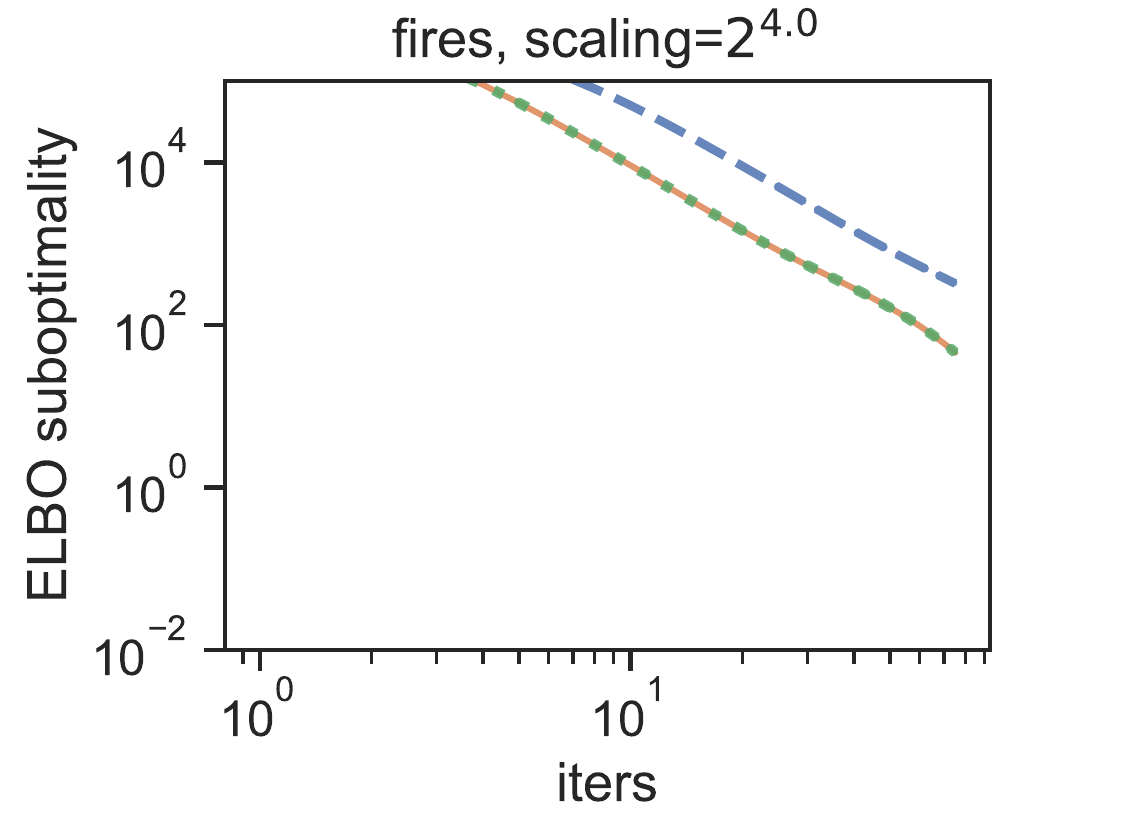}\vspace{-.25cm}
\par\end{centering}
\caption{\textbf{Naive optimization can work well, but is sensitive to initialization.}
Looseness of the objective obtained by naive gradient descent ($\gamma=1/M$),
projected gradient descent ($\gamma=1/\protect\pp{2M}$) and proximal
gradient descent ($\gamma=1/M$). Optimization starts with $\protect\b m=0$
and $C=\rho I$ where $\rho$ is a scaling factor. Initializing $C=0$
is fine for proximal or projected gradient descent, but naive gradient
descent requires careful initialization. Results for other datasets
in \ref{sec:Additional-Demonstration-Plots} (supplement).\label{fig:elbo-conv-aus}}
\end{figure*}

\begin{figure*}[t]

\begin{centering}
\includegraphics[viewport=0bp 50.6055bp 295.65bp 251.8131bp,clip,scale=0.63]{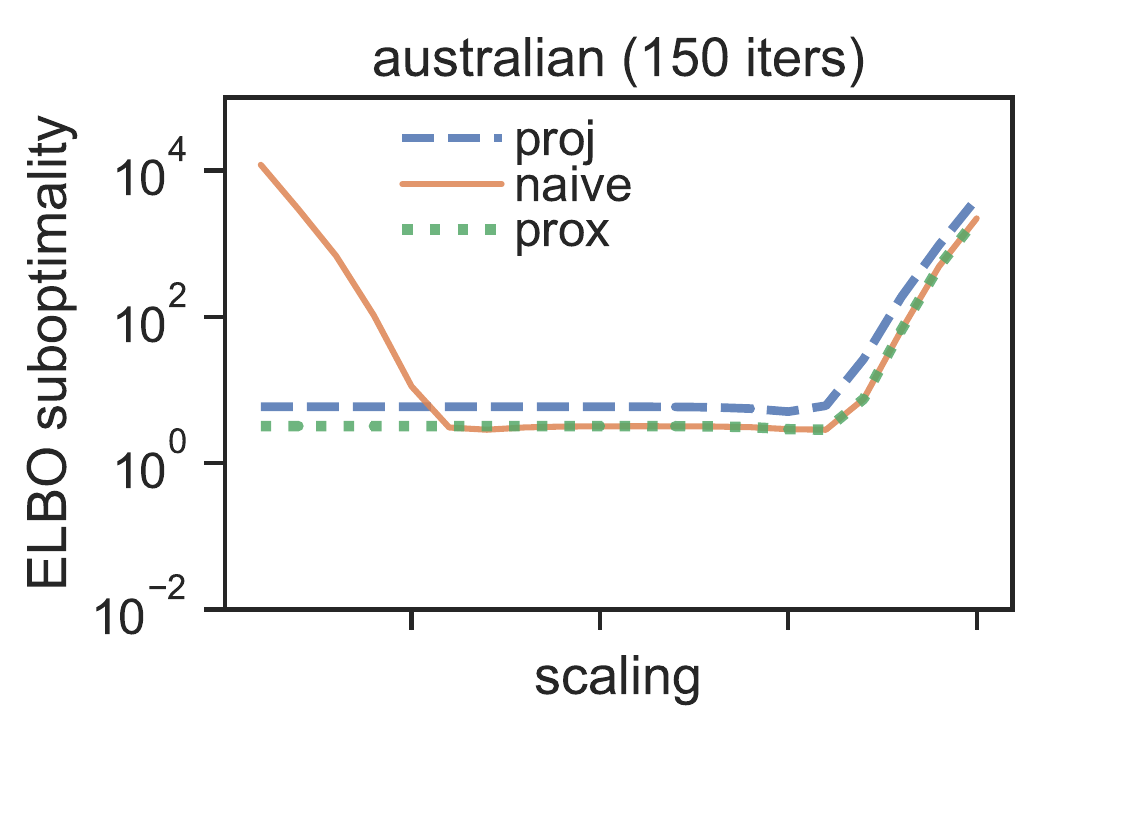}\includegraphics[viewport=60.75bp 50.6055bp 295.65bp 251.8131bp,clip,scale=0.63]{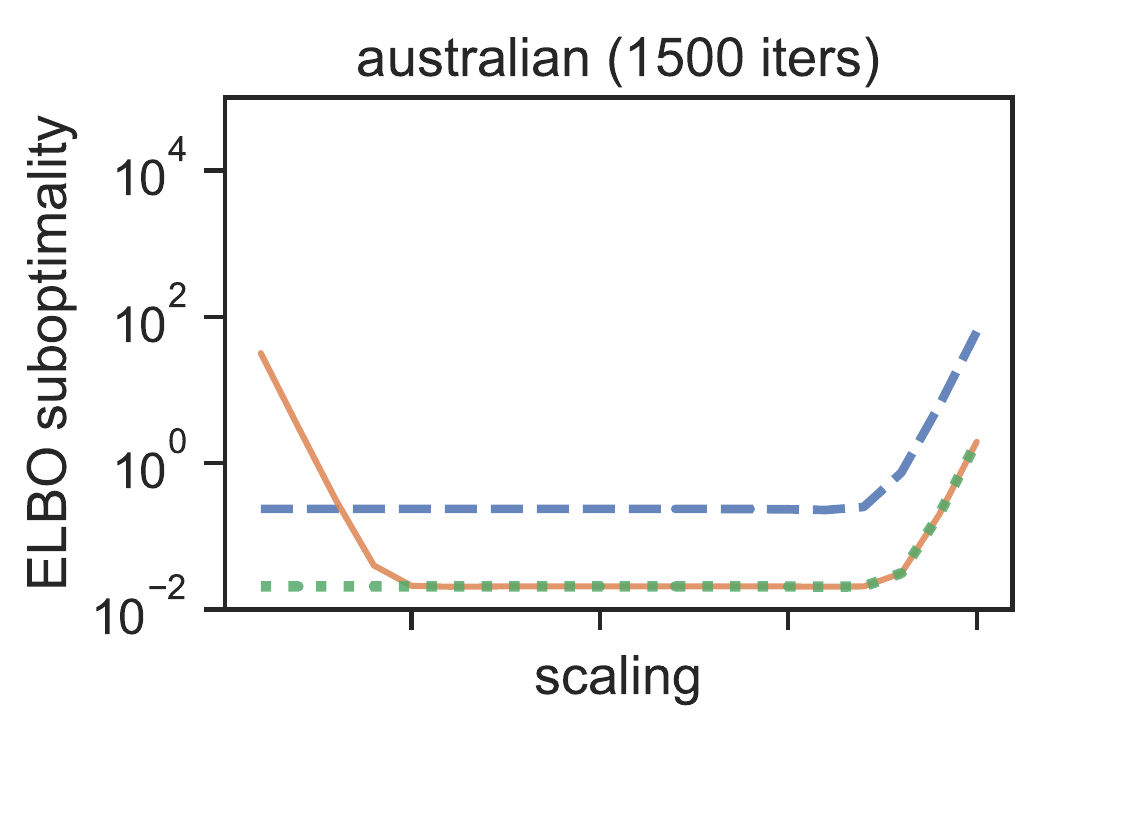}\includegraphics[viewport=60.75bp 50.6055bp 295.65bp 251.8131bp,clip,scale=0.63]{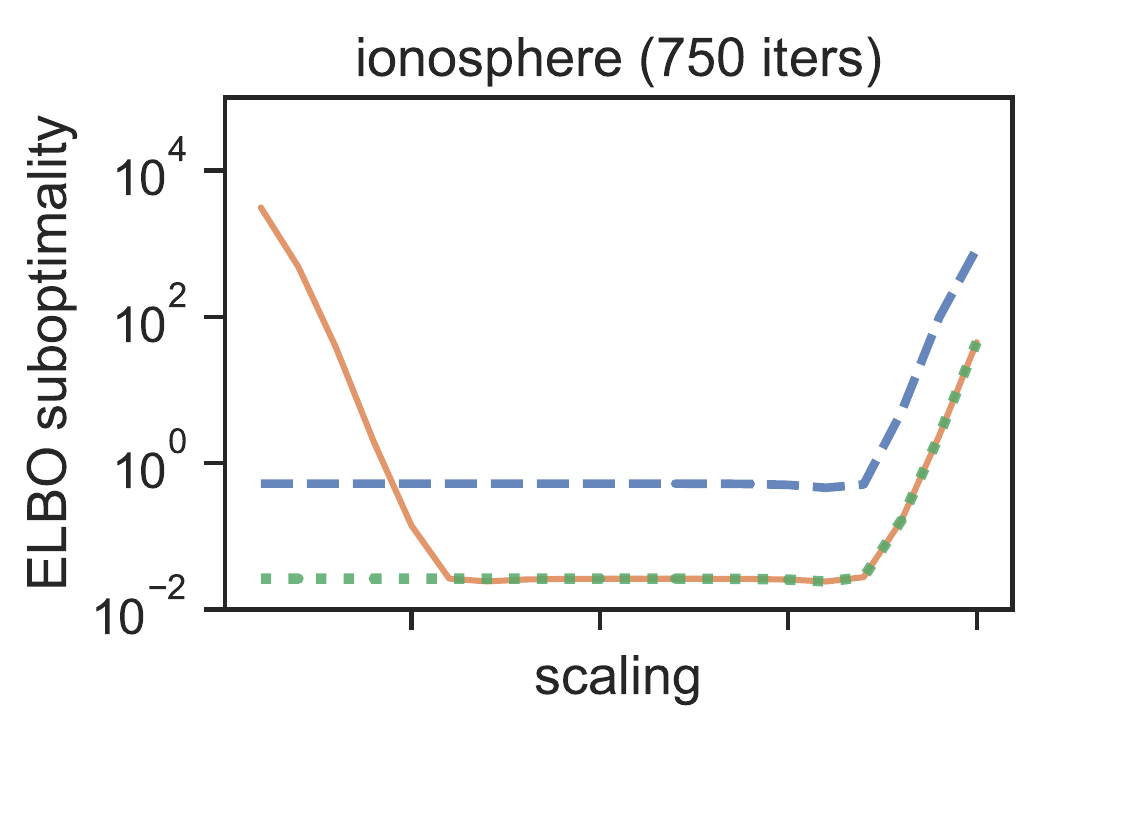}
\par\end{centering}
\begin{centering}
\includegraphics[viewport=0bp 12.14533bp 295.65bp 230.7612bp,clip,scale=0.63]{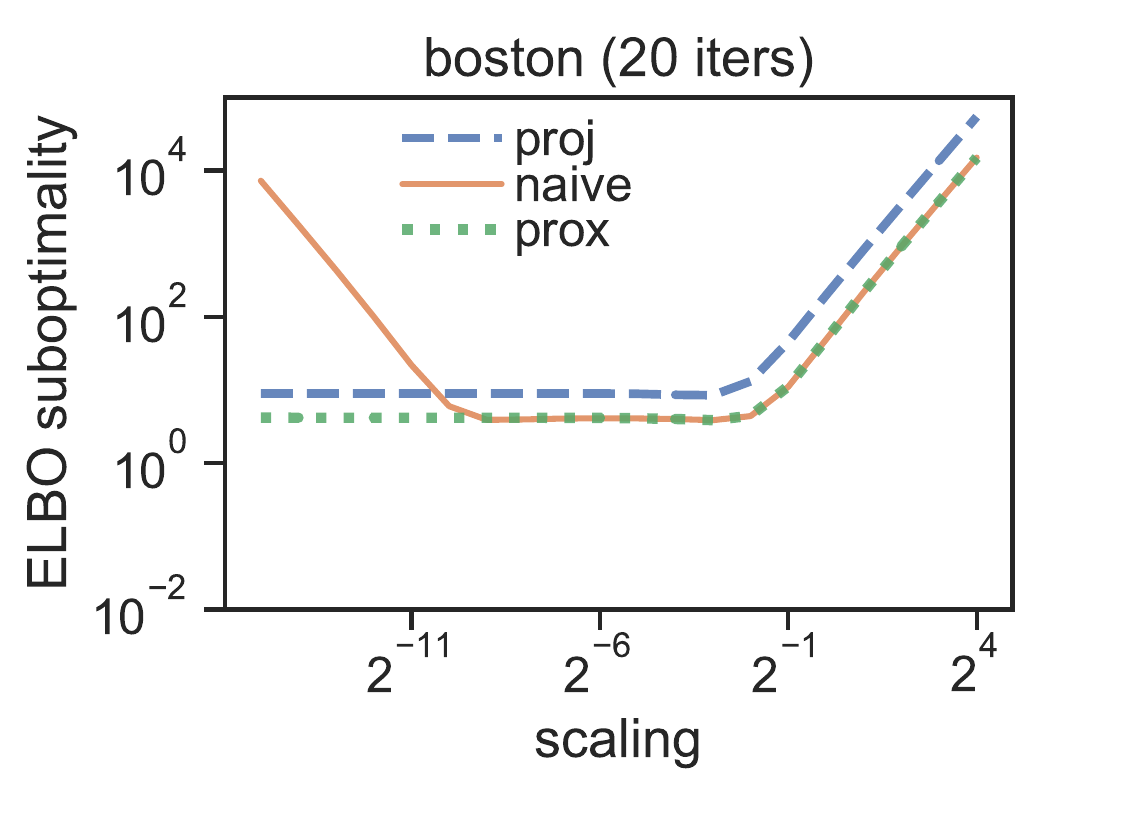}\includegraphics[viewport=60.75bp 12.14533bp 295.65bp 230.7612bp,clip,scale=0.63]{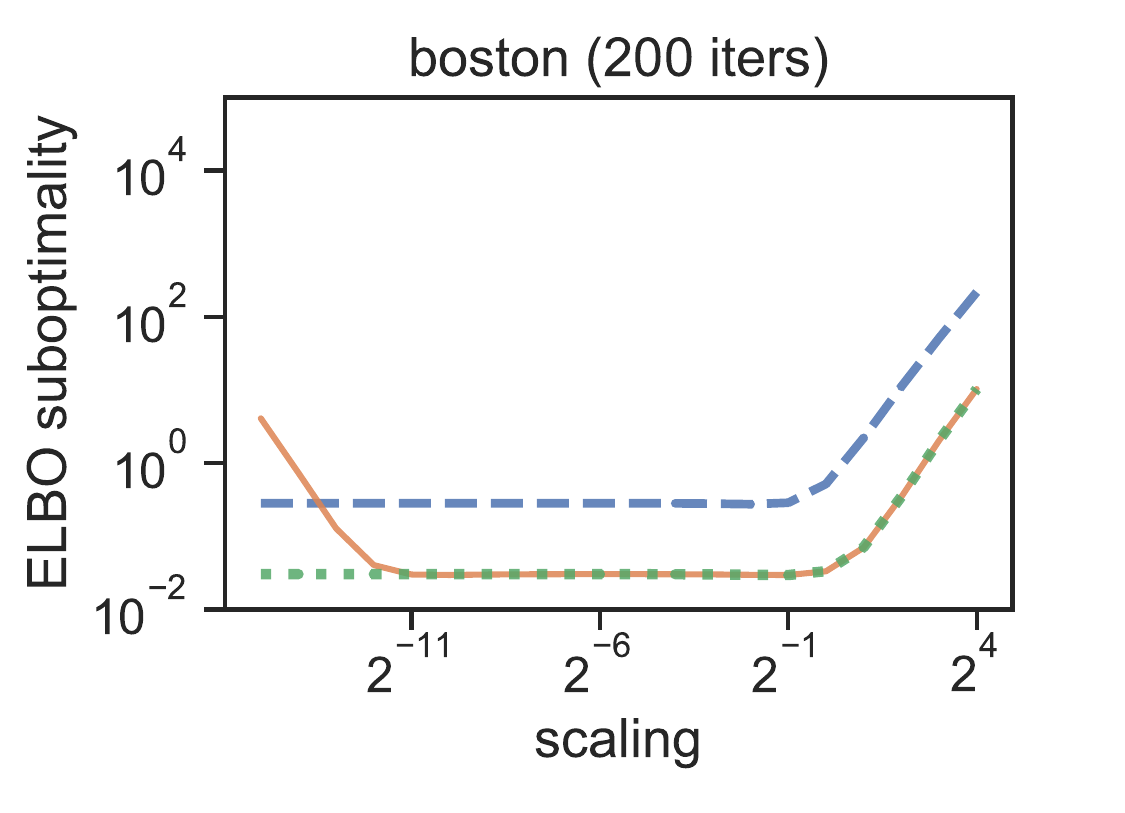}\includegraphics[viewport=60.75bp 12.14533bp 295.65bp 230.7612bp,clip,scale=0.63]{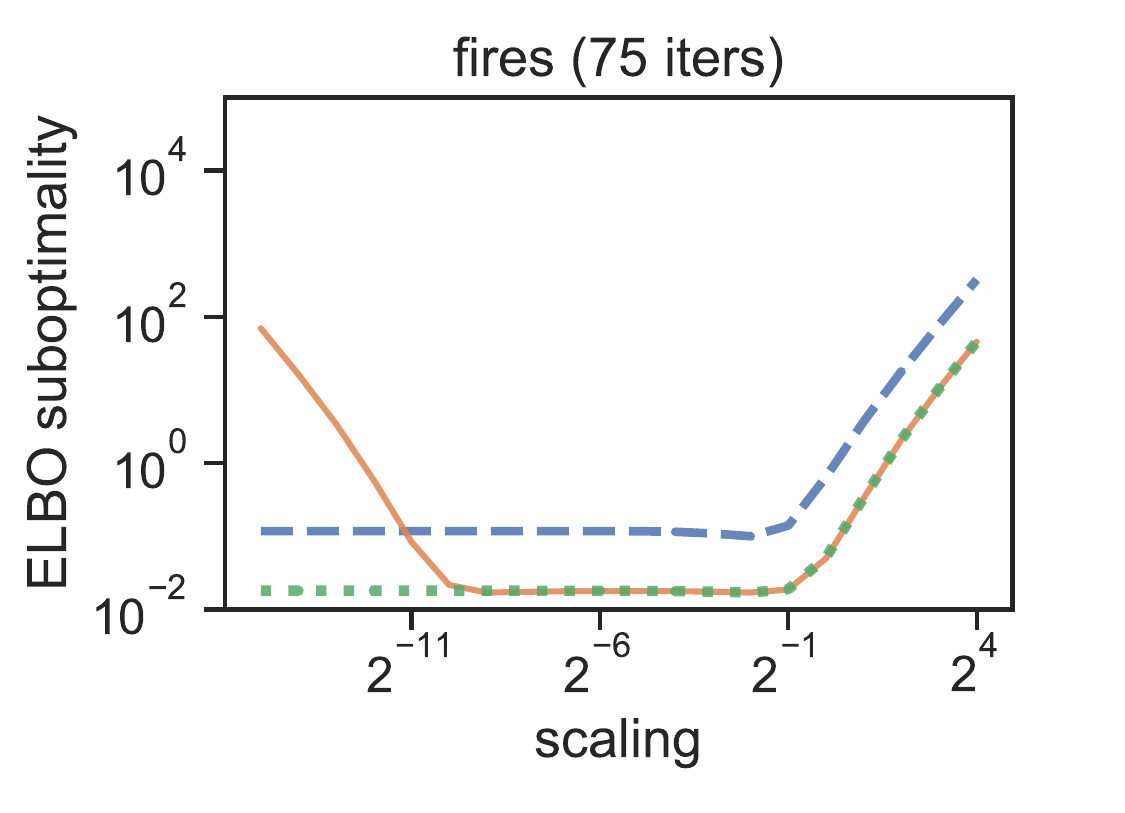}\vspace{-.25cm}
\par\end{centering}
\caption{\textbf{Naive optimization is similar to proximal for large initial
$C$, but worse for small $C$.} Results of optimizing the ELBO with
different scaling factors $\rho$ on four different datasets. The
two right columns show results after enough iterations for proximal
optimization to converge to less than $\approx10^{-1}$. The left
column shows results after $\frac{1}{10}$-th as many iterations.
Proximal optimization starting with $C\approx0$ always performs well.
Projected gradient descent requires more iterations. Naive optimization
can work well, but is not guaranteed and requires careful initialization.\label{fig:demo-final-looseness}}
\end{figure*}
 \ref{thm:gaussprox} shows that if $\w=\pp{\b m,C}$ and $C$
is triangular with a positive diagonal, then $\prox_{\gamma}\pp{\b w}=\pp{\b m,C+\Delta C},$
where $\Delta C$ is diagonal with $\Delta C_{ii}=\frac{1}{2}\pp{\pp{C_{ii}^{2}+4\gamma}^{1/2}-C_{ii}}.$
Intuitively, this has the effect of keeping the diagonal entries away
from 0: If $C_{ii}$ is very small then $\Delta C_{ii}\approx\gamma$
while if $C_{ii}$ is large, $\Delta C_{ii}\approx0.$ The proximal
scheme has two advantages over projection. First, convergence rates
depend on the smoothness constant of the linearized terms, which is
$M$ rather than $2M$.  Second the proximal operator is faster to
compute. $\prox$ takes $\Omega\pp d$ time, while $\mathrm{proj}$
takes $\Omega\pp{d^{3}}$ time, due to the need for a singular value
decomposition.

\section{Demonstration}

To avoid complications related to stochastic gradients, we consider
two settings where $l(\b w)$ and its gradient can be computed (nearly)
exactly. Take a dataset $\pp{\b x_{1},y_{1}},\cdots,\pp{\b x_{N},y_{N}}$
and let $X$ be a matrix with $\x_{n}$ on row $n$ and $\b y$ a
vector of the values $\pp{y_{1},\cdots y_{n}}$. We model $p\pp{\z,\b y|X}=p\pp{\z}\prod_{n=1}^{N}p\pp{y_{n}|\b x_{n},\z}.$
The prior $p\pp{\z}$ is a standard Gaussian. We consider both linear
regression with $p\pp{y_{n}|\b x_{n},\z}=\N\pp{y_{n}\vert\mu=\z^{\top}\b x_{n},\sigma^{2}=1}$
and binary logistic regression with $p\pp{y_{n}|\b x_{n},\z}=\sigma\pp{y_{n}\b x_{n}^{\top}\z}$.
It can be shown that $\log p(\z,\b y|X)$ is $M$-smooth with $M=1+\sigma_{\max}\pp{XX^{\top}}$
for linear regression and $M=1+\frac{1}{4}\sigma_{\max}\pp{XX^{\top}}$
for logistic regression. 

For linear regression data (\texttt{boston}, \texttt{fires}), $l$
has a closed form. For logistic regression (\texttt{australian}, \texttt{ionosphere}),
we compute $l$ via a redution to a set of pre-computed one dimensional
integrals: Observe that for all $\w,$ $\E_{\zr\sim\qw}\log p\pp{y|\x,\zr}=g\pp{y\b x^{\top}\b m,\VV{C^{\top}\b x}_{2}},$
where $g(a,b)=\E_{\r t\sim\N\pp{0,1}}\log\sigma\pp{a+b\r t}.$ By
pre-computing $g$ over a grid of inputs $\pp{a,b}$ we can quickly
evaluate $l\pp{\b w}$ and its gradient via spline interpolation.

We initialize $\b m$ to zero and $C=\rho I$ for a range of scaling
constants $\rho$. \ref{fig:elbo-conv-aus} shows example results
on two datasets. For projected or proximal gradient descent, simply
initializing $C=0$ is fine. For naive gradient descent, initialization
is subtle, since too small a $\rho$ leads to an enormous entropy
gradient (and thus ``jumps''), while for large $\rho$, all algorithms
converge slowly.

\ref{fig:demo-final-looseness} systematically varies $\rho$ on various
datasets. There are two seemingly strange behaviors for naive gradient
descent. First, it performs very similarly to proximal gradient descent
for large $\rho$. To understand this, note that when $C$ is large,
the entropy is locally nearly linear, and so a proximal step is similar
to a naive step. Second, there is a near-symmetry between small and
large $\rho$. Here, observe that if naive gradient descent is initialized
with \emph{small} $\rho$, the huge gradient of the entropy term will
send the parameters to a correspondingly \emph{large} $C$ in the
second iteration. In these examples, a carefully chosen $\rho$ performs
well, though this may be hard to find and there is no guarantee in
general.

These results confirm the theory developed above. First, we see that
proximal gradient descent always converges with a step-size of $\gamma=1/M$.
Thus suggests that $M$ as derived in \ref{thm:lsmoothness} is correct.
Second, projected gradient descent always converges with a step-size
of $1/\pp{2M}.$ This suggests that \ref{thm:projectedsummary} is
correct to assert that the optimal parameters $\w^{*}\mathcal{\in\mathcal{W}}_{M}$
and that the ELBO is $2M$-smooth over $\mathcal{W}_{M}$. Finally,
naive gradient ``descent'' truly can ascend when the parameters
$\w$ start in the region where $h\pp{\w}$ is non-smooth, but behaves
similarly to proximal gradient descent otherwise, confirming the discussion
in \ref{sec:Convergence-Considerations}.

\section{Discussion}

The primary contribution of this paper is to show that for VI with
location-scale families, smoothness of $\log p\pp{\z,\x}$ implies
smoothness of the free energy. This fills a theoretical gap relevant
to many existing works \citep{Khan_2015_KullbackLeiblerproximalvariational,Khan_2016_FasterStochasticVariational,Khan_2017_ConjugateComputationVariationalInference,Regier_2017_FastBlackboxVariational,Fan_2016_TriplyStochasticVariational,Buchholz_2018_QuasiMonteCarloVariational,Mohamad_2018_AsynchronousStochasticVariational,Alquier_2017_Concentrationtemperedposteriors}.
We also showed that result gives parameter-space guarantees on the
location of the optimal parameters. As a minor contribution, we also
give analogous guarantees for strong-convexity. Convergence guarantees
for gradient-based optimization require \emph{either} smoothness or
convexity. Thus, at a very high level, this paper shows that if $\log p\pp{\z,\x}$
has the structure needed to guarantee finding $\z^{*}=\argmax\log p(\z,\x)$,
then it \emph{also} has the structure to guarantee that VI with a
location-scale family will converge.

While motivated by VI, the main results for smoothness (\ref{thm:lsmoothness})
and (strong) convexity (\ref{thm:convexity}) are general properties
of expectations parameterized by location-scale families, and so may
be of independent interest.

There are several issues to consider when gauging the immediate practical
impact of this work. Most importantly, the smoothness guarantee in
this paper was already \emph{true}, even if was not \emph{known}.
Thus, real-world black-box VI methods already benefit from it. Second,
$\nabla l\pp{\w}$ typically must be \emph{estimated}, and convergence
guarantees need bounds on the fluctuations of the estimator. Finding
better gradient estimators (and bounds) is an active research topic
\citep{Domke_2019_ProvableGradientVariancea}. Finally, the theory
for projected and proximal gradient optimization is still evolving,
particularly for non-convex objectives. It seems to be an open question
if the ``minibatches'' of gradient estimates that current bounds
\citep{Ghadimi_2016_Minibatchstochasticapproximation} use are truly
required.

A result conceptually related to this paper's smoothness guarantee
is used in variational boosting \citep{Guo_2016_BoostingVariationalInference,Locatello_2018_BoostingVariationalInference}:
The functional gradient for non-parametric $q$ is smooth if $q$
is bounded below by a positive constant. While similar in spirit,
this does not address traditional parametric VI.

\bibliographystyle{plainnat}
\bibliography{/Users/j/Dropbox/Papers/Bibliography/justindomke_zotero_betterbibtex2}

\begin{thebibliography}{42}
\providecommand{\natexlab}[1]{#1}
\providecommand{\url}[1]{\texttt{#1}}
\expandafter\ifx\csname urlstyle\endcsname\relax
  \providecommand{\doi}[1]{doi: #1}\else
  \providecommand{\doi}{doi: \begingroup \urlstyle{rm}\Url}\fi

\bibitem[Alquier and
  Ridgway(2017)]{Alquier_2017_Concentrationtemperedposteriors}
Pierre Alquier and James Ridgway.
\newblock Concentration of tempered posteriors and of their variational
  approximations.
\newblock \emph{arXiv:1706.09293 [cs, math, stat]}, 2017.

\bibitem[Beck and
  Teboulle(2009)]{Beck_2009_Gradientbasedalgorithmsapplications}
Amir Beck and Marc Teboulle.
\newblock Gradient-based algorithms with applications to signal recovery.
\newblock \emph{Convex optimization in signal processing and communications},
  pages 42--88, 2009.

\bibitem[Blei et~al.(2017)Blei, Kucukelbir, and
  McAuliffe]{Blei_2017_VariationalInferenceReview}
David~M. Blei, Alp Kucukelbir, and Jon~D. McAuliffe.
\newblock Variational {{Inference}}: {{A Review}} for {{Statisticians}}.
\newblock \emph{Journal of the American Statistical Association}, 112\penalty0
  (518):\penalty0 859--877, 2017.

\bibitem[Bottou et~al.(2016)Bottou, Curtis, and
  Nocedal]{Bottou_2016_OptimizationMethodsLargeScale}
L{\'e}on Bottou, Frank~E. Curtis, and Jorge Nocedal.
\newblock Optimization {{Methods}} for {{Large}}-{{Scale Machine Learning}}.
\newblock \emph{arXiv:1606.04838 [cs, math, stat]}, 2016.

\bibitem[Bubeck(2015)]{Bubeck_2015_ConvexOptimizationAlgorithms}
S{\'e}bastien Bubeck.
\newblock Convex {{Optimization}}: {{Algorithms}} and {{Complexity}}.
\newblock \emph{Foundations and Trends in Machine Learning}, 8\penalty0
  (3-4):\penalty0 231--358, 2015.

\bibitem[Buchholz et~al.(2018)Buchholz, Wenzel, and
  Mandt]{Buchholz_2018_QuasiMonteCarloVariational}
Alexander Buchholz, Florian Wenzel, and Stephan Mandt.
\newblock Quasi-{{Monte Carlo Variational Inference}}.
\newblock In \emph{{{ICML}}}, 2018.

\bibitem[Challis and
  Barber(2013)]{Challis_2013_GaussianKullbackLeiblerApproximate}
Edward Challis and David Barber.
\newblock Gaussian {{Kullback}}-{{Leibler Approximate Inference}}.
\newblock \emph{Journal of Machine Learning Research}, 14:\penalty0 2239--2286,
  2013.

\bibitem[Cover and Thomas(2006)]{Cover_2006_Elementsinformationtheory}
T.~M. Cover and Joy~A. Thomas.
\newblock \emph{Elements of Information Theory}.
\newblock {Wiley-Interscience}, {Hoboken, N.J}, 2nd ed edition, 2006.

\bibitem[Domke(2019)]{Domke_2019_ProvableGradientVariancea}
Justin Domke.
\newblock Provable {{Gradient Variance Guarantees}} for {{Black}}-{{Box
  Variational Inference}}.
\newblock In \emph{{{NeurIPS}}}, 2019.

\bibitem[Fan et~al.(2016)Fan, Zhang, Henao, and
  Heller]{Fan_2016_TriplyStochasticVariational}
K.~Fan, Y.~Zhang, R.~Henao, and K.~Heller.
\newblock Triply {{Stochastic Variational Inference}} for {{Non}}-linear {{Beta
  Process Factor Analysis}}.
\newblock In \emph{2016 {{IEEE}} 16th {{International Conference}} on {{Data
  Mining}} ({{ICDM}})}, pages 121--130, 2016.

\bibitem[Fan et~al.(2015)Fan, Wang, Beck, Kwok, and
  Heller]{Fan_2015_FastSecondOrderStochastic}
Kai Fan, Ziteng Wang, Jeff Beck, James Kwok, and Katherine Heller.
\newblock Fast {{Second}}-{{Order Stochastic Backpropagation}} for
  {{Variational Inference}}.
\newblock In \emph{{{NeurIPS}}}, 2015.

\bibitem[Ge et~al.(2015)Ge, Huang, Jin, and Yuan]{Ge_2015_EscapingSaddlePoints}
Rong Ge, Furong Huang, Chi Jin, and Yang Yuan.
\newblock Escaping {{From Saddle Points}} \textendash{} {{Online Stochastic
  Gradient}} for {{Tensor Decomposition}}.
\newblock In \emph{{{COLT}}}, page~46, 2015.

\bibitem[Geyer(2011)]{Geyer_2011_Statistics5101Lecture}
Charles~J Geyer.
\newblock Statistics 5101 {{Lecture Slides}}, {{Deck}} 5.
\newblock http://www.stat.umn.edu/geyer/f11/5101/slides/s5.pdf, 2011.

\bibitem[Ghadimi and Lan(2013)]{Ghadimi_2013_StochasticFirstZerothOrder}
S.~Ghadimi and G.~Lan.
\newblock Stochastic {{First}}- and {{Zeroth}}-{{Order Methods}} for
  {{Nonconvex Stochastic Programming}}.
\newblock \emph{SIAM J. Optim.}, 23\penalty0 (4):\penalty0 2341--2368, 2013.

\bibitem[Ghadimi and Lan(2012)]{Ghadimi_2012_OptimalStochasticApproximation}
Saeed Ghadimi and Guanghui Lan.
\newblock Optimal {{Stochastic Approximation Algorithms}} for {{Strongly Convex
  Stochastic Composite Optimization I}}: {{A Generic Algorithmic Framework}}.
\newblock \emph{SIAM Journal on Optimization}, 22\penalty0 (4):\penalty0
  1469--1492, 2012.

\bibitem[Ghadimi et~al.(2016)Ghadimi, Lan, and
  Zhang]{Ghadimi_2016_Minibatchstochasticapproximation}
Saeed Ghadimi, Guanghui Lan, and Hongchao Zhang.
\newblock Mini-batch stochastic approximation methods for nonconvex stochastic
  composite optimization.
\newblock \emph{Mathematical Programming}, 155\penalty0 (1-2):\penalty0
  267--305, 2016.

\bibitem[Ghahramani and
  Beal(2001)]{Ghahramani_2001_PropagationAlgorithmsVariational}
Zoubin Ghahramani and Matthew Beal.
\newblock Propagation {{Algorithms}} for {{Variational Bayesian Learning}}.
\newblock In \emph{{{NeurIPS}}}, 2001.

\bibitem[Guo et~al.(2016)Guo, Wang, Fan, Broderick, and
  Dunson]{Guo_2016_BoostingVariationalInference}
Fangjian Guo, Xiangyu Wang, Kai Fan, Tamara Broderick, and David~B. Dunson.
\newblock Boosting {{Variational Inference}}.
\newblock \emph{arXiv:1611.05559 [cs, stat]}, 2016.

\bibitem[Hasegawa and
  Karamakar()]{Hasegawa__GeneralizationsCauchySchwarzProbability}
R~Hasegawa and B~Karamakar.
\newblock Generalizations of {{Cauchy}}-{{Schwarz}} in {{Probability Theory}}.

\bibitem[Khan et~al.(2015)Khan, Baqu{\'e}, Fleuret, and
  Fua]{Khan_2015_KullbackLeiblerproximalvariational}
Mohammad~E. Khan, Pierre Baqu{\'e}, Fran{\c c}ois Fleuret, and Pascal Fua.
\newblock Kullback-{{Leibler}} proximal variational inference.
\newblock In \emph{{{NeurIPS}}}, pages 3402--3410, 2015.

\bibitem[Khan and
  Lin(2017)]{Khan_2017_ConjugateComputationVariationalInference}
Mohammad~Emtiyaz Khan and Wu~Lin.
\newblock Conjugate-{{Computation Variational Inference}}: {{Converting
  Variational Inference}} in {{Non}}-{{Conjugate Models}} to {{Inferences}} in
  {{Conjugate Models}}.
\newblock In \emph{{{AISTATS}}}, volume~54 of \emph{Proceedings of {{Machine
  Learning Research}}}, pages 878--887. {PMLR}, 2017.

\bibitem[Khan et~al.(2016)Khan, Babanezhad, Lin, Schmidt, and
  Sugiyama]{Khan_2016_FasterStochasticVariational}
Mohammad~Emtiyaz Khan, Reza Babanezhad, Wu~Lin, Mark Schmidt, and Masashi
  Sugiyama.
\newblock Faster {{Stochastic Variational Inference}} using
  {{Proximal}}-{{Gradient Methods}} with {{General Divergence Functions}}.
\newblock In \emph{{{UAI}}}, 2016.

\bibitem[Kingma and Welling(2014)]{Kingma_2014_Autoencodingvariationalbayes}
Diederik~P. Kingma and Max Welling.
\newblock Auto-encoding variational bayes.
\newblock In \emph{{{ICLR}}}, 2014.

\bibitem[Kucukelbir et~al.(2017)Kucukelbir, Tran, Ranganath, Gelman, and
  Blei]{Kucukelbir_2017_AutomaticDifferentiationVariational}
Alp Kucukelbir, Dustin Tran, Rajesh Ranganath, Andrew Gelman, and David~M.
  Blei.
\newblock Automatic {{Differentiation Variational Inference}}.
\newblock \emph{Journal of Machine Learning Research}, 18\penalty0
  (14):\penalty0 1--45, 2017.

\bibitem[Lee et~al.(2016)Lee, Simchowitz, Jordan, and
  Recht]{Lee_2016_GradientDescentOnly}
Jason~D Lee, Max Simchowitz, Michael~I Jordan, and Benjamin Recht.
\newblock Gradient {{Descent Only Converges}} to {{Minimizers}}.
\newblock In \emph{{{COLT}}}, page~12, 2016.

\bibitem[Locatello et~al.(2018)Locatello, Khanna, Ghosh, and
  Ratsch]{Locatello_2018_BoostingVariationalInference}
Francesco Locatello, Rajiv Khanna, Joydeep Ghosh, and Gunnar Ratsch.
\newblock Boosting {{Variational Inference}}: An {{Optimization Perspective}}.
\newblock In \emph{{{AISTATS}}}, pages 464--472, 2018.

\bibitem[Mohamad et~al.(2018)Mohamad, Bouchachia, and
  {Sayed-Mouchaweh}]{Mohamad_2018_AsynchronousStochasticVariational}
Saad Mohamad, Abdelhamid Bouchachia, and Moamar {Sayed-Mouchaweh}.
\newblock Asynchronous {{Stochastic Variational Inference}}.
\newblock \emph{arXiv:1801.04289 [cs, stat]}, 2018.

\bibitem[Nesterov(2014)]{Nesterov_2014_Introductorylecturesconvex}
Yurii Nesterov.
\newblock \emph{Introductory Lectures on Convex Optimization: A Basic Course.}
\newblock {Springer}, {Place of publication not identified}, 2014.
\newblock OCLC: 878109549.

\bibitem[Parikh(2014)]{Parikh_2014_ProximalAlgorithms}
Neal Parikh.
\newblock Proximal {{Algorithms}}.
\newblock \emph{Foundations and Trends\textregistered{} in Optimization},
  1\penalty0 (3):\penalty0 127--239, 2014.

\bibitem[Rakhlin et~al.(2012)Rakhlin, Shamir, and
  Sridharan]{Rakhlin_2012_Makinggradientdescent}
Alexander Rakhlin, Ohad Shamir, and Karthik Sridharan.
\newblock Making gradient descent optimal for strongly convex stochastic
  optimization.
\newblock In \emph{{{ICML}}}, pages 449--456, 2012.

\bibitem[Ranganath et~al.(2014)Ranganath, Gerrish, and
  Blei]{Ranganath_2014_BlackBoxVariational}
Rajesh Ranganath, Sean Gerrish, and David~M. Blei.
\newblock Black {{Box Variational Inference}}.
\newblock In \emph{{{AISTATS}}}, 2014.

\bibitem[Regier et~al.(2017{\natexlab{a}})Regier, Jordan, and
  McAuliffe]{Regier_2017_FastBlackboxVariational}
Jeffrey Regier, Michael~I Jordan, and Jon McAuliffe.
\newblock Fast {{Black}}-box {{Variational Inference}} through {{Stochastic
  Trust}}-{{Region Optimization}}.
\newblock In \emph{{{NeurIPS}}}, pages 2399--2408. {Curran Associates, Inc.},
  2017{\natexlab{a}}.

\bibitem[Regier et~al.(2017{\natexlab{b}})Regier, Jordan, and
  McAuliffe]{Regier_2017_FastBlackboxVariationala}
Jeffrey Regier, Michael~I Jordan, and Jon McAuliffe.
\newblock Fast {{Black}}-box {{Variational Inference}} through {{Stochastic
  Trust}}-{{Region Optimization}}.
\newblock In \emph{{{NeurIPS}}}, page~10, 2017{\natexlab{b}}.

\bibitem[Rezende et~al.(2014)Rezende, Mohamed, and
  Wierstra]{Rezende_2014_StochasticBackpropagationApproximate}
Danilo~Jimenez Rezende, Shakir Mohamed, and Daan Wierstra.
\newblock Stochastic {{Backpropagation}} and {{Approximate Inference}} in
  {{Deep Generative Models}}.
\newblock In \emph{{{ICML}}}, 2014.

\bibitem[Rooin and Bayat(2012)]{Rooin_2012_EquivalencyCauchySchwarzBessel}
Jamal Rooin and Morteza Bayat.
\newblock Equivalency of {{Cauchy}}-{{Schwarz}} and {{Bessel Inequalities}}.
\newblock \emph{The Mathematical Intelligencer}, 34\penalty0 (4):\penalty0
  2--3, 2012.

\bibitem[Salimans and
  Knowles(2013)]{Salimans_2013_FixedFormVariationalPosterior}
Tim Salimans and David~A. Knowles.
\newblock Fixed-{{Form Variational Posterior Approximation}} through
  {{Stochastic Linear Regression}}.
\newblock \emph{Bayesian Anal.}, 8\penalty0 (4):\penalty0 837--882, 2013.

\bibitem[Titsias and
  {L{\'a}zaro-gredilla}(2014)]{Titsias_2014_DoublyStochasticVariational}
Michalis Titsias and Miguel {L{\'a}zaro-gredilla}.
\newblock Doubly {{Stochastic Variational Bayes}} for non-{{Conjugate
  Inference}}.
\newblock In \emph{{{ICML}}}, 2014.

\bibitem[Wingate and Weber(2013)]{Wingate_2013_AutomatedVariationalInference}
David Wingate and Theophane Weber.
\newblock Automated {{Variational Inference}} in {{Probabilistic Programming}}.
\newblock \emph{arXiv:1301.1299 [cs, stat]}, 2013.

\bibitem[Winn and Bishop(2005)]{Winn_2005_VariationalMessagePassing}
John Winn and Christopher~M Bishop.
\newblock Variational {{Message Passing}}.
\newblock \emph{Journal of Machine Learning Research}, 6:\penalty0 661--694,
  2005.

\bibitem[Xu et~al.(2018)Xu, Quiroz, Kohn, and
  Sisson]{Xu_2018_variancereductionproperties}
Ming Xu, Matias Quiroz, Robert Kohn, and Scott~A. Sisson.
\newblock On some variance reduction properties of the reparameterization
  trick.
\newblock \emph{arXiv:1809.10330 [cs, stat]}, 2018.

\bibitem[Yang et~al.(2016)Yang, Lin, and
  Li]{Yang_2016_UnifiedConvergenceAnalysis}
Tianbao Yang, Qihang Lin, and Zhe Li.
\newblock Unified {{Convergence Analysis}} of {{Stochastic Momentum Methods}}
  for {{Convex}} and {{Non}}-convex {{Optimization}}.
\newblock \emph{arXiv:1604.03257 [math, stat]}, 2016.

\bibitem[Yao et~al.(2018)Yao, Vehtari, Simpson, and Gelman]{Yao_2018_YesDidIt}
Yuling Yao, Aki Vehtari, Daniel Simpson, and Andrew Gelman.
\newblock Yes, but {{Did It Work}}?: {{Evaluating Variational Inference}}.
\newblock In \emph{{{ICML}}}, 2018.

\end{thebibliography}
\clearpage{}

\section{Additional Demonstration Plots\label{sec:Additional-Demonstration-Plots}}

\begin{figure*}
\begin{centering}
\includegraphics[viewport=0bp 44.5329bp 293.625bp 251.8131bp,clip,scale=0.6]{GLMs/figures_individual_leg/australian_1}\includegraphics[viewport=60.75bp 44.5329bp 293.625bp 251.8131bp,clip,scale=0.6]{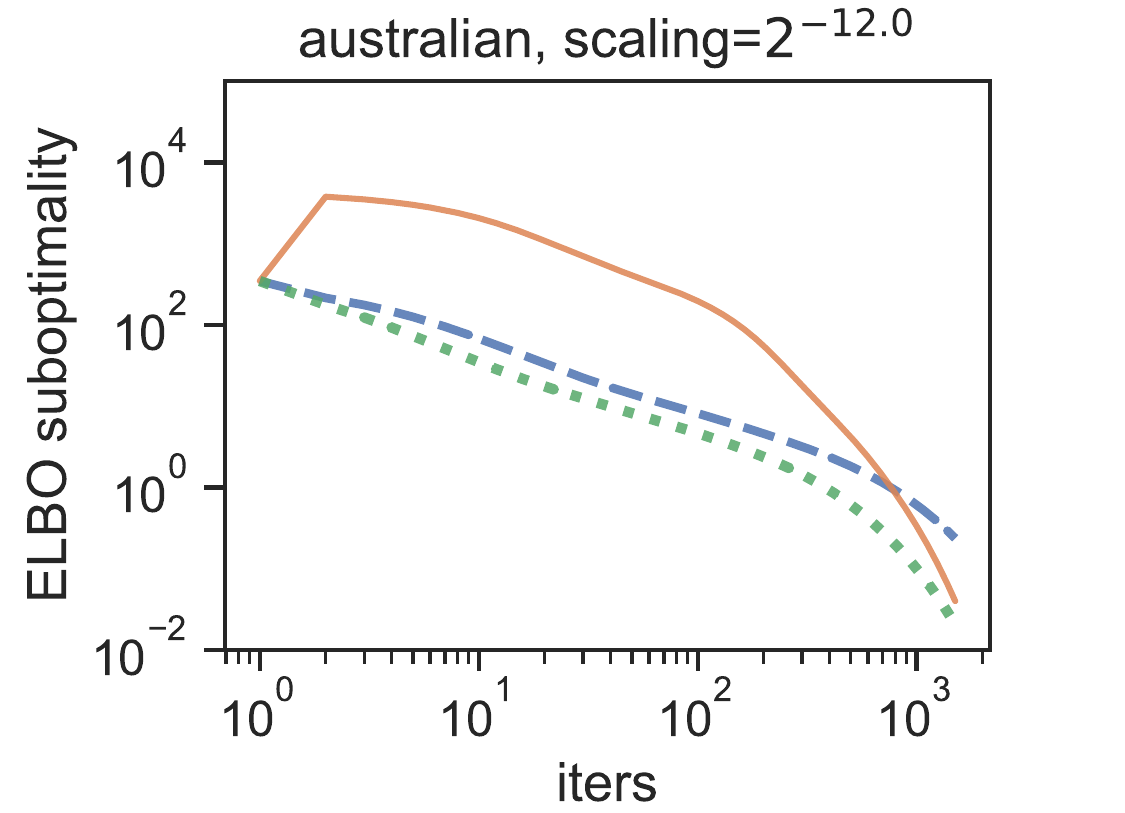}\includegraphics[viewport=60.75bp 44.5329bp 293.625bp 251.8131bp,clip,scale=0.6]{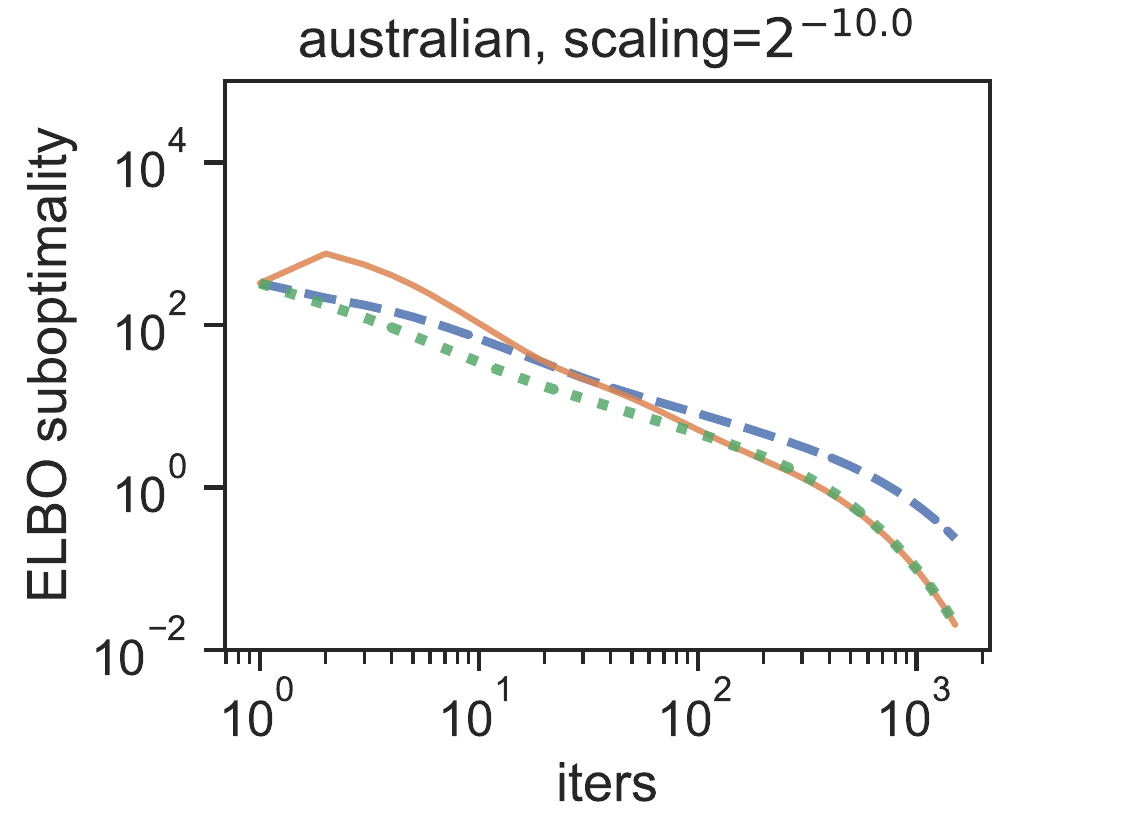}
\par\end{centering}
\begin{centering}
\includegraphics[viewport=0bp 44.5329bp 293.625bp 251.8131bp,clip,scale=0.6]{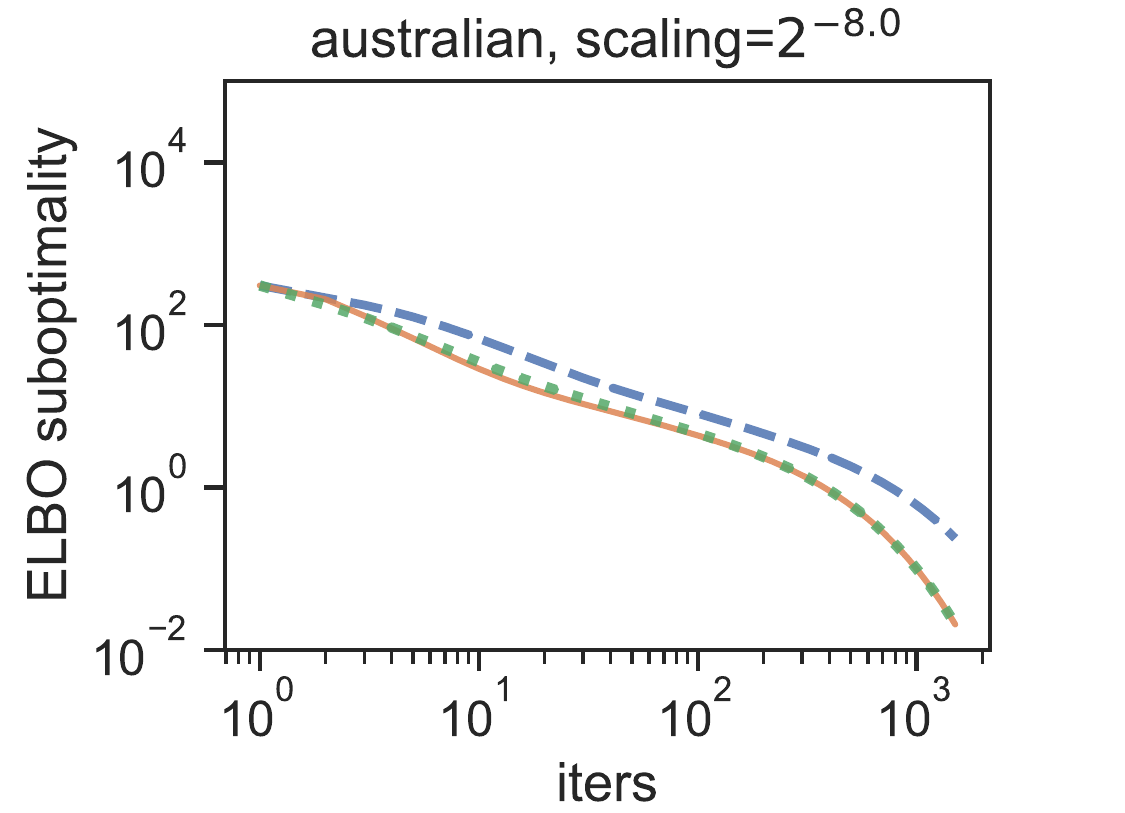}\includegraphics[viewport=60.75bp 44.5329bp 293.625bp 251.8131bp,clip,scale=0.6]{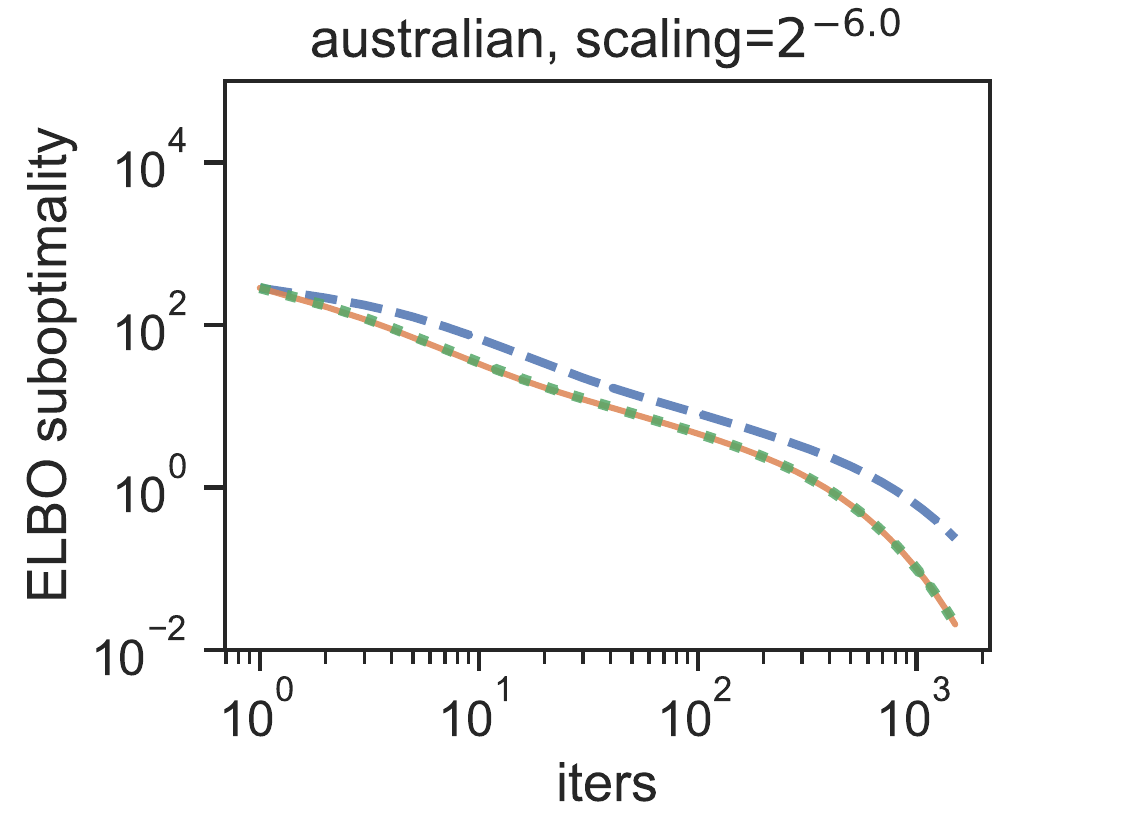}\includegraphics[viewport=60.75bp 44.5329bp 293.625bp 251.8131bp,clip,scale=0.6]{GLMs/figures_individual/australian_11}
\par\end{centering}
\begin{centering}
\includegraphics[viewport=0bp 0bp 293.625bp 251.8131bp,clip,scale=0.6]{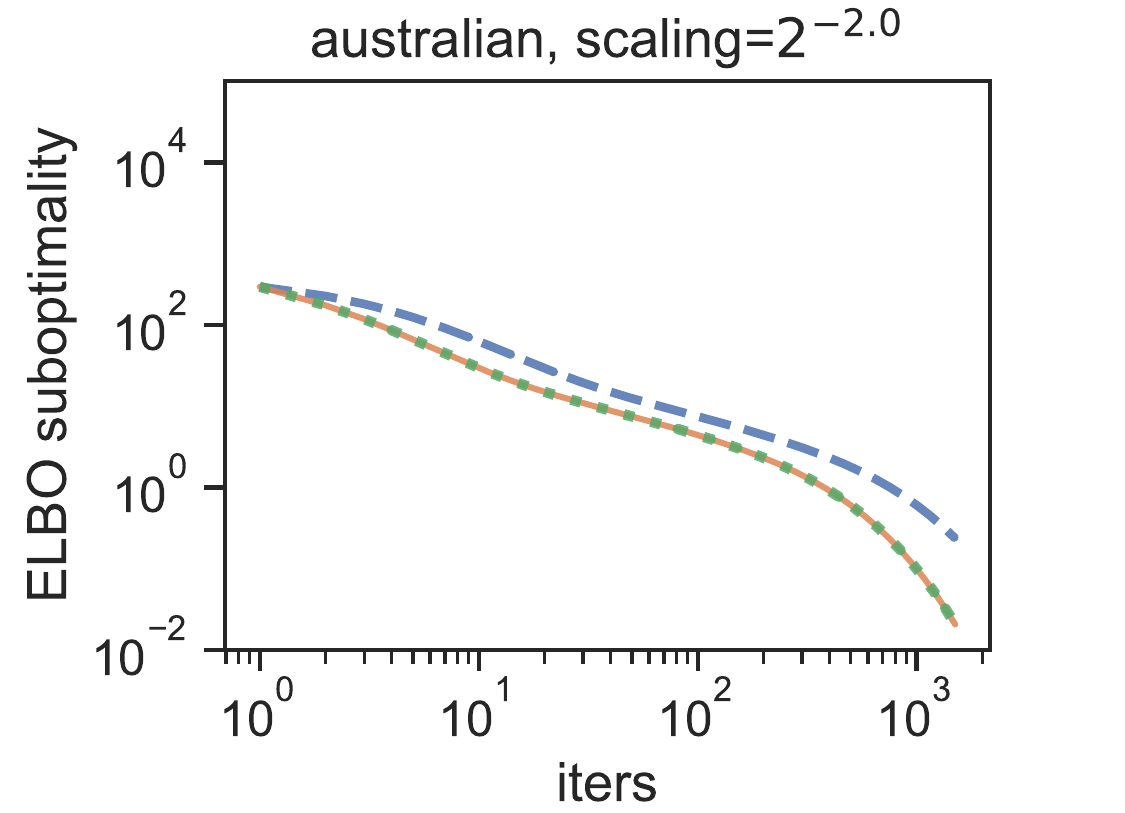}\includegraphics[viewport=60.75bp 0bp 293.625bp 251.8131bp,clip,scale=0.6]{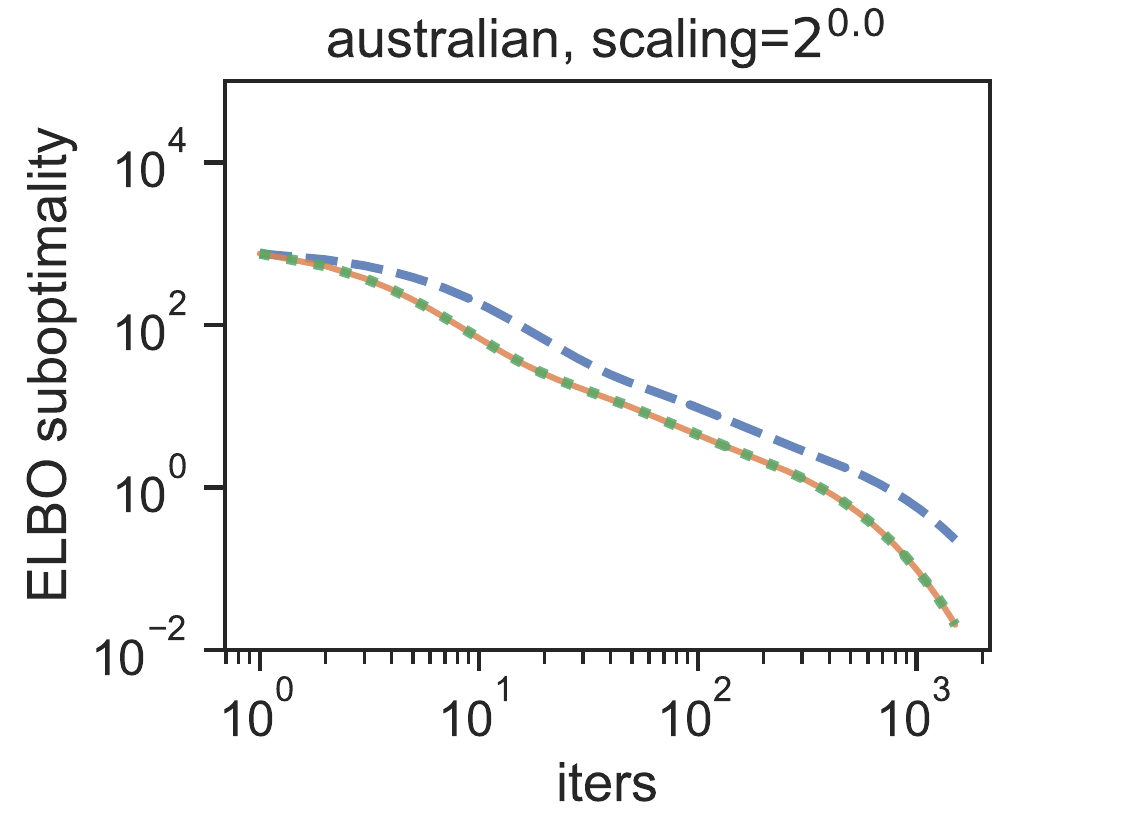}\includegraphics[viewport=60.75bp 0bp 293.625bp 251.8131bp,clip,scale=0.6]{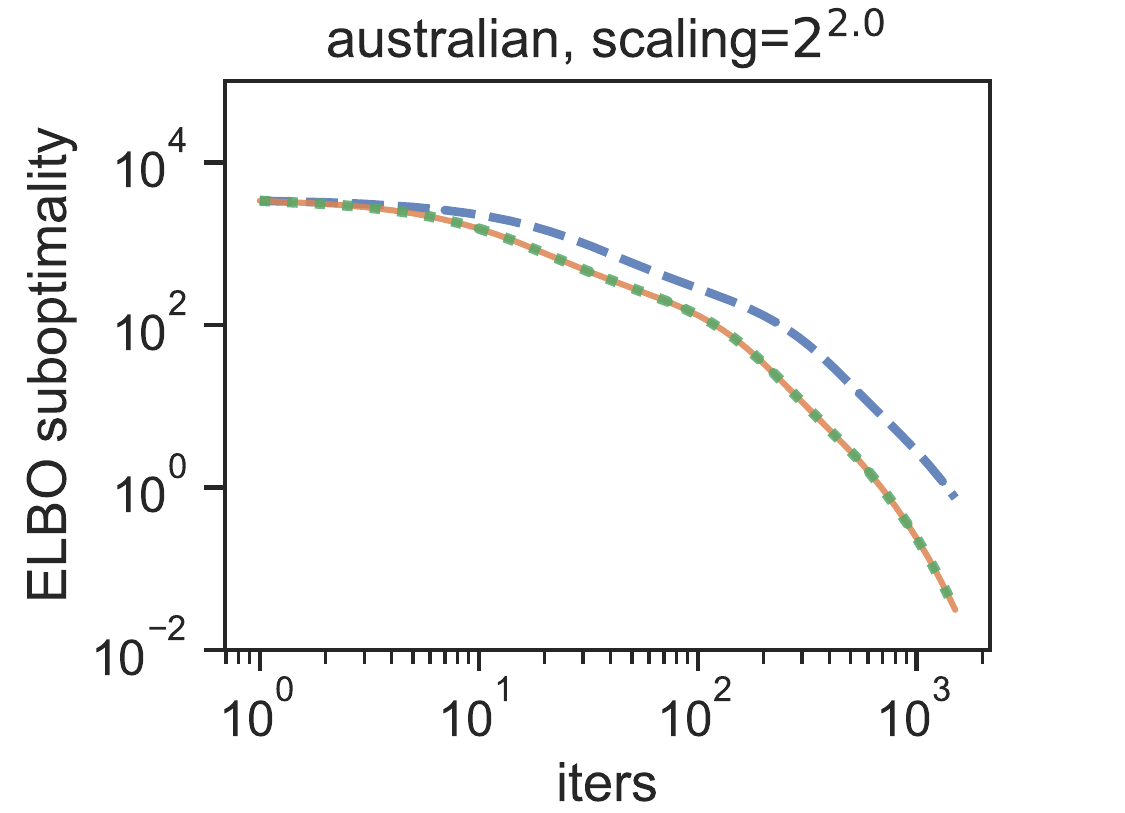}
\par\end{centering}
\begin{centering}
\includegraphics[viewport=0bp 12.14533bp 303.75bp 251.8131bp,clip,scale=0.78]{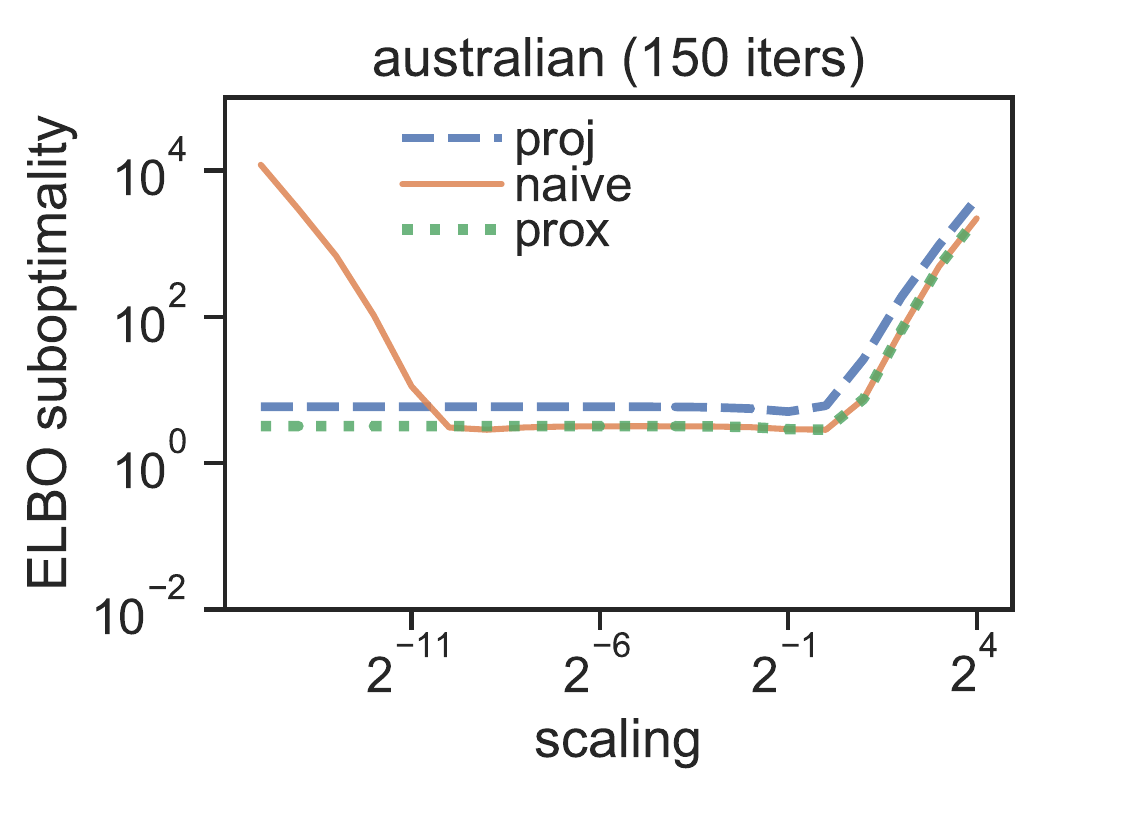}\includegraphics[viewport=60.75bp 12.14533bp 303.75bp 251.8131bp,clip,scale=0.78]{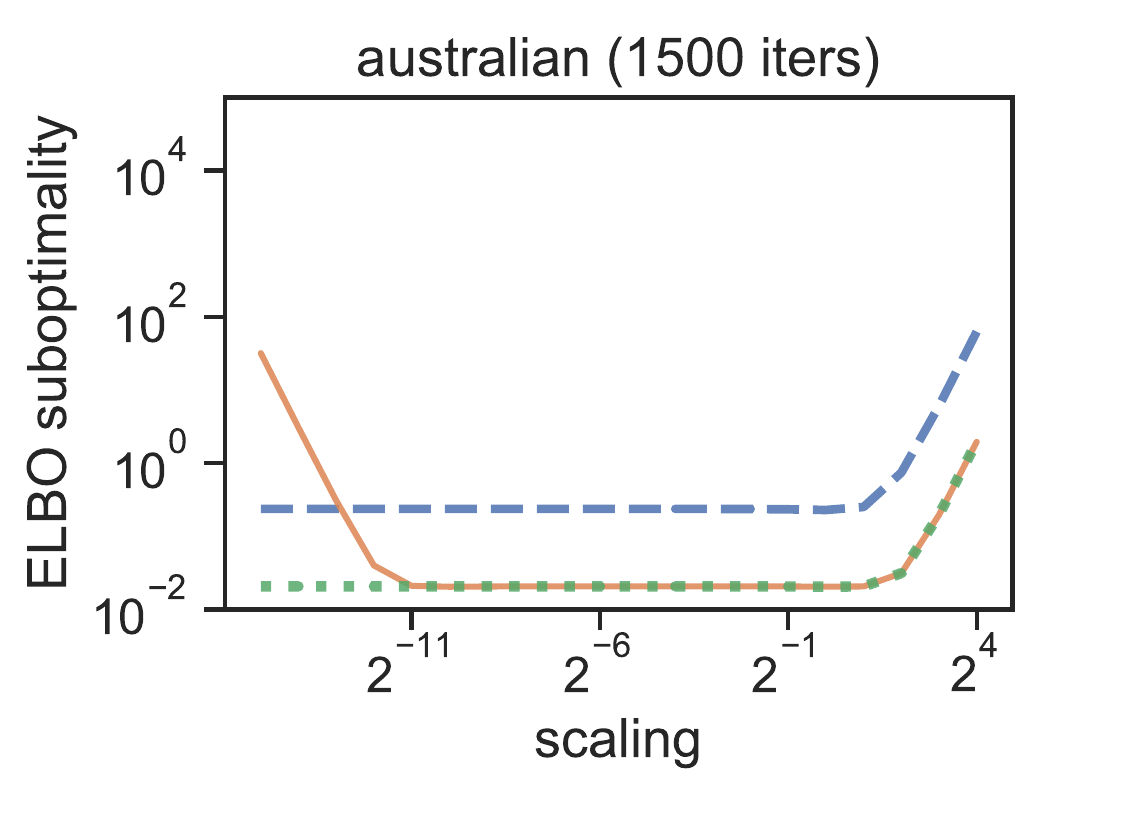}
\par\end{centering}
\caption{Looseness of the objective obtained by naive gradient descent ($\gamma=1/M$),
projected gradient descent ($\gamma=1/\protect\pp{2M}$) and proximal
gradient descent ($\gamma=1/M$). Optimization starts with $\protect\b m=0$
and $C=\rho I$ where $\rho$ is a scaling factor.\label{fig:bigfig-australian}}
\end{figure*}

\begin{figure*}
\begin{centering}
\includegraphics[viewport=0bp 44.5329bp 293.625bp 251.8131bp,clip,scale=0.6]{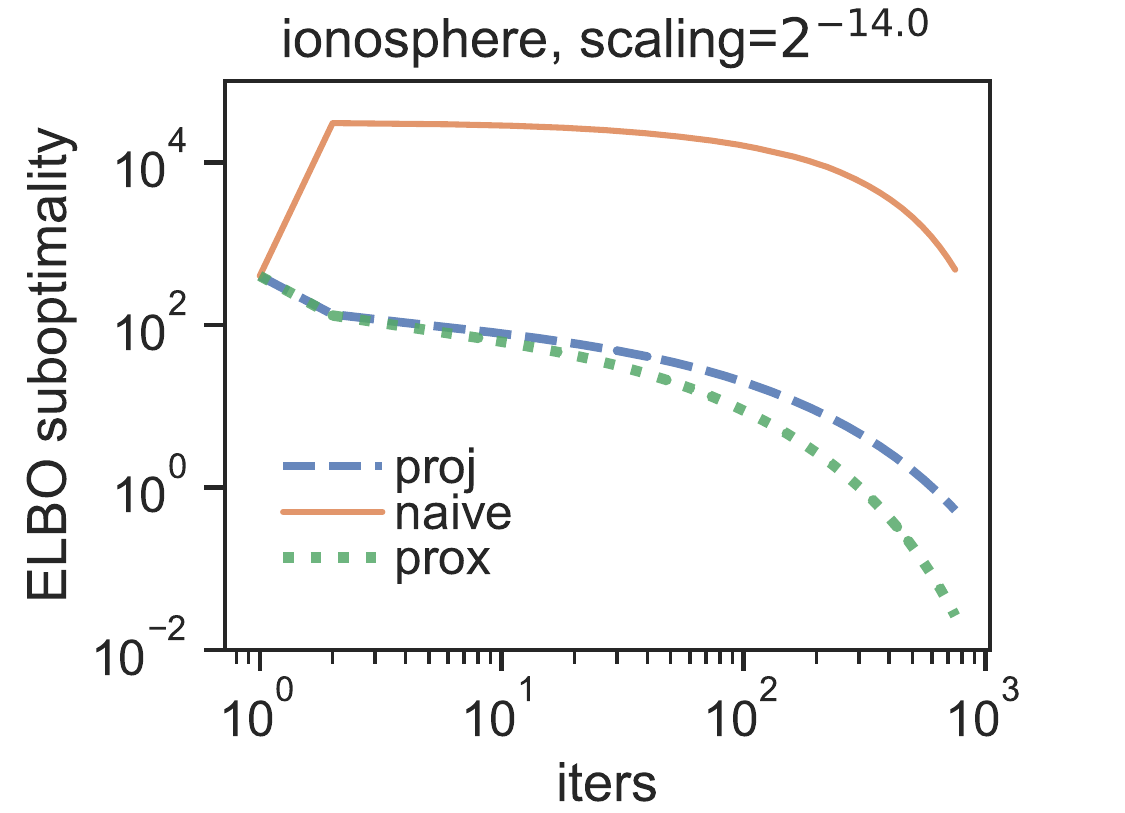}\includegraphics[viewport=60.75bp 44.5329bp 293.625bp 251.8131bp,clip,scale=0.6]{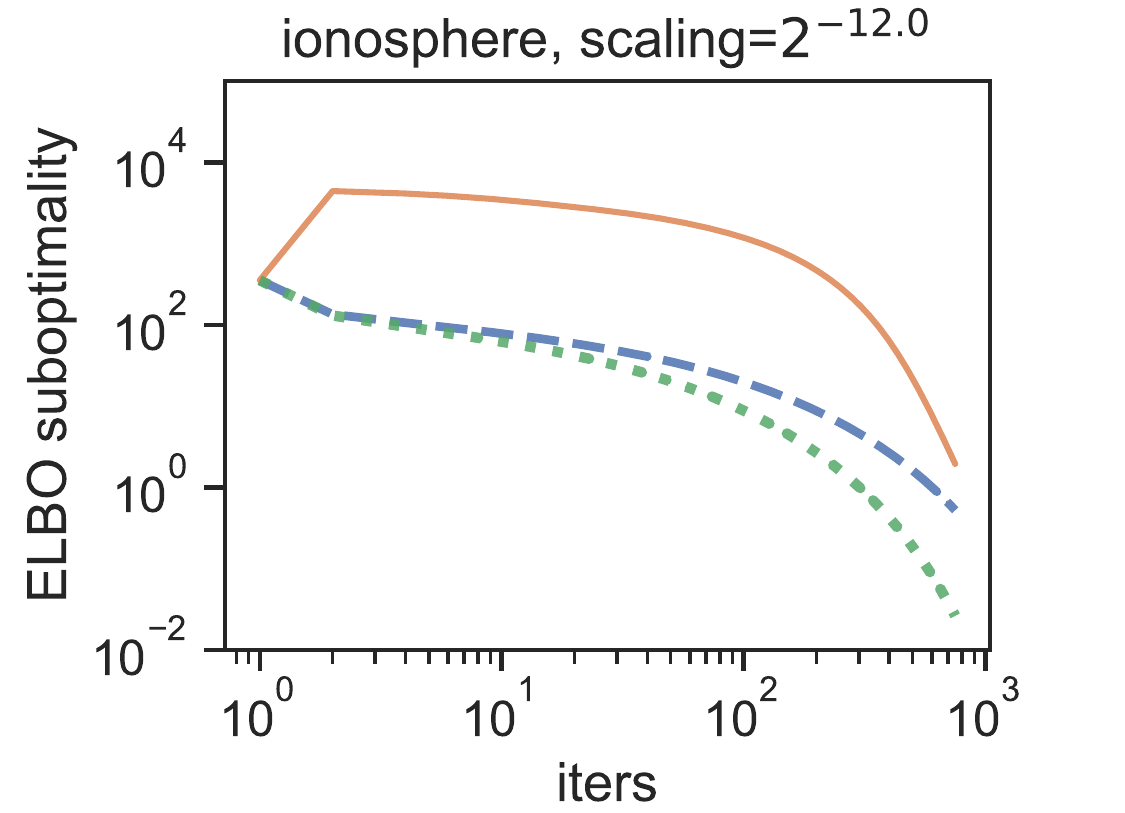}\includegraphics[viewport=60.75bp 44.5329bp 293.625bp 251.8131bp,clip,scale=0.6]{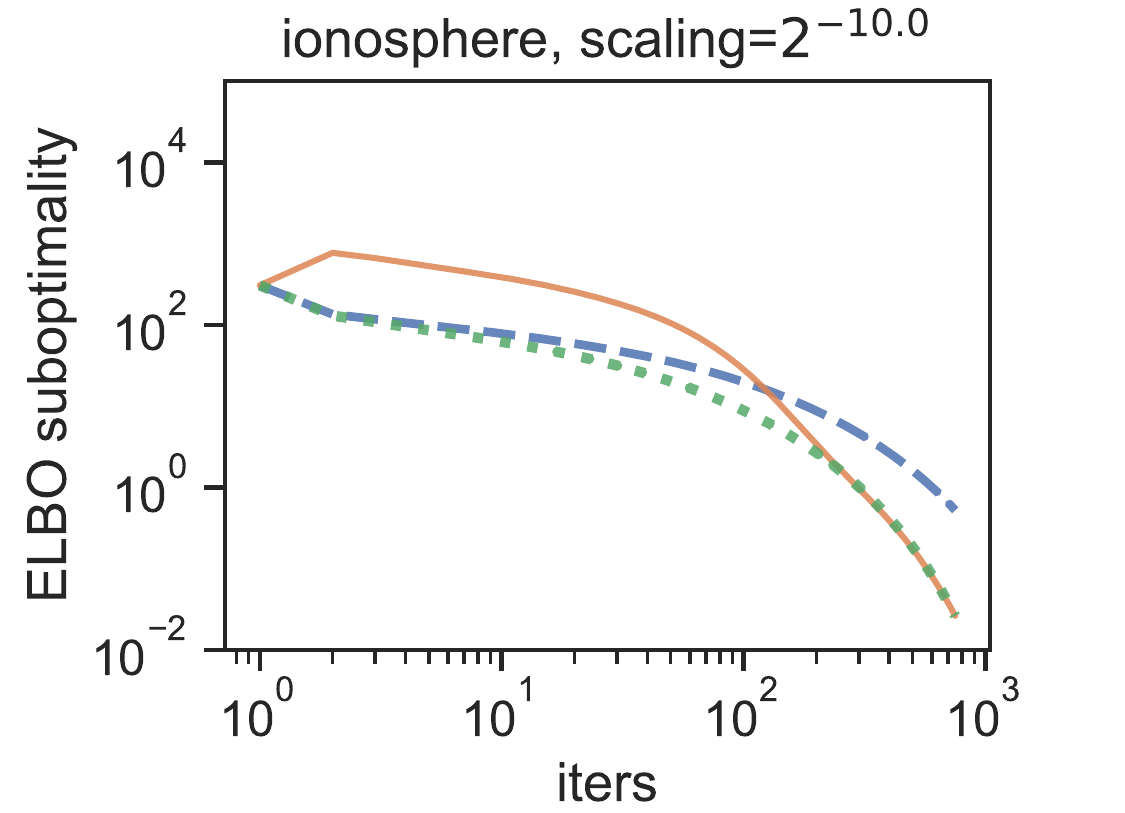}
\par\end{centering}
\begin{centering}
\includegraphics[viewport=0bp 44.5329bp 293.625bp 251.8131bp,clip,scale=0.6]{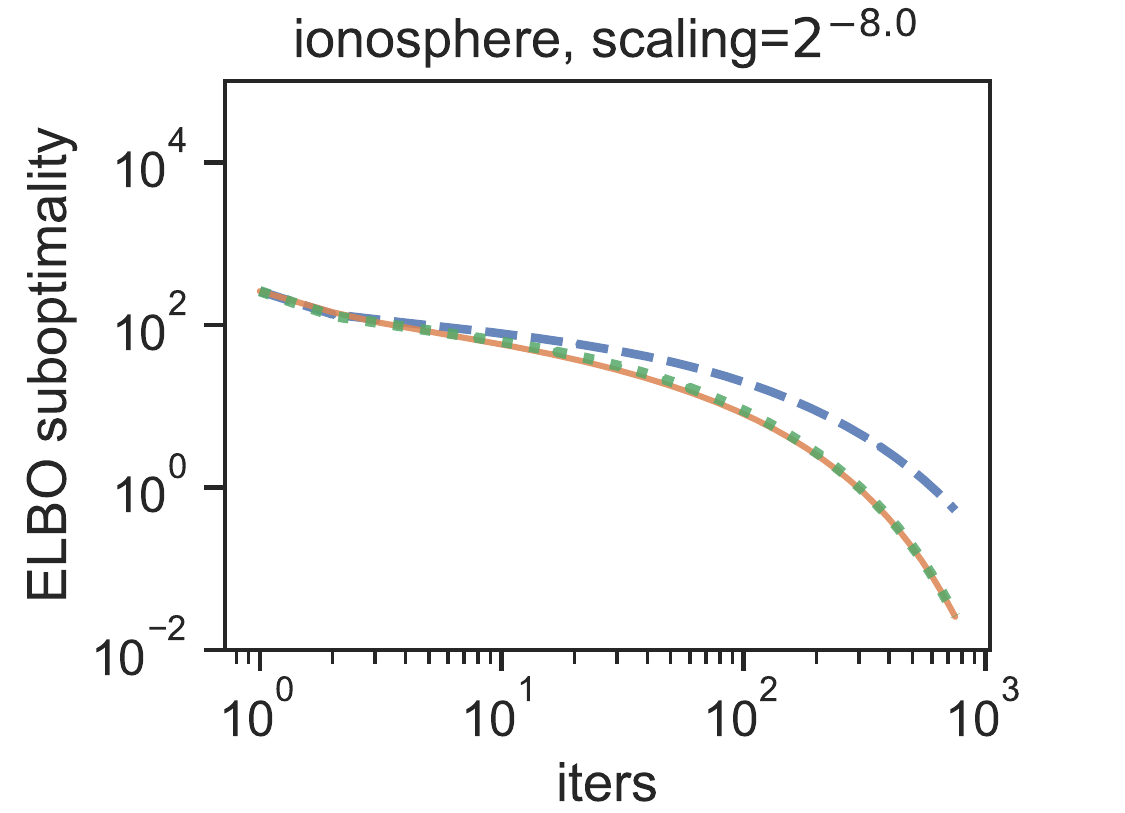}\includegraphics[viewport=60.75bp 44.5329bp 293.625bp 251.8131bp,clip,scale=0.6]{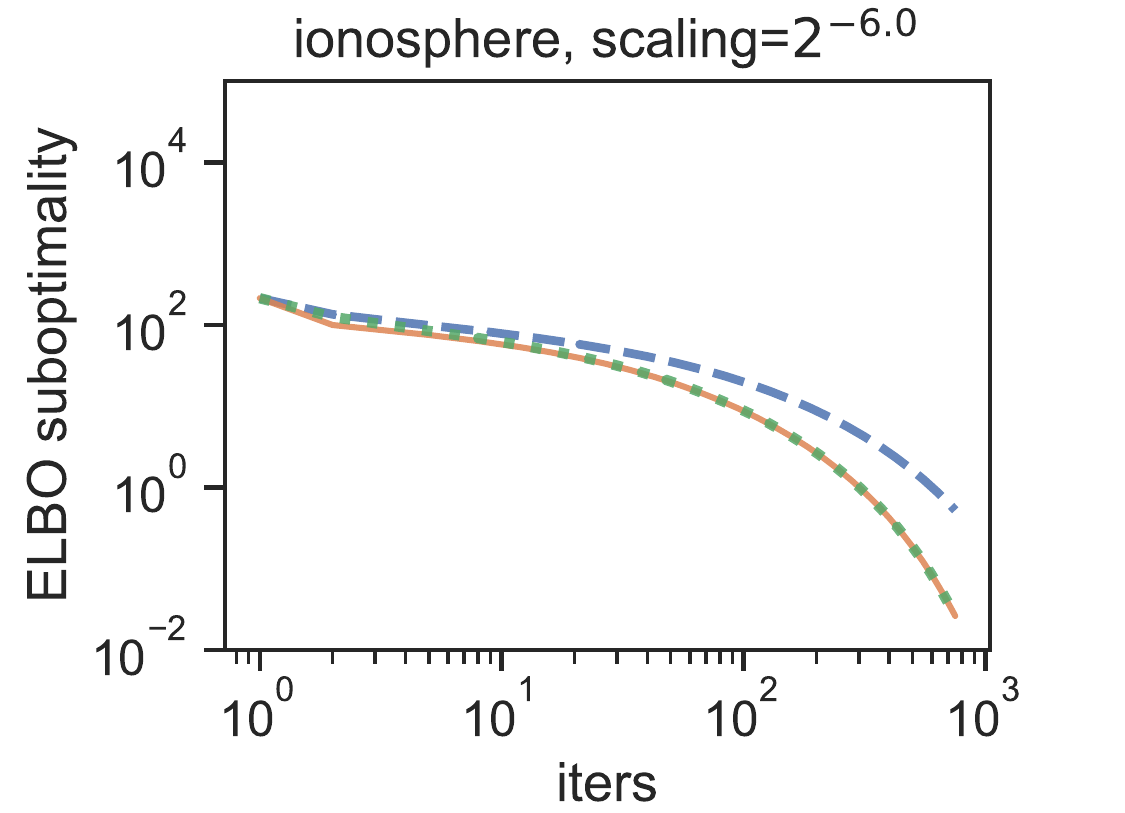}\includegraphics[viewport=60.75bp 44.5329bp 293.625bp 251.8131bp,clip,scale=0.6]{GLMs/figures_individual/ionosphere_11}
\par\end{centering}
\begin{centering}
\includegraphics[viewport=0bp 0bp 293.625bp 251.8131bp,clip,scale=0.6]{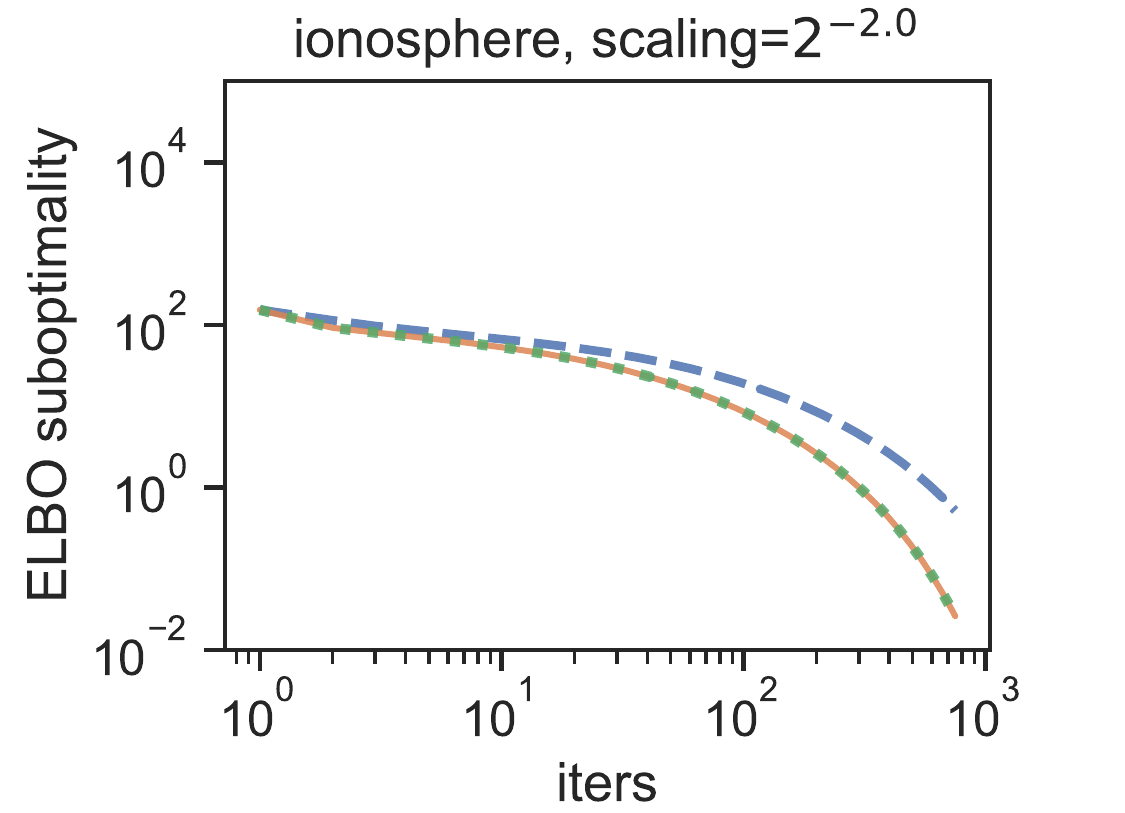}\includegraphics[viewport=60.75bp 0bp 293.625bp 251.8131bp,clip,scale=0.6]{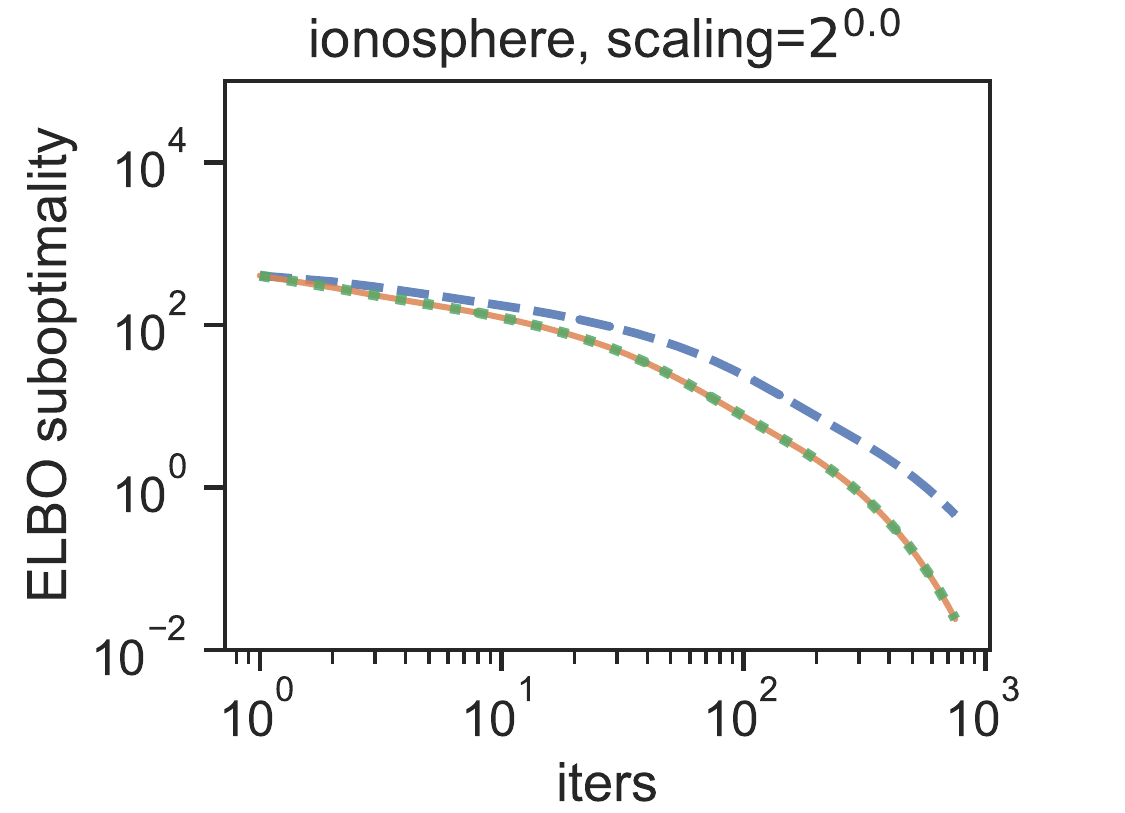}\includegraphics[viewport=60.75bp 0bp 293.625bp 251.8131bp,clip,scale=0.6]{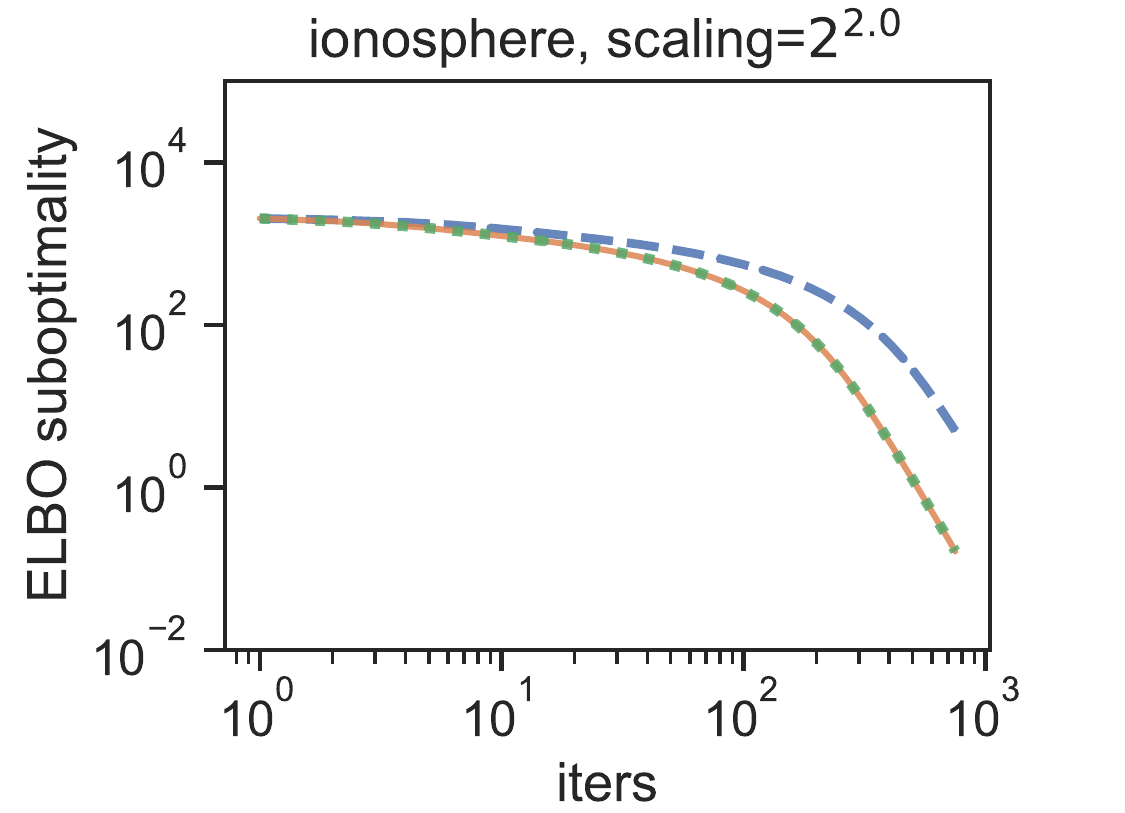}
\par\end{centering}
\begin{centering}
\includegraphics[viewport=0bp 12.14533bp 303.75bp 251.8131bp,clip,scale=0.78]{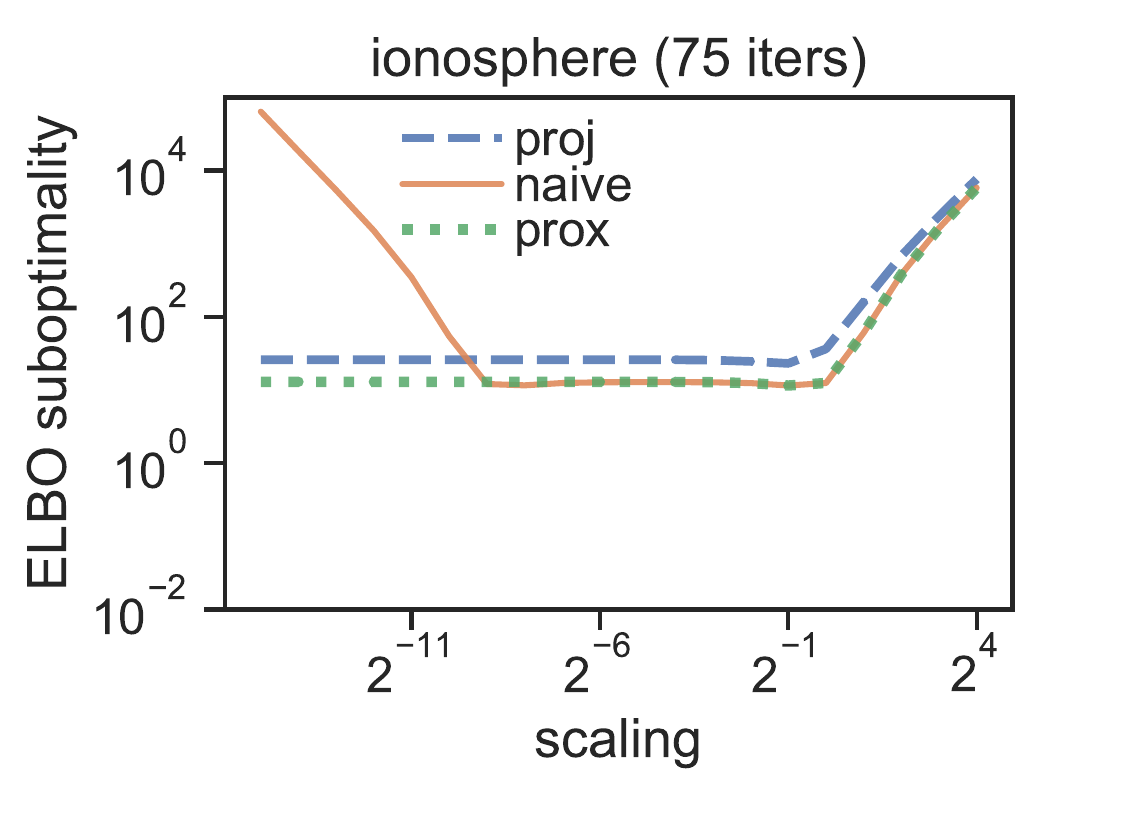}\includegraphics[viewport=60.75bp 12.14533bp 303.75bp 251.8131bp,clip,scale=0.78]{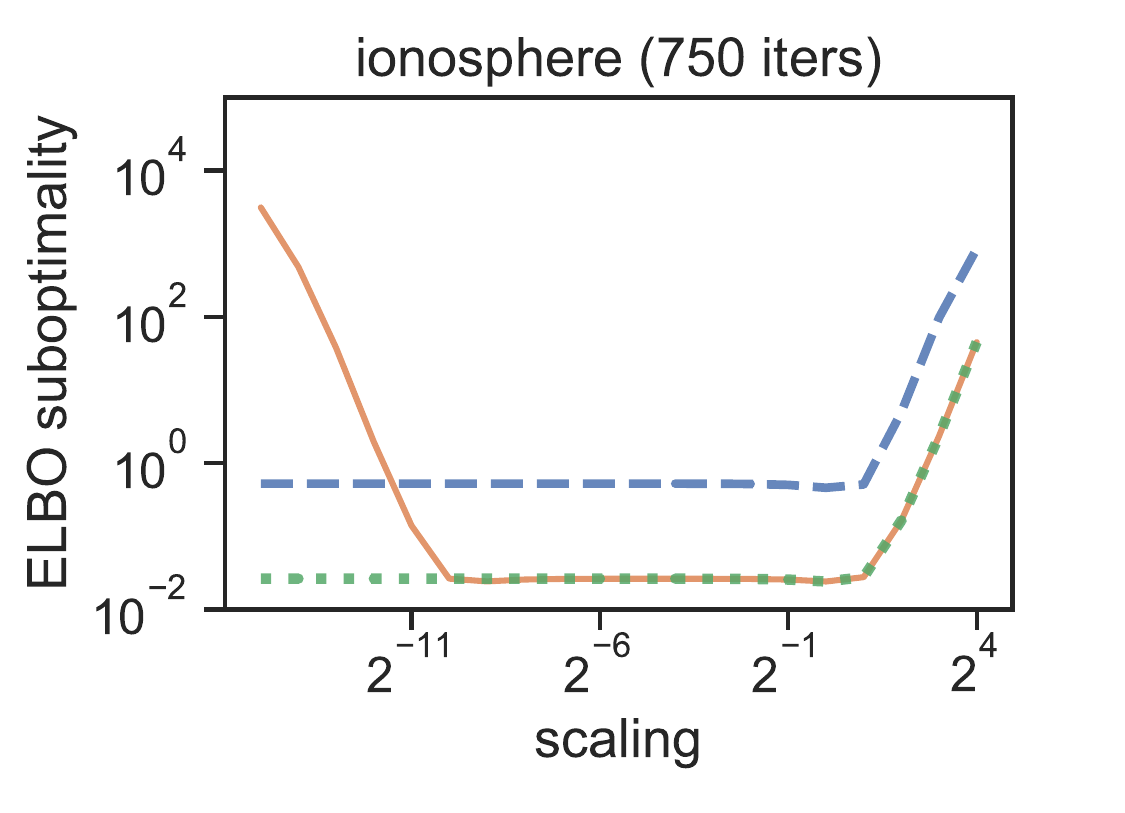}
\par\end{centering}
\caption{Looseness of the objective obtained by naive gradient descent ($\gamma=1/M$),
projected gradient descent ($\gamma=1/\protect\pp{2M}$) and proximal
gradient descent ($\gamma=1/M$). Optimization starts with $\protect\b m=0$
and $C=\rho I$ where $\rho$ is a scaling factor.\label{fig:bigfig-ionospheree}}
\end{figure*}

\begin{figure*}
\begin{centering}
\includegraphics[viewport=0bp 44.5329bp 293.625bp 251.8131bp,clip,scale=0.6]{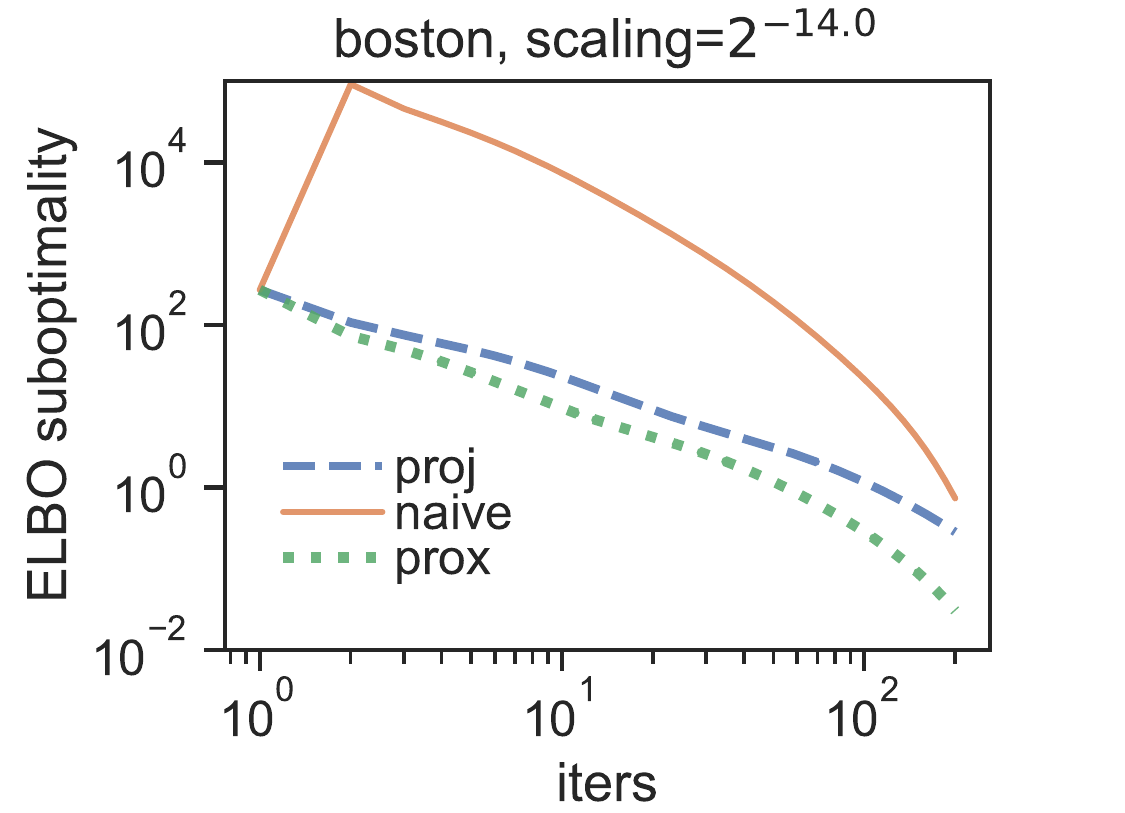}\includegraphics[viewport=60.75bp 44.5329bp 293.625bp 251.8131bp,clip,scale=0.6]{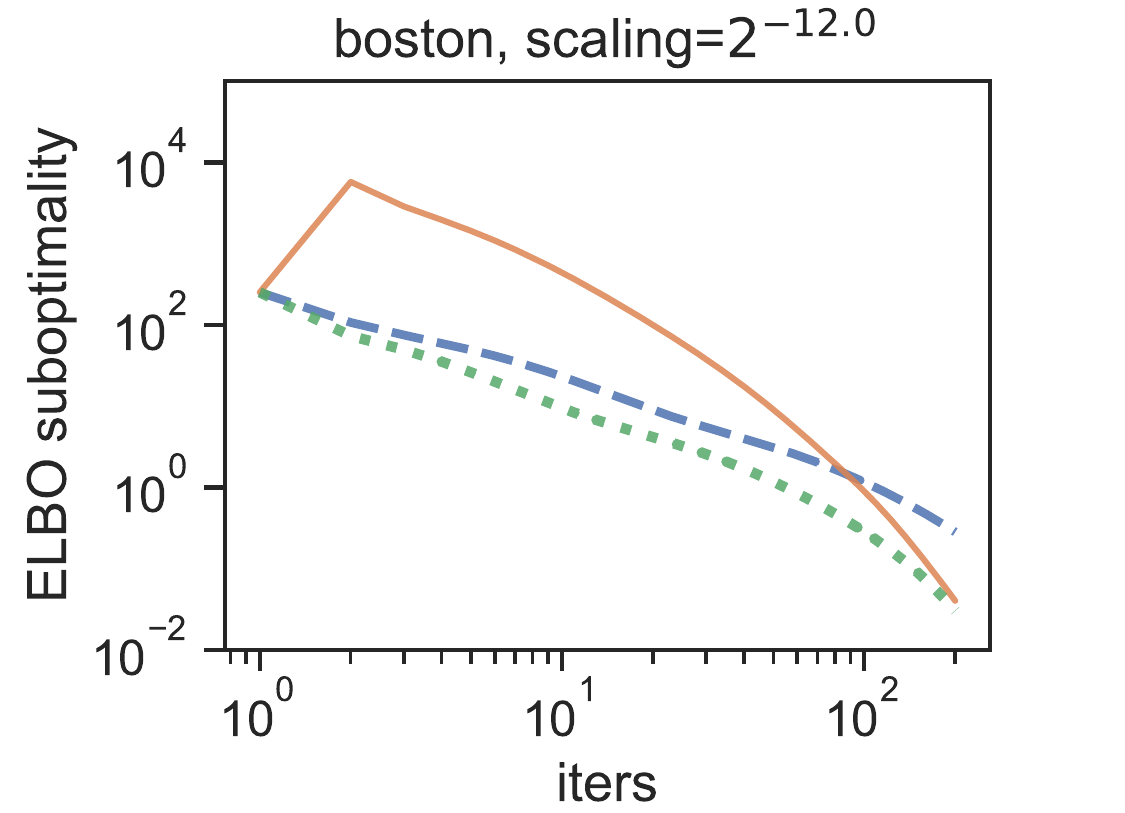}\includegraphics[viewport=60.75bp 44.5329bp 293.625bp 251.8131bp,clip,scale=0.6]{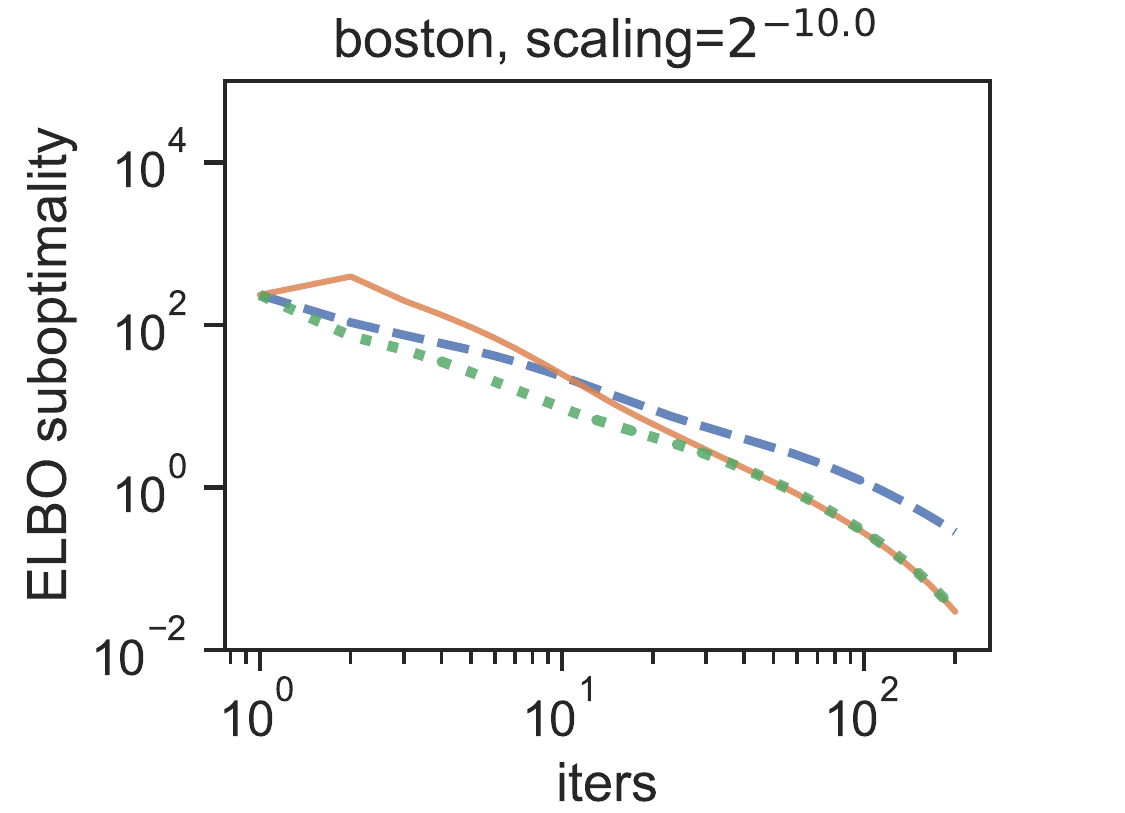}
\par\end{centering}
\begin{centering}
\includegraphics[viewport=0bp 44.5329bp 293.625bp 251.8131bp,clip,scale=0.6]{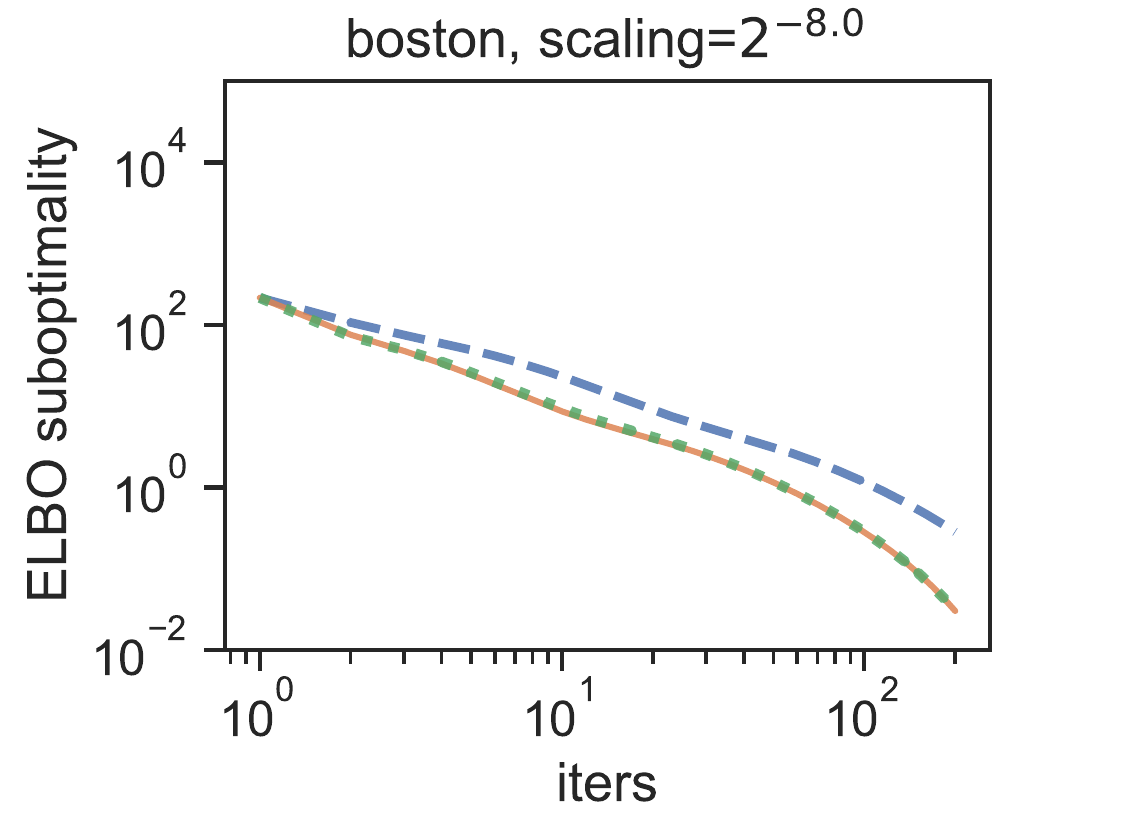}\includegraphics[viewport=60.75bp 44.5329bp 293.625bp 251.8131bp,clip,scale=0.6]{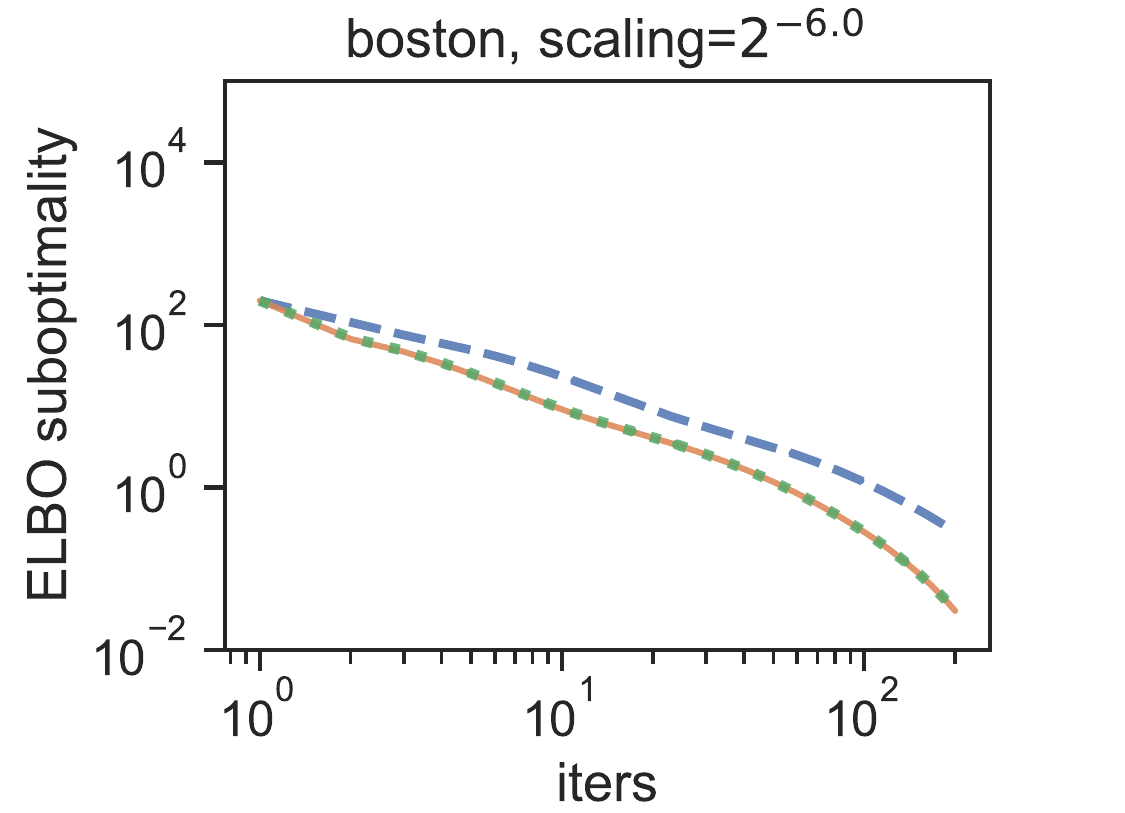}\includegraphics[viewport=60.75bp 44.5329bp 293.625bp 251.8131bp,clip,scale=0.6]{GLMs/figures_individual/boston_11}
\par\end{centering}
\begin{centering}
\includegraphics[viewport=0bp 0bp 293.625bp 251.8131bp,clip,scale=0.6]{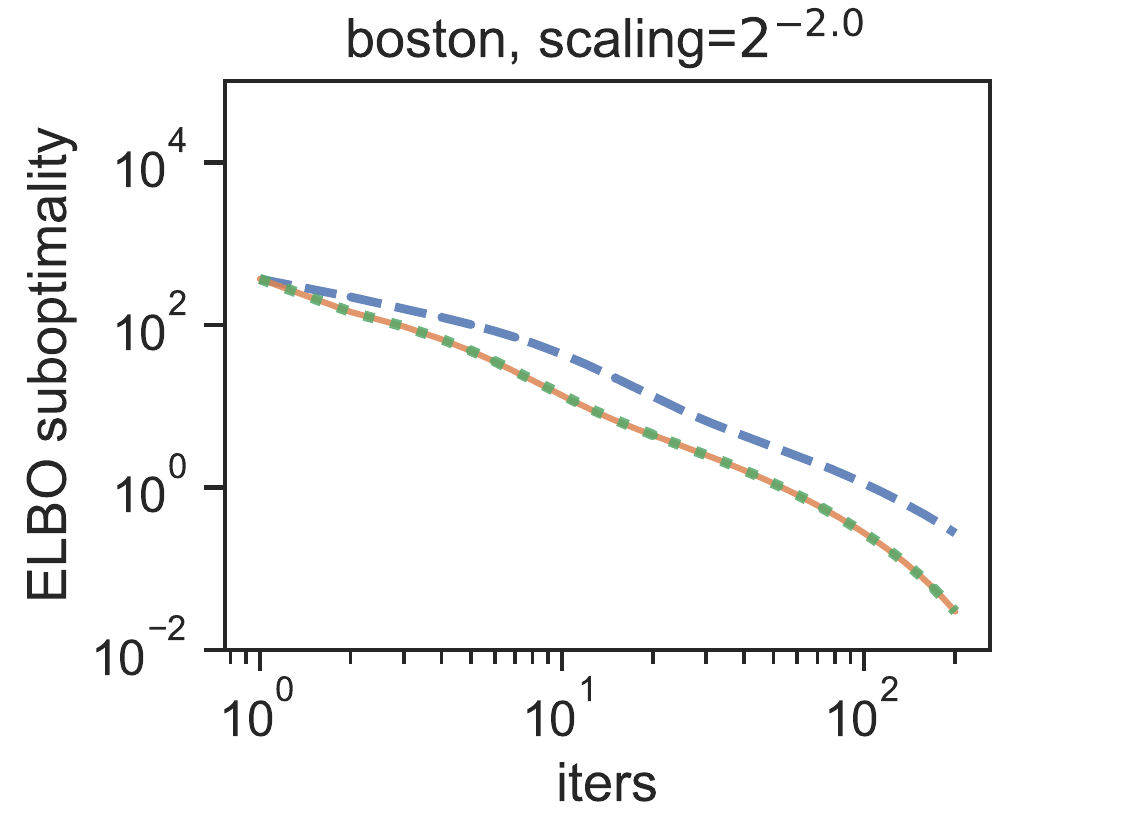}\includegraphics[viewport=60.75bp 0bp 293.625bp 251.8131bp,clip,scale=0.6]{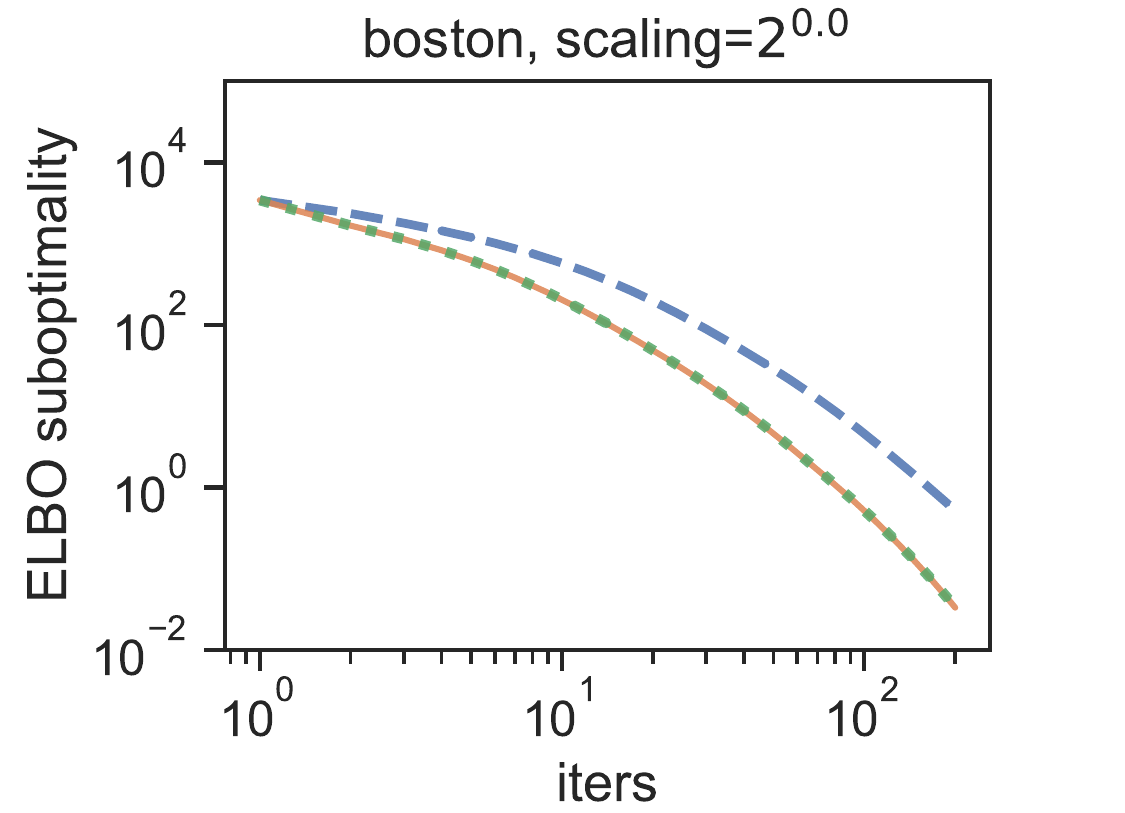}\includegraphics[viewport=60.75bp 0bp 293.625bp 251.8131bp,clip,scale=0.6]{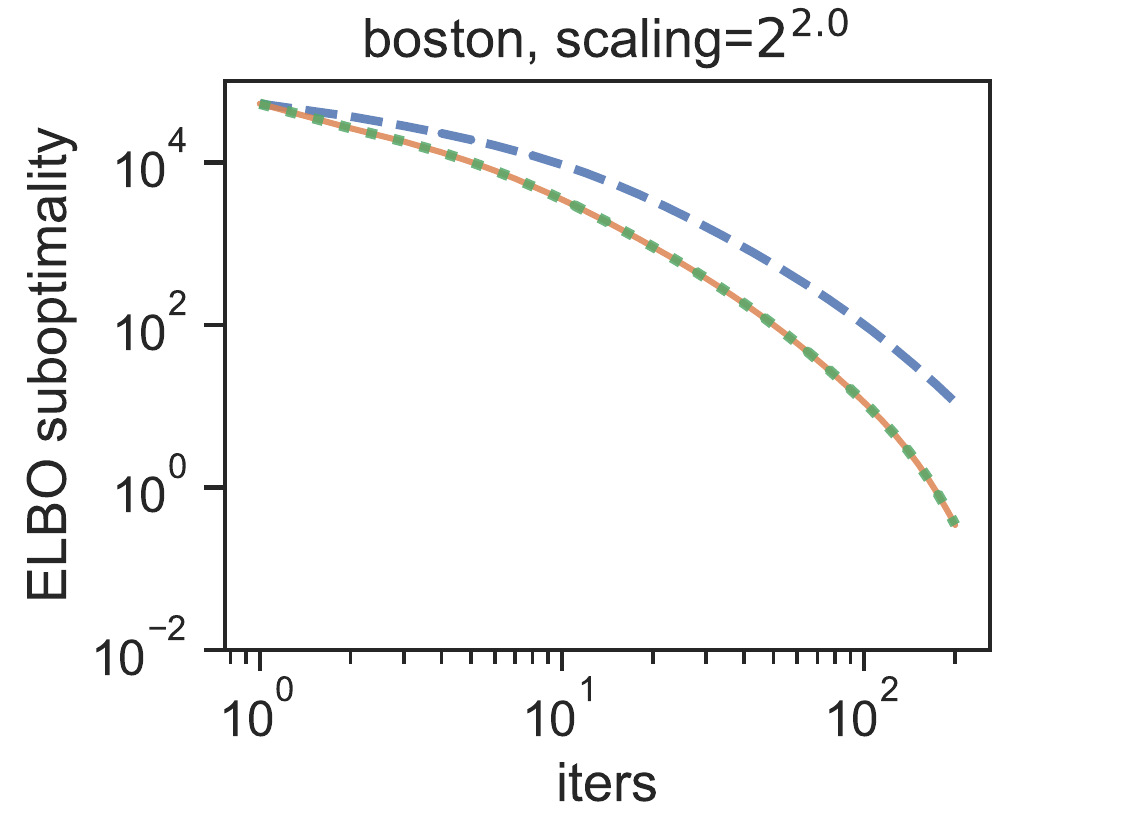}
\par\end{centering}
\begin{centering}
\includegraphics[viewport=0bp 12.14533bp 303.75bp 251.8131bp,clip,scale=0.78]{GLMs/figures_final/boston_10}\includegraphics[viewport=60.75bp 12.14533bp 303.75bp 251.8131bp,clip,scale=0.78]{GLMs/figures_final/boston_1}
\par\end{centering}
\caption{Looseness of the objective obtained by naive gradient descent ($\gamma=1/M$),
projected gradient descent ($\gamma=1/\protect\pp{2M}$) and proximal
gradient descent ($\gamma=1/M$). Optimization starts with $\protect\b m=0$
and $C=\rho I$ where $\rho$ is a scaling factor.\label{fig:bigfig-boston}}
\end{figure*}

\begin{figure*}
\begin{centering}
\includegraphics[viewport=0bp 44.5329bp 293.625bp 251.8131bp,clip,scale=0.6]{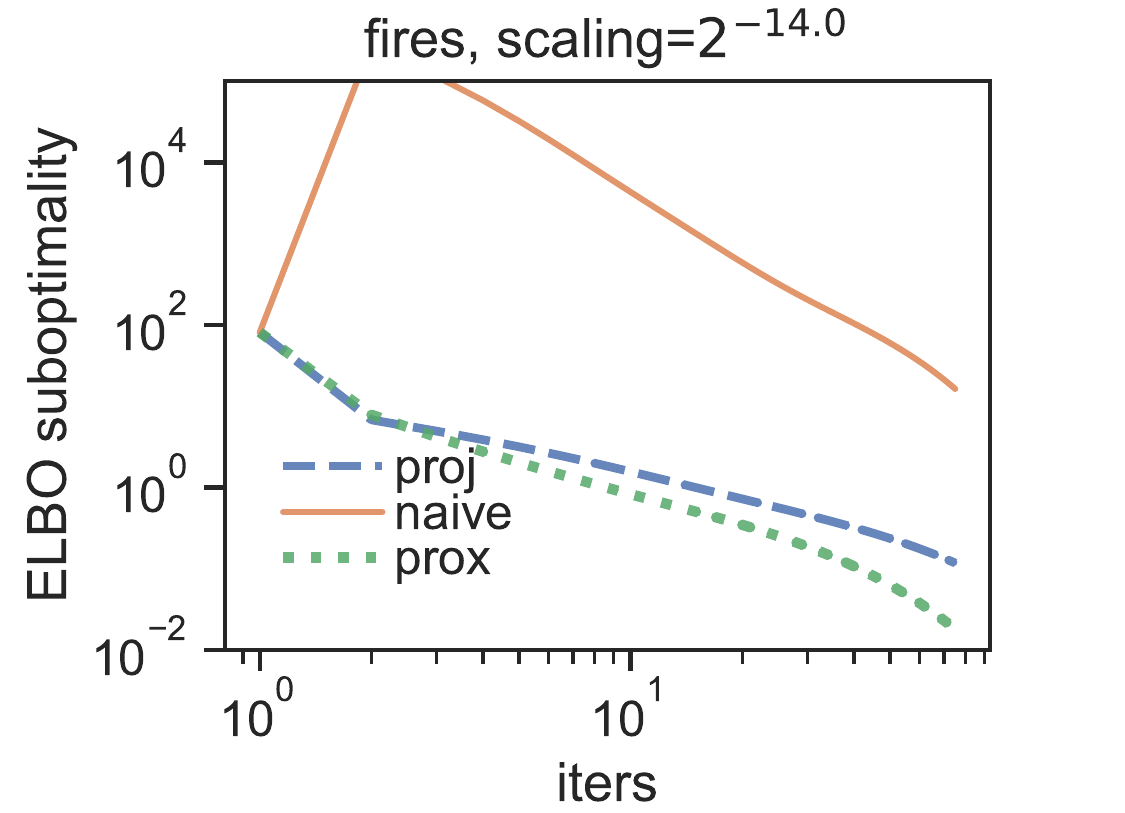}\includegraphics[viewport=60.75bp 44.5329bp 293.625bp 251.8131bp,clip,scale=0.6]{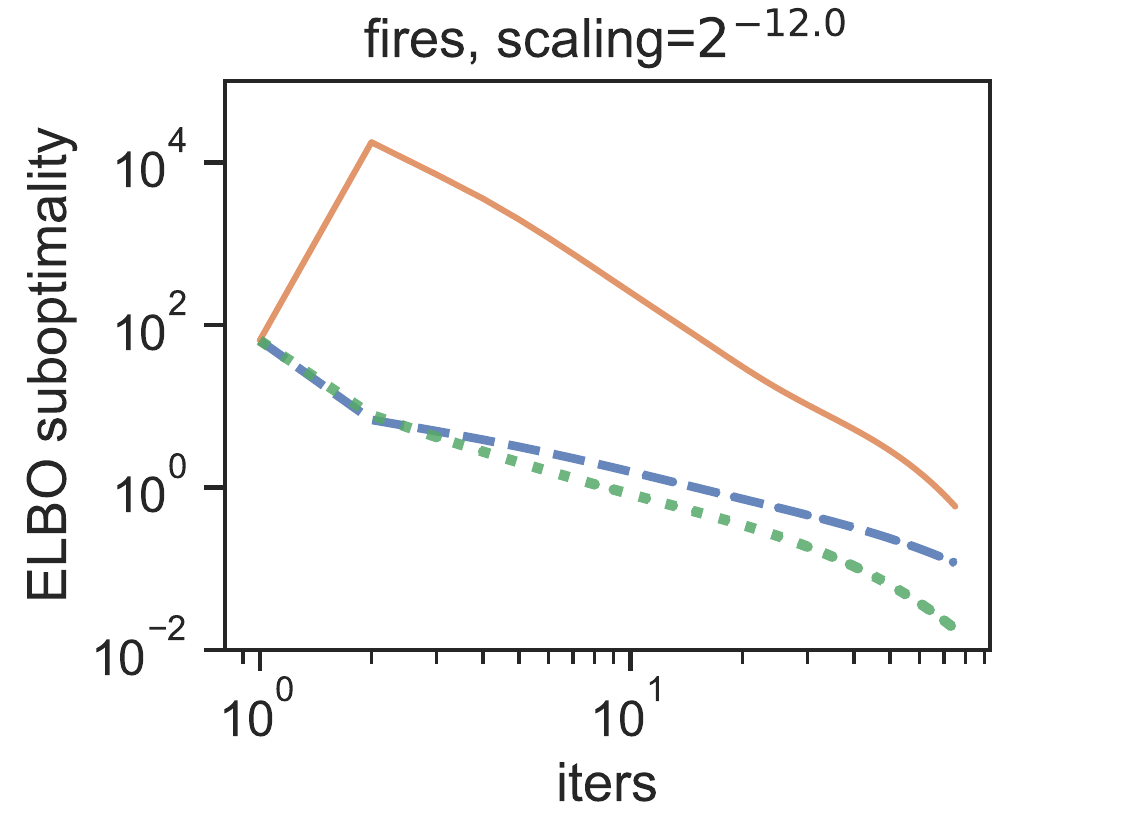}\includegraphics[viewport=60.75bp 44.5329bp 293.625bp 251.8131bp,clip,scale=0.6]{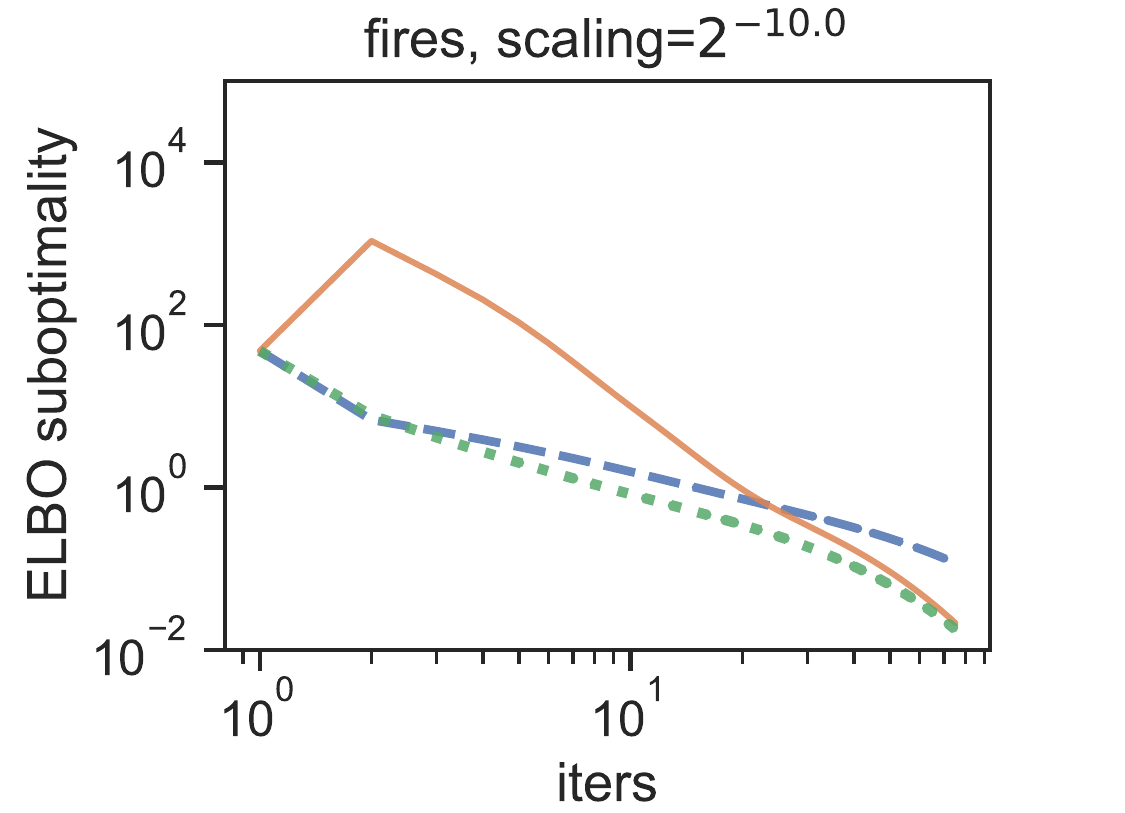}
\par\end{centering}
\begin{centering}
\includegraphics[viewport=0bp 44.5329bp 293.625bp 251.8131bp,clip,scale=0.6]{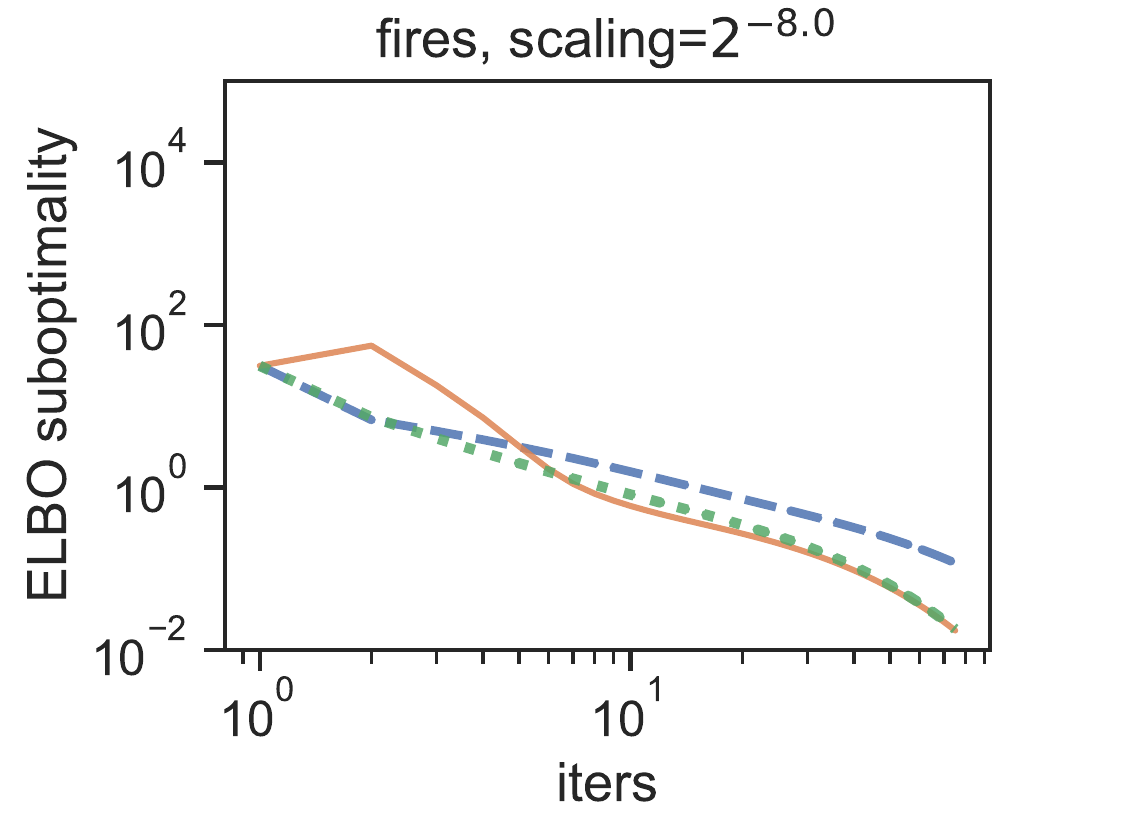}\includegraphics[viewport=60.75bp 44.5329bp 293.625bp 251.8131bp,clip,scale=0.6]{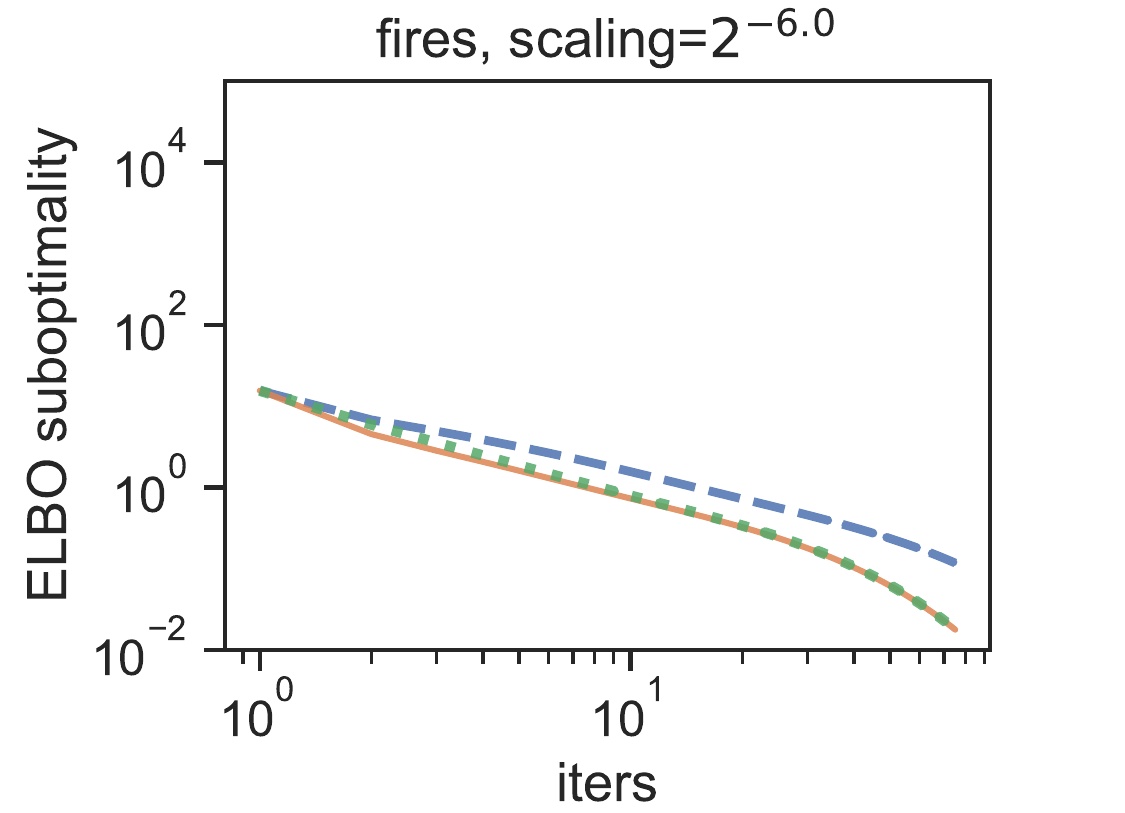}\includegraphics[viewport=60.75bp 44.5329bp 293.625bp 251.8131bp,clip,scale=0.6]{GLMs/figures_individual/fires_11}
\par\end{centering}
\begin{centering}
\includegraphics[viewport=0bp 0bp 293.625bp 251.8131bp,clip,scale=0.6]{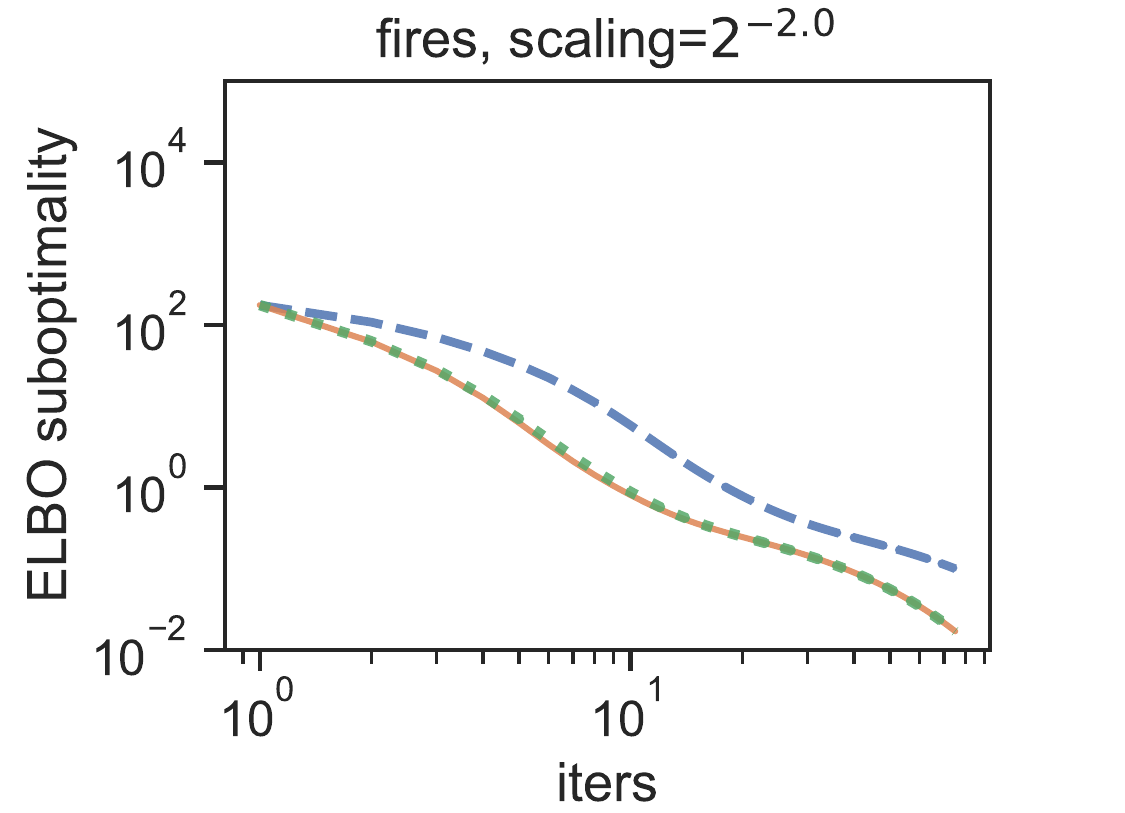}\includegraphics[viewport=60.75bp 0bp 293.625bp 251.8131bp,clip,scale=0.6]{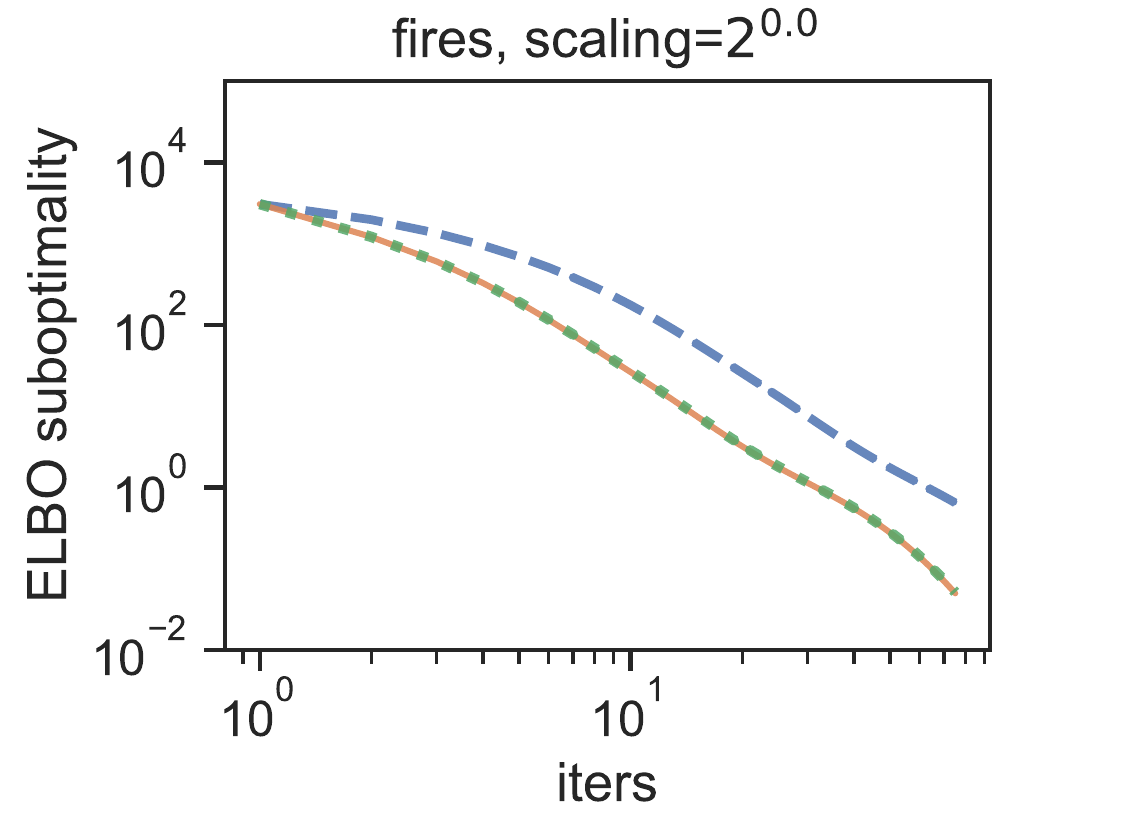}\includegraphics[viewport=60.75bp 0bp 293.625bp 251.8131bp,clip,scale=0.6]{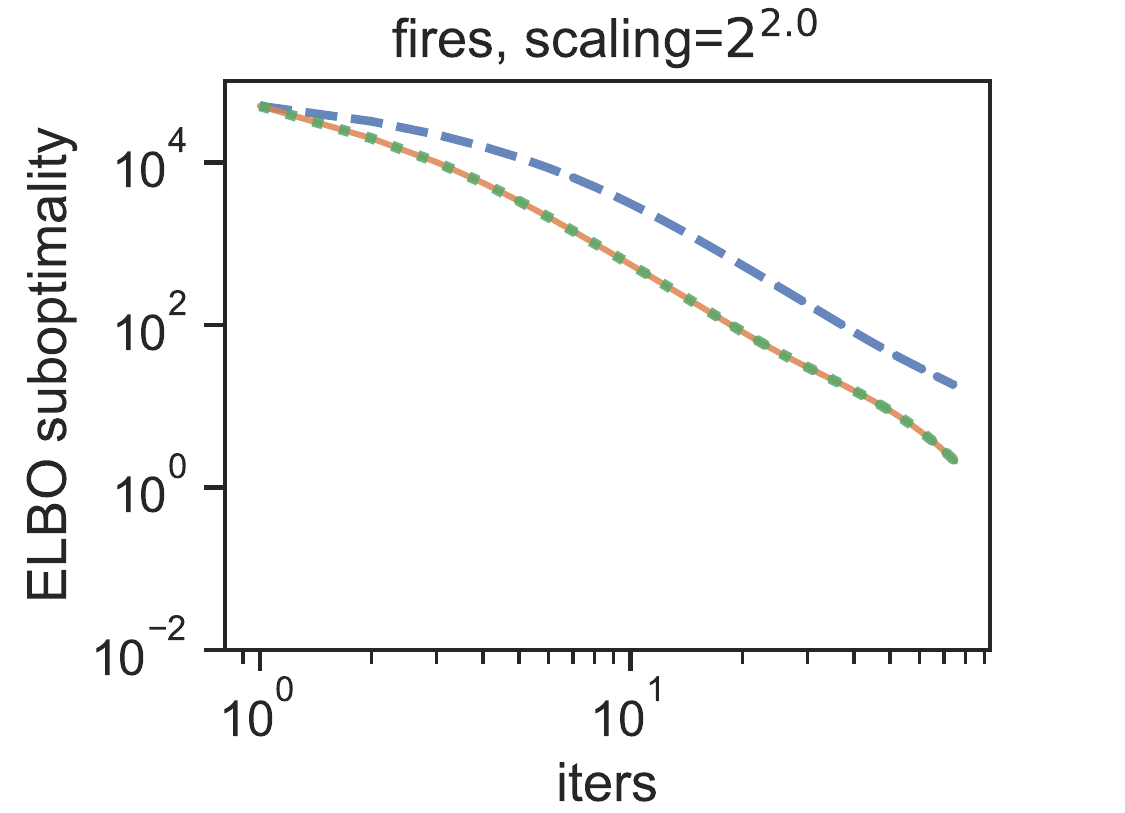}
\par\end{centering}
\begin{centering}
\includegraphics[viewport=0bp 12.14533bp 303.75bp 251.8131bp,clip,scale=0.78]{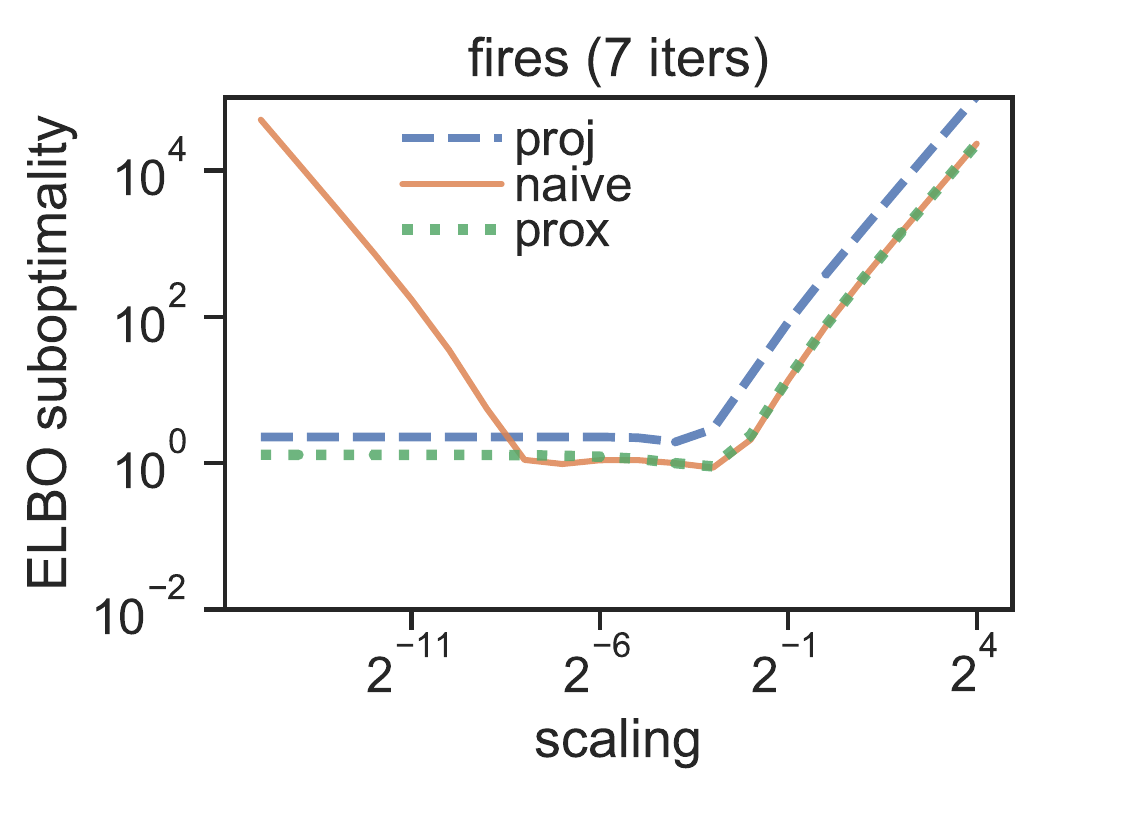}\includegraphics[viewport=60.75bp 12.14533bp 303.75bp 251.8131bp,clip,scale=0.78]{GLMs/figures_final/fires_1}
\par\end{centering}
\caption{Looseness of the objective obtained by naive gradient descent ($\gamma=1/M$),
projected gradient descent ($\gamma=1/\protect\pp{2M}$) and proximal
gradient descent ($\gamma=1/M$). Optimization starts with $\protect\b m=0$
and $C=\rho I$ where $\rho$ is a scaling factor.\label{fig:bigfig-fires}}
\end{figure*}

\clearpage{}

\section{Proofs for Technical Lemmas\label{sec:Technical-Lemmas}}

This section gives proofs for the technical lemmas used in the main
result. Firstly, we show that $\angs{\cdot,\cdot}_{s}$ is a valid
inner-product.

\validinnerproduct*
\begin{proof}
The space of square integrable functions is $\left\{ \b a:\R^{d}\rightarrow\R^{k}\ \vert\ \E_{\ur\sim s}a_{i}\pp{\ur}^{2}\leq\infty\ \forall i\in\{1,...,k\}\right\} .$
Since each component $a_{i}\pp{\u}$ and $b_{i}\pp{\u}$ is square-integrable
with respect to $s(\u)$ we know (by Cauchy-Schwarz) that $\E_{\ur\sim s}a_{i}(\ur)b_{i}\pp{\ur}\leq\sqrt{\E_{\ur\sim s}a_{i}(\ur)^{2}}\sqrt{\E_{\ur\sim s}b_{i}\pp{\ur}}$
is finite and real. Therefore, we have by linearity of expectation
that

\begin{eqnarray*}
\sum_{i=1}^{k}\E_{\ur\sim s}a_{i}(\ur)b_{i}\pp{\ur} & = & \E_{\ur\sim s}\sum_{i=1}^{k}a_{i}(\ur)b_{i}\pp{\ur}\\
 & = & \E_{\ur\sim s}\b a(\ur)^{\top}\b b\pp{\ur}\\
 & = & \angs{\b a,\b b}_{s}
\end{eqnarray*}
is finite and real for all $\b a,\b b\in V_{s}$. To show that $\pars{V_{s},\angs{\cdot,\cdot}_{s}}$
is a valid inner-product space, it is easy to establish all the necessary
properties of the inner-product, namely for all $\b a,\b b,\b c\in V_{s},$

$\angs{\b a,\b b}=\angs{\b b,\b a}$

$\angs{\theta\b a,\b b}=\theta\angs{\b a,\b b}$ for $\theta\in\R$

$\angs{\b a+\b b,\b c}=\angs{\b a,\b c}+\angs{\b b,\b c}$

$\angs{\b a,\b a}\geq0$

$\angs{\b a,\b a}=0\Leftrightarrow\b a=\b 0.$ (Where $\b 0(\ep)$
is a function that always returns a vector of $k$ zeros.)
\end{proof}
Next, we give three technical Lemmas, which do most of the work of
the proof.

\gradasinnerproduct*
\begin{proof}
Now, we can write $l(\b w)$ as
\[
l(\b w)=\E_{\br z\sim q_{\b w}}f(\br z)=\E_{\ur\sim s}f\pars{\T_{\b w}\pp{\ur}}.
\]

Since $\T_{\w}\pp{\u}=C\u+\b m$ is an affine function, it's easy
to see that both $\frac{d}{dC_{ij}}\T_{\w}\pp{\u}$ and $\frac{d}{d\b m_{i}}\T_{\w}\pp{\u}$
are independent of $\w$. Therefore, the gradient of $l(\b w)$ can
be written as
\begin{eqnarray*}
\nabla_{w_{i}}l(\b w) & = & \nabla_{w_{i}}\E_{\ur\sim s}f\pars{\T_{\b w}\pp{\ur}}\\
 & = & \E_{\ur\sim s}\nabla_{w_{i}}\b t_{\b w}\pp{\ur}^{\top}\nabla f\pars{\T_{\b w}\pp{\ur}}.\\
 & = & \angs{\b a_{i},\nabla f\circ\T_{\b w}}_{s}.
\end{eqnarray*}
\end{proof}
\orthonormality*
\begin{proof}
It is easy to calculate that
\begin{eqnarray*}
\frac{d}{dm_{i}}\T_{\b w}\pp{\u} & = & \b e_{i}\\
\frac{d}{dC_{ij}}\b t_{\b w}\pp{\u} & = & \b e_{i}u_{j},
\end{eqnarray*}
where $\b e_{i}$ is the indicator vector in the $i$-th component.
Therefore, we have that
\begin{alignat*}{1}
 & \E_{\ur\sim s}\pars{\frac{d}{dm_{i}}\T_{\b w}\pp{\ur}}^{\top}\pars{\frac{d}{dm_{j}}\T_{\b w}\pp{\ur}}\\
 & \hspace{10pt}=\E_{\ur\sim s}\b e_{i}^{\top}\b e_{j}\\
 & \hspace{10pt}=I[i=j]\\
 & \E_{\ur\sim s}\pars{\frac{d}{dC_{ij}}\T_{\b w}\pp{\ur}}^{\top}\pars{\frac{d}{dm_{k}}\T_{\b w}\pp{\ur}}\\
 & \hspace{10pt}=\E_{\ur\sim s}\ur_{j}\b e_{i}^{\top}\b e_{k}\\
 & \hspace{10pt}=I[i=k]\ \E_{\ur\sim s}\ur_{j}\\
 & \hspace{10pt}=0\\
 & \hspace{10pt}\text{(since zero mean)}\\
 & \E_{\ur\sim s}\pars{\frac{d}{dC_{ij}}\T_{\b w}\pp{\ur}}^{\top}\pars{\frac{d}{dC_{kl}}\T_{\b w}\pp{\ur}}\\
 & \hspace{10pt}=\E\ur_{j}\ur_{l}\b e_{i}^{\top}\b e_{k}\\
 & \hspace{10pt}=I[i=k]\E_{\ur\sim s}\ur_{j}\ur_{l}\\
 & \hspace{10pt}=I[i=k]I[j=l]\\
 & \hspace{10pt}\text{(since unit variance and zero mean)}
\end{alignat*}

These three identities are equivalent to stating that $\left\{ \b a_{i}\right\} $
are orthonormal in $\angs{\cdot,\cdot}_{s}$.
\end{proof}
\expecteddifferenceoftransforms*
\begin{proof}
Let $\Delta\b m$ and $\Delta S$ denote the difference of the $\b m$
and $S$ parts of $\b w$, respectively. We want to calculate
\begin{align*}
 & \E_{\ur\sim s}\Verts{\T_{\b w}\pp{\ur}-\T_{\b v}\pp{\ur}}_{2}^{2}\\
 & =\E_{\ur\sim s}\Verts{\Delta C\rep+\Delta\b m}_{2}^{2}\\
 & =\E_{\ur\sim s}\pars{\Verts{\pp{\Delta C}\ur}_{2}^{2}+2\Delta\b m^{\top}\Delta C\ur+\Verts{\Delta\b m}_{2}^{2}}.
\end{align*}
It is easy to see that the expectation of the middle term is zero,
and the last is a constant. The expectation of the first term is
\begin{eqnarray*}
\E_{\ur\sim s}\Verts{\pp{\Delta C}\ur}_{2}^{2} & = & \E_{\ur\sim s}\ur^{\top}\pp{\Delta C}^{\top}\pp{\Delta C}\ur\\
 & = & \E_{\ur\sim s}\tr\pars{\ur^{\top}\pp{\Delta C}^{\top}\pp{\Delta C}\ur}\\
 & = & \E_{\ur\sim s}\tr\pars{\pp{\Delta C}^{\top}\pp{\Delta C}\ur\ur^{\top}}\\
 & = & \tr\pars{\pp{\Delta C}^{\top}\pp{\Delta C}}=\Verts{\nabla C}_{F}^{2}.\\
 &  & \text{(since zero mean and unit variance)}
\end{eqnarray*}
Putting this together gives that
\begin{eqnarray*}
\E_{\ur\sim s}\Verts{\T_{\b w}\pp{\ur}-\T_{\b v}\pp{\ur}}_{2}^{2} & = & \Verts{\Delta C}_{F}^{2}+\Verts{\Delta\b m}_{2}^{2}\\
 & = & \Verts{\b w-\b v}_{2}^{2}.
\end{eqnarray*}
\end{proof}

\clearpage{}

\section{Proof for Example Function}

\egfungeneral*
\begin{proof}
For a general distribution, we have that

\begin{eqnarray*}
\E f(\br z) & = & \frac{a}{2}\E\Verts{\zr-\E\bb{\zr}+\E\bb{\zr}-\zo}_{2}^{2}\\
 & = & \frac{a}{2}\E\Bigl(\Verts{\zr-\E\bb{\zr}}_{2}^{2}\\
 &  & +2\pars{\zr-\E\bb{\zr}}^{\top}\pars{\E\bb{\zr}-\zo}+\Verts{\E\bb{\zr}-\zo}_{2}^{2}\Bigr)\\
 & = & \frac{a}{2}\pars{\tr\V\bb{\zr}+\Verts{\E\bb{\zr}-\zo}_{2}^{2}}.
\end{eqnarray*}
Now, if $\qw$ is a location-scale family, we have that $\zr=C\ur+\b m$.
Thus,
\begin{eqnarray*}
\tr\V\bb{\zr} & = & \tr\V\bb{C\ur+\b m}\\
 & = & \tr\V\bb{C\ur}\\
 & = & \tr C\V\bb{\ur}C^{\top}\\
 & = & \tr CC^{\top}\V\bb{\ur}.
\end{eqnarray*}
Meanwhile, we have that
\begin{eqnarray*}
\Verts{\E\bb{\zr}-\zo}_{2}^{2} & = & \Verts{\E\bb{C\ur+\b m}-\zo}_{2}^{2}\\
 & = & \Verts{C\E\bb{\ur}+\b m-\zo}_{2}^{2}
\end{eqnarray*}

Thus,
\[
\E f(\br z)=\frac{a}{2}\pars{\tr C\V\bb{\ur}C^{\top}+\Verts{C\E\bb{\ur}+\b m-\zo}_{2}^{2}}.
\]
The case where $s$ is standardized follows from substituting $\E\bb{\ur}=0$
and $\V\bb{\ur}=I$ and applying the fact that $\tr CC^{\top}=\Verts C_{F}^{2}.$
\end{proof}
\clearpage{}

\section{Proofs for Solution Guarantees\label{sec:Solution-Guarantees-appendix}}

\diagCgradbound*
\begin{proof}
Define $\w'$ to be $\w$ but with $C_{ii}$ set to zero. We will
first show that $\frac{dl\pp{\w'}}{dC_{ii}}=0$. Using the definition
of $\T_{w}$ and the fact that $\frac{d}{dC_{ij}}\T_{w}\pp u=\b e_{i}u_{j}$
gives that
\begin{alignat}{1}
\frac{d}{dC_{ii}}l\pp{\w'} & =\E_{\ur\sim s}\frac{d}{dC_{ii}}f\pp{\T_{\w'}\pp{\ur}}\label{eq:dl_dCii_diagC}\\
 & =\E_{\ur\sim s}\ur_{i}\b e_{i}^{\top}\nabla f\pp{\T_{\w'}\pp{\ur}}\\
 & =0.
\end{alignat}

The final equality above follows from the facts that $\E\ur_{i}=0$
and $\ur_{i}\perp\b e_{i}^{\top}\nabla f\pp{\T_{\w'}\pp{\ur}}$ (Since
$\T_{\w'}\pp{\u}$ ignores $\u_{i}$) so the expectation in \ref{eq:dl_dCii_diagC}
is over two independent random variables, one with mean zero. Now,
by \ref{thm:lsmoothness}, $l$ is also $M$-smooth, thus

\begin{alignat*}{1}
\verts{\frac{dl\pp{\w}}{dC_{ii}}} & =\verts{\frac{dl\pp{\w'}}{dC_{ii}}-\frac{dl\pp{\w}}{dC_{ii}}}\\
 & \leq\Verts{\nabla l\pp{\w'}-\nabla l\pp{\w}}_{2}\\
 & \leq M\Verts{\w'-\w}_{2}\\
 & =M\verts{C_{ii}}.
\end{alignat*}
\end{proof}
\smoothnesssol*
\begin{proof}
First, suppose that $C$ is diagonal. Since $\w$ minimizes $l+h$,
$\nabla l\pp{\w}=-\nabla h\pp{\w}.$ The gradient of $h$ with respect
to $C$ is $-C^{-\top}.$ Thus, $\vv{\frac{dl\pp{\w}}{dC_{ii}}}=\vv{\frac{dh\pp{\w}}{dC_{ii}}}=\frac{1}{\vv{C_{ii}}}.$
But by \ref{lem:diag-C-grad-bound}, $\vv{\frac{dl\pp{\w}}{dC_{ii}}}\leq M\vv{C_{ii}}.$
This establishes the claim for diagonal $C$.

Now, consider some non-diagonal $C$. Let the singular value decomposition
be $C=USV^{\top}.$ Define $f_{U}\pp{\z}=f\pp{U\z}$ and define $l_{U}$
with respect to $f_{U}.$ Let $\w'=\pp{S,U^{\top}\b m}.$ Then, the
following statements are equivalent to $\w\in\argmin_{\w}l\pp{\w}+h\pp{\w}$:

\begin{align*}
 & \pp{C,\b m}\in\argmin_{\pp{C,\b m}}\E_{\ur\sim s}f\pars{C\ur+\b m}-\log\verts C\\
 & \Leftrightarrow\pp{S,\b m}\in\argmin_{\pp{S,\b m}}\E_{\ur\sim s}f\pars{USV^{\top}\ur+\b m}-\log\verts{USV^{\top}}\\
 & \Leftrightarrow\pp{S,\b m}\in\argmin_{\pp{S,\b m}}\E_{\ur\sim s}f\pars{US\ur+\b m}-\log\verts S\\
 & \Leftrightarrow\pp{S,\b m}\in\argmin_{\pp{S,\b m}}\E_{\ur\sim s}f_{U}\pars{S\ur+U^{\top}\b m}-\log\verts S\\
 & \Leftrightarrow\w'\in\argmin_{\w}l_{U}\pp{\w}+h\pp{\w}.
\end{align*}

Thus, $\w$ minimizing $l+h$ is equivalent to $\w'$ minimizing $l_{U}+h.$
Since $f_{U}$ is $M$-smooth and $S$ is diagonal, we know that $S_{ii}\geq\frac{1}{\sqrt{M}}$
for all .
\end{proof}
\clearpage{}

\section{Proofs with Convexity}

\vilocconvex*It's easy to see that $l$ is minimized by $\bar{\b w}=\pp{\zo,\b 0_{d\times d}}.$
By \ref{thm:convexity}, $l\pp{\b w}$ is $c$-strongly convex. Thus
applying a standard inner-product result on strong convexity \citep[Thm. 2.1.9]{Nesterov_2014_Introductorylecturesconvex},

\begin{align*}
c\Verts{\b w-\bar{\b w}}_{2}^{2}\leq & \angs{\nabla l\pp{\b w}-\nabla l\pp{\bar{\b w}},\b w-\bar{\b w}}\\
 & \text{(since \ensuremath{l} is strongly convex)}\\
= & \angs{\nabla l\pp{\b w},\b w-\bar{\b w}}\\
 & \text{(since \ensuremath{\nabla l\pp{\bar{\b w}}=0})}\\
= & -\angs{\nabla h\pp{\b w},\b w-\bar{\b w}}\\
 & \text{(since \ensuremath{\nabla l(\b w)+\nabla h\pp{\b w}=0})}\\
= & \tr\pars{C^{-\top}C}\\
 & \text{(since \ensuremath{\nabla_{C}h\pp{\b w}=-C^{-\top},}\ensuremath{\nabla_{\b m}h\pp{\b w}=0}).}\\
= & \tr I=d.
\end{align*}

The result follows from observing that $\Verts{\b w-\bar{\b w}}_{2}^{2}=\Verts C_{F}^{2}+\Verts{\b m-\zo}_{2}^{2}.$

\clearpage{}

\section{Convergence Considerations\label{sec:Convergence-Proofs}}

\begin{restatable}{lem1}{hsmooth}

Let $\qw=\locscale(\b m,C,s)$ with parameters $\w=\pp{\b m,C}$.
Then, $h\pp{\w}=\E_{\zr\sim\qw}\bracs{\log\qw\pp{\zr}}$ is $M$-smooth
over $\mathcal{W}_{M}$.\label{thm:h-smooth}

\end{restatable}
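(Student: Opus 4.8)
The plan is to use the closed form of the gradient of $h$ derived in \ref{sec:preliminaries}, namely $\nabla h\pp{\w}=\pars{0,-C^{-\top}}$, and reduce $M$-smoothness of $h$ to a statement purely about the map $C\mapsto C^{-1}$ restricted to matrices with $\sigma_{\min}\pp C\geq 1/\sqrt M$. Concretely, take two parameter vectors $\w=\pp{\b m,C}$ and $\b v=\pp{\b n,D}$ in $\mathcal{W}_M$. Since the $\b m$-block of the gradient of $h$ vanishes and the Frobenius norm is invariant under transposition,
\[
\VV{\nabla h\pp{\w}-\nabla h\pp{\b v}}_2 = \VV{C^{-\top}-D^{-\top}}_F = \VV{C^{-1}-D^{-1}}_F.
\]
Note that both $C$ and $D$ are invertible because $\sigma_{\min}\geq 1/\sqrt M>0$ on $\mathcal{W}_M$, so this is well defined.

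Next I would invoke the standard resolvent-type identity $C^{-1}-D^{-1}=C^{-1}\pars{D-C}D^{-1}$, together with the submultiplicative bound $\VV{AXB}_F\leq \VV A_2\,\VV X_F\,\VV B_2$ relating the Frobenius norm to the operator (spectral) norm $\VV{\cdot}_2$. This gives
\[
\VV{C^{-1}-D^{-1}}_F \leq \VV{C^{-1}}_2\,\VV{D-C}_F\,\VV{D^{-1}}_2.
\]
Since $\VV{C^{-1}}_2 = 1/\sigma_{\min}\pp C \leq \sqrt M$ and likewise $\VV{D^{-1}}_2\leq\sqrt M$ by membership in $\mathcal{W}_M$, the right-hand side is at most $M\,\VV{D-C}_F$. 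Finally, $\VV{D-C}_F$ is exactly the norm of the $C$-block of $\w-\b v$, hence $\VV{D-C}_F\leq\VV{\w-\b v}_2$, so $\VV{\nabla h\pp{\w}-\nabla h\pp{\b v}}_2\leq M\VV{\w-\b v}_2$, which is $M$-smoothness over $\mathcal{W}_M$.

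The only step requiring any care is the matrix-norm inequality $\VV{AXB}_F\leq\VV A_2\VV X_F\VV B_2$ (which follows from $\VV{AY}_F\leq\VV A_2\VV Y_F$ applied twice, using $\VV{XB}_F=\VV{B^\top X^\top}_F$); everything else is bookkeeping. It is also worth stressing that the restriction to $\mathcal{W}_M$ is essential here: as $\sigma_{\min}\pp C\to 0$ the factor $\VV{C^{-1}}_2$ blows up, which is precisely why $h$ is not smooth on all of parameter space. This lemma is then combined with \ref{thm:lsmoothness} and \ref{thm:smoothness-sol} and the triangle inequality to obtain \ref{thm:projectedsummary}.
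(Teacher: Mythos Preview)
Your proposal is correct and follows essentially the same argument as the paper: both use the resolvent identity for $C^{-1}-D^{-1}$, the Frobenius/spectral norm inequality $\VV{AXB}_F\leq\VV A_2\VV X_F\VV B_2$, and the bound $\VV{C^{-1}}_2=1/\sigma_{\min}(C)\leq\sqrt M$ on $\mathcal{W}_M$. Your write-up is in fact slightly more careful than the paper's in making explicit the transpose-invariance of the Frobenius norm and the final inequality $\VV{D-C}_F\leq\VV{\w-\b v}_2$.
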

\begin{proof}
Take $\w=\pp{C,\b m}\in\mathcal{W}_{M}$ and $\b v=\pp{B,\b n}\in\mathcal{W}_{M}.$
We write $h\pp C$ since $h\pp{\w}$ is independent of $\b m$. The
gradient is $\nabla h\pp C=C^{-T}$. Now, use that $\Verts{AX}_{F}\leq\Verts A_{2}\Verts X_{F}$
to get that
\begin{alignat*}{1}
\Verts{\nabla h(B)-\nabla h(C)}_{F} & =\Verts{B^{-1}-C^{-1}}_{F}\\
 & =\Verts{B^{-1}\pp{B-C}C^{-1}}_{F}\\
 & \leq\Verts{B^{-1}}_{2}\Verts{C^{-1}}_{2}\Verts{B-C}_{F}.
\end{alignat*}
But, since $\w\in\mathcal{W}_{M},$ $\Verts{C^{-1}}_{2}=\frac{1}{\sigma_{\min}\pp C}\leq\sqrt{M}$
and similarly for $C$. This establishes that $\Verts{\nabla h(B)-\nabla h(C)}_{F}\leq M\Verts{B-C}_{F},$
equivalent to the result.
\end{proof}
\begin{restatable}{thm1}{gaussprox}

\label{thm:gaussprox}Suppose $h\pp{\b w}$ corresponds to a location-scale
family with a standardized $s$, and $\b w=\pp{\b m,C}$.
\begin{itemize}
\item If $C$ has singular value decomposition $C=USV^{\top},$ then $\mathrm{proj}_{\mathcal{W}_{M}}\pp{\b w}=\pp{\b m,UTV^{\top}},$
where $T$ is a diagonal matrix with $T_{ii}=\max\pars{S_{ii},\frac{1}{\sqrt{M}}}.$
\item If $C$ is triangular with a positive diagonal, then $\prox_{\gamma}\pp{\b w}=\pp{\b m,C+\Delta C},$
where $\Delta C$ is a diagonal matrix with $\Delta C_{ii}=\frac{1}{2}\pars{\sqrt{C_{ii}^{2}+4\gamma}-C_{ii}}.$
\end{itemize}
\end{restatable}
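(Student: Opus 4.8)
The plan is to handle the two cases separately, noting first that in both the $\b m$-component decouples from everything: $\VV{\w-\b v}_2^2=\VV{\b m-\b n}_2^2+\Verts{C-B}_F^2$, the set $\mathcal{W}_M$ constrains only the singular values of the matrix part, and $h\pp{\w}=-\mathrm{Entropy}\bb s-\log\verts{\det C}$ does not depend on $\b m$ either. Hence in both problems the optimal $\b n$ equals $\b m$, and it remains to optimize over the matrix part alone.

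For the projection, the task is to minimize $\Verts{C-B}_F^2$ over all $B$ with $\sigma_{\min}\pp B\geq 1/\sqrt M$. The key fact is that for any two matrices $\Verts{C-B}_F^2\geq\sum_i\pp{\sigma_i(C)-\sigma_i(B)}^2$, where $\sigma_1\geq\sigma_2\geq\cdots$ denote ordered singular values; this follows from von Neumann's trace inequality $\tr\pp{C^\top B}\leq\sum_i\sigma_i(C)\sigma_i(B)$ together with $\Verts{C}_F^2=\sum_i\sigma_i(C)^2$. Since $B\in\mathcal{W}_M$ forces $\sigma_i(B)\geq 1/\sqrt M$ for every $i$, and $\min_{t\geq 1/\sqrt M}\pp{\sigma_i(C)-t}^2=\pp{\sigma_i(C)-\max\pp{\sigma_i(C),1/\sqrt M}}^2$, the right side is at least $\sum_i\pp{S_{ii}-T_{ii}}^2$ with $T_{ii}=\max\pp{S_{ii},1/\sqrt M}$. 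I would then check that $B=UTV^\top$ attains this bound: $T$ is still sorted in decreasing order because $\max\pp{\cdot,1/\sqrt M}$ is monotone, so von Neumann's inequality is tight for $B=UTV^\top$, each scalar sits at its constrained optimum, and $\sigma_{\min}\pp{UTV^\top}=\min_i T_{ii}\geq 1/\sqrt M$ so $B\in\mathcal{W}_M$. Hence $\pp{\b m,UTV^\top}$ is a minimizer (and one can check it is the unique one when the singular values of $C$ are distinct).

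For the proximal operator, the matrix parameter $C$ ranges over triangular matrices with positive diagonal, so $\log\verts{\det C}=\sum_i\log C_{ii}$; hence, up to the constant $-\mathrm{Entropy}\bb s$, the negentropy $h$ depends only on the diagonal of $C$. Writing $\b v=\pp{\b n,B}$ with $B$ triangular and positive-diagonal, the objective $h\pp{\b v}+\tfrac{1}{2\gamma}\VV{\b v-\w}_2^2$ separates across coordinates: it is minimized at $\b n=\b m$; for each strictly-off-diagonal entry it is $\tfrac{1}{2\gamma}\pp{B_{ij}-C_{ij}}^2$, minimized at $B_{ij}=C_{ij}$; and for each diagonal entry it is $-\log B_{ii}+\tfrac{1}{2\gamma}\pp{B_{ii}-C_{ii}}^2$, which is strictly convex and coercive on $B_{ii}>0$. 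Setting its derivative to zero gives the quadratic $B_{ii}^2-C_{ii}B_{ii}-\gamma=0$, whose positive root is $B_{ii}=\tfrac12\pp{C_{ii}+\sqrt{C_{ii}^2+4\gamma}}$; strict convexity makes this the unique minimizer. Thus $B=C+\Delta C$ with $\Delta C$ diagonal and $\Delta C_{ii}=\tfrac12\pp{\sqrt{C_{ii}^2+4\gamma}-C_{ii}}$, as claimed.

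The routine parts are the $\b m$-decoupling and the scalar optimizations; the main obstacle is the projection step — specifically, proving $\Verts{C-B}_F^2\geq\sum_i\pp{\sigma_i(C)-\sigma_i(B)}^2$ for general (rectangular, non-symmetric) $C,B$ via von Neumann's inequality, and confirming that clipping the singular values at $1/\sqrt M$ keeps them in decreasing order so that the lower bound is genuinely achieved by $UTV^\top$.
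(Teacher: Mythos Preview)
Your proposal is correct and follows essentially the same route as the paper: both parts decouple the $\b m$-component, the projection argument uses von Neumann's trace inequality to lower-bound $\Verts{C-B}_F^2$ by $\sum_i(\sigma_i(C)-\sigma_i(B))^2$ and then verifies that $UTV^\top$ attains the bound, and the proximal argument reduces to the scalar problem $\argmin_{x>0}-\log x+\tfrac{1}{2\gamma}(x-y)^2$ via the separability of $\log\verts{\det C}$ for triangular $C$. Your extra care about the clipped singular values remaining sorted (so that von Neumann is tight for $UTV^\top$) is a detail the paper glosses over but is indeed needed.
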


\begin{proof}
\textbf{(Proximal Operator)} We know that $h\pp{\b w}=\mathrm{Const.}-\log\verts C.$
Write $\b w=\pp{\b m,C}$ and $\b v=\pp{\b n,B}.$ Then, we can write
the proximal operator as

\begin{alignat*}{1}
\prox_{\lambda}\pp{\b w} & =\argmin_{\b v}-\log\verts B+\frac{1}{2\lambda}\Verts{\b v-\b w}_{2}^{2}
\end{alignat*}
Now, assuming that $C$ is triangular, the solution will leave all
entries of $\b w$ other than the diagonal entries of $C$ unchanged.
Then, we will have that $\log\verts B=\sum_{i=1}^{d}\log B_{ii}.$
Since
\begin{eqnarray*}
\argmin_{x>0}-\log x+\frac{1}{2\lambda}\pp{x-y}^{2} & = & \frac{y+\sqrt{y^{2}+4\lambda}}{2}
\end{eqnarray*}

The solution is to set
\begin{alignat*}{1}
B_{ii} & =\frac{1}{2}\pars{C_{ii}+\sqrt{C_{ii}^{2}+4\lambda}}\\
 & =C_{ii}+\frac{1}{2}\pars{\sqrt{C_{ii}^{2}+4\lambda}-C_{ii}}.
\end{alignat*}

\textbf{(Projection Operator)} Von-Neumann's trace inequality states
that $\verts{\tr A^{\top}B}\leq\sum_{i}\sigma_{i}\pp A\sigma_{i}\pp B.$
Consider any candidate solution $B$ with SVD $QTP^{\top}.$ Then,
we can write that
\begin{eqnarray*}
\Verts{B-C}_{F}^{2} & = & \tr\pars{B-C}^{\top}\pp{B-C}\\
 & = & \Verts B_{F}^{2}-2\tr\pp{B^{\top}C}+\Verts C_{F}^{2}\\
 & \geq & \Verts T_{F}^{2}-2\sum_{i}T_{ii}S_{ii}+\sum_{i}S_{ii}^{2}\\
 & = & \sum_{i}\pars{T_{ii}-S_{ii}}^{2}.
\end{eqnarray*}
We can minimize this lower bound by choosing $T_{ii}=\max\pp{1/\sqrt{M},S_{ii}},$
with a corresponding value of $\sum_{i}\max\pp{0,1/\sqrt{M}-S_{ii}}^{2}.$
Thus any valid solution will have $\Verts{B-C}_{F}^{2}$ at least
this large.

However, suppose we choose $B=UT_{ii}V^{\top}$ with $T_{ii}$ as
above. Then,
\[
\Verts{B-C}_{F}^{2}=\Verts{UTV^{\top}-USV^{\top}}_{F}^{2}=\sum_{i}\pp{T_{ii}-S_{ii}}^{2},
\]
so this value $B$ is optimal.
\end{proof}
\clearpage{}

\begin{thm}
~
\end{thm}

\[
\E_{p(\r x)}\bb{\r x}=\argmin_{\mu}\ \E_{p\pp{\r x}}\Verts{\mu-\r x}_{2}^{2}
\]

\end{document}